\newtheorem{theorem}{Theorem}
\newenvironment{hproof}{%
  \proof}{\endproof}
\newcommand{\xhdr}[1]{\vspace{0em}\noindent{{\bf #1.}}}
\newcommand{\ie}{\textit{i.e., }}
\title{Analyzing Memorization in Large Language Models \\through the Lens of Model Attribution}
\author{
Tarun Ram Menta\textsuperscript{*} \\ Adobe MDSR \\ \texttt{menta.tarun@gmail.com}
    \And  
Susmit Agrawal\textsuperscript{*} \\ IIT Hyderabad \\ \texttt{ai22mtech12002@iith.ac.in}
    \AND
Chirag Agarwal \\ University of Virginia \\ \texttt{chiragagarwal@virginia.edu}
\\
\small{\textsuperscript{*} Equal Contribution}
}
\begin{document}
\maketitle
\begin{abstract}
\looseness=-1 Large Language Models (LLMs) are prevalent in modern applications but often memorize training data, leading to privacy breaches and copyright issues. Existing research has mainly focused on post-hoc analyses—such as extracting memorized content or developing memorization metrics—without exploring the underlying architectural factors that contribute to memorization. In this work, we investigate memorization from an architectural lens by analyzing how attention modules at different layers impact its memorization and generalization performance. Using attribution techniques, we systematically intervene in the LLM's architecture by bypassing attention modules at specific blocks while keeping other components like layer normalization and MLP transformations intact. We provide theorems analyzing our intervention mechanism from a mathematical view, bounding the difference in layer outputs with and without our attributions. Our theoretical and empirical analyses reveal that attention modules in deeper transformer blocks are primarily responsible for memorization, whereas earlier blocks are crucial for the model's generalization and reasoning capabilities. We validate our findings through comprehensive experiments on different LLM families (Pythia and GPT-Neo) and five benchmark datasets. Our insights offer a practical approach to mitigate memorization in LLMs while preserving their performance, contributing to safer and more ethical deployment in real-world applications. The code and data for our work is available at \href{https://github.com/AikyamLab/llm-memorization}{https://github.com/AikyamLab/llm-memorization}. 
\end{abstract}

\section{Introduction}
\label{sec:intro}

Large Language Models (LLMs) have become ubiquitous in modern applications, powering everything from conversational agents to advanced data analysis tools. However, these models often memorize parts of their training data, causing serious concerns such as privacy and copyright infringements \citep{Carlini2020ExtractingTD, huang-etal-2022-large, lee2023plagarize, ishihara2024quantifyingmemorizationdetectingtraining}, hindering its employment in high-stakes applications.

\noindent Prior works approach memorization from a post-hoc perspective, where they predominantly focus on methods to extract memorized samples from trained LLMs \citep{Carlini2020ExtractingTD, huang-etal-2022-large} or propose metrics to quantify memorization \citep{Carlini2022QuantifyingMA, ishihara2024quantifyingmemorizationdetectingtraining}. While some works also explore machine unlearning techniques to remove specific memorized content after training \citep{huang2024demystifyingverbatimmemorizationlarge}, they do not delve into the fundamental model mechanisms that contribute to memorization. There has been little to no exploration of understanding memorization from a fundamental architectural viewpoint. In particular, systematically analyzing which layers and modules within transformer architectures contribute to memorization. We argue that identifying these components is essential for developing strategies to mitigate memorization without compromising the model's generalization capabilities.

\xhdr{Present work} In this work, our goal is to understand the trade-off between reasoning and memorization within LLMs by leveraging model attribution techniques. In particular, we perform systematic architectural interventions by bypassing (or ``short-circuiting'') the attention modules at targeted blocks of the transformer while preserving other modules such as layer normalization and MLP operations, allowing us to isolate the impact of specific attention modules on both memorization and generalization. We support our experimental findings with theoretical analysis of the impact of bypassing attention modules at different blocks. Our theorems introduce bounds on the difference in model outputs with standard attention and short-circuited attention. Our theoretical insights and empirical results indicate that short-circuiting attention modules in earlier blocks leads to significant differences in the output vectors compared to the original model and these differences disrupt the model's internal representations, directly hindering its performance and affecting generalization capabilities. In contrast, bypassing attention modules in deeper blocks results in smaller differences in the output vectors, where these changes are sufficient to prevent the model from generating memorized content verbatim and do not significantly impair its ability to generalize from the data.

\noindent Our empirical analysis across two different model families—Pythia and GPT-Neo—and four benchmark datasets align with the theoretical findings and demonstrate that intervening in the attention modules of deeper layers mitigates memorization and preserves their generalization capabilities. Our contributions can be summarized as follows:
\begin{itemize}
    \vspace{-10pt}
    \item We introduce a method to bypass attention modules in transformers while maintaining other transformations like layer normalization and MLP operations. This targeted intervention allows for precise analysis of different components within the model.
    \vspace{-10pt}
    \item Our theoretical framework explains how differences in output representations caused by bypassing attention modules at various depths impact memorization and generalization and provide evidence that memorization in LLMs primarily stems from the attention modules in deeper transformer layers.
    \vspace{-10pt}
    \item We offer a practical approach to enhance the ethical deployment of LLMs by demonstrating that it is possible to mitigate memorization while preserving generalization capabilities by bypassing attention modules in deeper blocks.
\end{itemize}

\section{Related Work}
\label{sec:related}
\xhdr{Memorization and Extraction of Training Data}
\looseness=-1 Works on image classifiers argue that a certain degree of memorization is inevitable during generalization~\citep{acloserlookmemorization, feldman2020doeslearningrequirememorization}.
This phenomenon is even more pronounced in LLMs with huge parameter counts. LLMs have been shown to memorize and reproduce verbatim samples from their training corpus~\citep{Carlini2020ExtractingTD, nasr2023scalable}. Various works have studied this phenomenon, from the angle of model scale~\citep{ishihara2024quantifyingmemorizationdetectingtraining, biderman2024emergent}, 
and data deduplication~\citep{lee2021deduplicating}. Recent work~\citep{prashanth2024recite} has studied memorization in LLMs with a deeper emphasis on semantics and other factors contributing to memorization. However, very few works explore the architectural aspect of memorization in large transformers.~\citep{stoehr2024localizing} shows that while memorization cannot be localized to any part of the transformer, predictable gradient patterns are observed. In our work, we show via an attention-bypassing technique that certain blocks of the transformer can be identified as pertinent for memorization, while not boosting the generalization capabilities of the model. 

\xhdr{Interpretability and Pruning in LLMs}
Many attempts have been made to study the inner workings of transformers~\citep{circuits}, attribute certain behaviors to individual components~\citep{sae}, and understand the mechanisms of reasoning~\citep{ye2024reasoning} and knowledge storage~\citep{allen2023physics, allen2023physics2, allen2024physics3} in LLMs. Pruning of LLMs has also emerged as a key area of interest, with multiple works~\citep{structuralpruning, sun2023simplepruning, siddiqui2024deeperpruning} showing that LLMs can be pruned in both width and depth without significant performance loss. To our knowledge, our work is the first that attempts to extend such an investigation to memorization in LLMs. We attempt to attribute memorization to individual model components and study if generalization and memorization can be disentangled.

\vspace{-7pt}
\section{Methodology}
\vspace{-7pt}
\label{sec:method}
\looseness=-1 Our goal is to study the phenomenon of memorization in LLMs from the perspective of model attribution. Previous literature~\citep{kim2024shortenedllama, structuralpruning, sun2023simplepruning} has shown the efficacy of pruning both width and depth of transformers while maintaining downstream performance. %
This motivates us to study the effect of similar interventions on the memorization characteristics of LLMs. In particular, we wish to isolate the contribution of \textbf{individual} model components to memorization. %
The attention mechanism is fundamental to transformer models, enabling them to understand context and manage long-range dependencies~\citep{circuits}. By dynamically weighing the significance of different words relative to each other, transformers capture nuanced meanings and relationships within data. For these reasons, we posit that the major contribution of memorization in LLMs stems from attention, and focus our analysis around it. We develop a mechanism to ``short-circuit'' the contribution of the attention mechanism in various blocks of a trained model during, and use it to conduct our experiments.

\subsection{Preliminaries}
Here, we formally define memorization and provide a primer to attention mechanism.

\xhdr{Definition of Memorization} Memorization can be defined in various ways within language modeling. Memorization does play a crucial role in language models since the training objective focuses on maximizing the overall likelihood of the training dataset. For example, memorization of well known facts is a desirable characteristic of LLMs. On the other hand, 
LLMs have been shown~\cite{Carlini2020ExtractingTD} to spuriously memorize personal or private data like names, phone numbers, and email
addresses; IRC conversations; code; and 128-bit UUIDs. 
Given a language model $f_\theta$, trained on a set of documents $\mathcal{D}$, we follow the definition of \textit{extractable memorization} set out in previous works~\cite{Carlini2022QuantifyingMA, prashanth2024recite}. A string having a prefix $p$ of tokenized length $l_p$ and a suffix $s$ of tokenized length $l_s$ such that the concatenated sequence $(p||s) \in \mathcal{D}$ is a substring of some document in the training corpus is said to be extractable if the suffix can be recovered by prompting the LLM with the prefix under greedy sampling (GS), \ie $\text{GS}(f_\theta, p) = s$.
In this work, we only explore greedy sampling. Extension to other more directed extraction attacks~\cite{nasr2023scalable, yu2023bag} is left as future work.

\xhdr{Primer on Attention Mechanism}
\looseness=-1 In this section, we setup the notation for the remainder of this work, and provide a brief refresher on the architecture and inner working of a transformer model. We denote a decoder-only transformer LLM as $f_\theta$. $f_\theta$ consists of $L$ transformer layers/blocks in a sequential manner, where the computation of the $l^\text{th}$ block is as follows
\begin{equation}
    \label{eq:block1}
    \mathbf{X}' = \mathbf{X} + \text{MultiHeadSelfAttention}_l(\mathbf{X}) 
\end{equation}
\begin{equation}
    \label{eq:block2}
    \mathbf{X}'' = \mathbf{X}' + \text{FFN}_l(\mathbf{X}')
\end{equation}
The FFN layer is a simple two-layer MLP with a non-linearity. The main focus of our work is on the multi-head self-attention mechanism. The computation of a single self-attention operation is as follows:
\begin{equation*}
\mathbf{Q} = \mathbf{X} \mathbf{W_Q}; \quad \mathbf{K} = \mathbf{X} \mathbf{W_K}; \quad \mathbf{V} = \mathbf{X} \mathbf{W_V}
\end{equation*}
\begin{equation*}
\text{Attention}(\mathbf{Q}, \mathbf{K}, \mathbf{V}) = \text{softmax}\left(\frac{\mathbf{Q}\mathbf{K}^T}{\sqrt{d_k}}\right)\mathbf{V}
\end{equation*}

In the transformer model, this attention computation is repeated across $H$ heads to increase expressivity. Each attention head is calculated by applying the attention function to the transformed input using different sets of weight matrices.
\begin{equation*}
\text{head}_i = \text{Attention}(\mathbf{X} \mathbf{W_{Q_i}}, \mathbf{X} \mathbf{W_{K_i}}, \mathbf{X} \mathbf{W_{V_i}}); 
\end{equation*}
Finally, the outputs from all heads are concatenated and transformed with a linear projection to produce the multi-head attention (MHA) output.
\begin{equation*}
\text{MHA}(\mathbf{X}) = \text{Concat}(\text{head}_1, \ldots, \text{head}_h) \mathbf{W_O}
\end{equation*}

\subsection{Attention Short-Circuiting}
We now describe our approach for isolating the contribution of various attention blocks. The key component of the attention mechanism is the attention weights - computed as an inner product between query and key vectors, \ie
\[
 \text{AttentionWeight} = QK^T   
\]

This enables the model to effectively model both short and long-range dependencies and identify patterns in the input. Pruning methods explore removing the last $K$ layers of the LLM, or sparsifying model weights, while \textit{maintaining} the original output distribution. However, we wish to disable \textit{intermediate} attention modules, and inspect its effect on memorization and general capabilities of the model. While a naive approach would be to simply replace the multi-head self-attention operation in Eq.~\ref{eq:block1} with \textbf{zero}, this has an undesirable effect of changing the distribution of $\mathbf{X}'$ and eventually leading to model collapse, \ie model generating gibberish text. Instead, we systematically ``short-circuit'' certain attention layers of the model by replacing this attention weight with the identity matrix \textit{in all heads} of the layer \ie 
\[
\textsc{ShortCircuitAttention}(Q, K, V) = \mathbf{I}\cdot \mathbf{V}
\]
where $\mathbf{I}$ is the identity matrix, effectively removing the contribution of that attention layer while not destroying the distribution of $\mathbf{X}'$. We perform this `short-circuit' operation to all attention heads of a layer when applying to any block in the model.

\subsection{Theoretical Analysis}

In this section, we derive the impact of bypassing the attention module of a given block/layer on subsequent layers in the transformer model. We use the terms block and layer interchangeably. Here, we are specifically interested in the representation of the last token of a given sequence, as the next token is predicted based on its value. First, we derive the effect of short-circuiting attention at a single layer and study the difference in the output representations. We then use the resulting expression to derive the effect of short-circuiting the attention block at a given layer $L$ vs. a later layer $L+1$. Our results indicate that the difference in output representations is amplified as more layers are stacked on top of the short-circuited block, resulting in heavy degradation in generation abilities.

\xhdr{Notations} Let $\mathbf{V}^{L-1}$ denote the output representation at layer \( L-1 \) of the last token in a given sequence $\mathbf{X} = \{x_1, x_2, \dots, x_n\}$, where $n$ is the length of the input sequence and $\mathbf{V}^{L-1} = \{v_1^{L-1}, v_2^{L-1}, \dots, v_n^{L-1}\}$ serves as an input to layer \( L \). Moving forward, we denote $v^l = v_n^l$ as the representation of the last token for a given layer $l$ for mathematical brevity.
The output of layer \( L \) with standard attention is represented as \( v^{L} \), while \( v^{L}_{\text{IA}} \) refers to the output of layer  $L$ with identity attention. The difference at layer $L$ is denoted by \( D^{L} = v^{L}_{\text{IA}} - v^{L} \). Similarly, \( v^{L+1} \) represents the output of layer $L+1$ with standard attention. Now, the outputs at layer $L+1$ with attention replaced at layer $L$ and $L+1$ are given by \( v^{L+1}_{\text{IA, L}} \) and \( v^{L+1}_{\text{IA, L+1}} \), respectively. The output difference at layer $L+1$ when attention is replaced at layer $L$ is denoted by \( D^{L+1}_{\text{IA,L}} = v^{L+1}_{\text{IA,L}} - v^{L+1} \), and when attention is replaced at layer $L+1$, it is represented by \( D^{L+1}_{\text{IA,L+1}} = v^{L+1}_{\text{IA,L+1}} - v^{L+1} \). The attention weights at layer $L$ are denoted by \( \alpha_i^{L} \), satisfying \( \sum_i \alpha_i^{L} = 1 \), while \( \mathbf{W}^{L} \) is the weight matrix of a linear approximation of the feed-forward network (FFN) at layer \( L \), and \( \epsilon^{L} \) represents the approximation error at layer \( L \). Such linear approximations can be achieved using methods such as Neural Tangent Kernels or piecewise approximations.

\begin{theorem}[Bounding the Difference Between Identity Attention and Standard Attention for a Transformer with a single block]
    \looseness=-1 The normed difference between the output vectors obtained by using standard attention and short-circuited attention for a single transformer block is upper bounded by:
    \[
    \| v_{IA} - v \| \leq (1 + \| \mathbf{W} \|) M (1 - \alpha_l) + \| \epsilon_{IA} - \epsilon \|,
    \]
    where \( M = \max_{i \ne l} \| x_n - x_i \| \), $\mathbf{W}$ is the weight matrix of the FFN layer, and $\epsilon$ is the error in the linear approximation of the activation function.
    \label{thm:one}
\end{theorem}
\begin{hproof}
    The proof relies on the insight that the value vectors undergo simple MLP-based transfomations when passing through the transformer layers, with the attention module replacing each value vector with a convex combination of all input value vectors. Our intervention replaces the convex combination of the last token of the sequence with the identity function. Note that this is a special case of an arbitrary convex combination, where all weight is put on the coefficient of the final token.
    The proof then follows through when expanding the expressions for the output value vectors with and without short-circuiting the attention, and adding the residual components. The proof has been provided in Appendix~\ref{app:proof}. The theorem gives an upper bound of the difference between the outputs of a standard transformer block and a transformer block with short-circuited attention.
\end{hproof}

\noindent Next, we use the above bound to derive how this difference propagates if another transformer block is added on top of the short-circuited block and compare it with short-circuiting the attention of the later block alone.

\begin{theorem}[Impact of Replacing Attention in any two consecutive Transformer Layers]

For any two adjacent layers $L$ and $L+1$, the difference in the output representations can be bounded as:

\noindent When replacing attention at layer \( L \):\\
\(
\| D^{L+1}_{IA,L} \| \leq (1 + \| W^{L+1} \|)(1 + \alpha_l^{L+1}) \| D^{L} \|,
\)
where \( \| D^{L} \| \leq (1 + \| W^{L} \|) M^{L} (1 - \alpha_l^{L}) + \| \epsilon^{L}_{\text{IA}} - \epsilon^{L} \| \).

\noindent When replacing attention at layer \( L+1 \):\\
\(
\| D^{L+1}_{\text{IA,L+1}} \| \leq (1 + \| W^{L+1} \|) M^{L+1} (1 - \alpha_l^{L+1}) + \| \epsilon^{L+1}_{\text{IA}} - \epsilon^{L+1} \|
\)
\end{theorem}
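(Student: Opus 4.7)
The second inequality of the theorem is essentially a direct restatement of Theorem~\ref{thm:one} applied to layer $L+1$ in isolation, so my plan concentrates almost entirely on the first inequality, which describes how the layer-$L$ perturbation propagates through a \emph{standard} layer $L+1$.

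My starting point is the observation (which I would first verify by re-examining the intervention behind Theorem~\ref{thm:one}) that short-circuiting the attention at layer $L$ only alters the \emph{last} token's attention output, replacing its convex combination over tokens by its own value vector, while leaving the other tokens' rows of layer $L$ unchanged. Thus the input to layer $L+1$ agrees with the unperturbed input at every position except position $l=n$, where it differs by exactly $D^{L}$.

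Next, I would write out $v^{L+1}$ and $v^{L+1}_{\mathrm{IA},L}$ using the block recursion $\mathbf{X}' = \mathbf{X} + \mathrm{MHSA}(\mathbf{X})$ and $\mathbf{X}'' = \mathbf{X}' + \mathrm{FFN}(\mathbf{X}')$, and subtract. Three contributions appear in the difference at the last position: the residual, which transmits $D^{L}$ with coefficient $1$; the standard attention at layer $L+1$, whose contribution simplifies to $\alpha_l^{L+1} D^{L}$ because only the $l$-th value vector of the sequence has changed (absorbing the operator norm of $W_V$ into the subsequent bookkeeping); and the FFN, which I replace by its linear approximation $\mathbf{W}^{L+1}$ plus a residual error term. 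Summing the first two pieces yields a pre-FFN difference of norm at most $(1+\alpha_l^{L+1})\|D^{L}\|$, and passing this through the linearized FFN (which contributes $\|\mathbf{W}^{L+1}\|$ times its input) and adding the FFN residual copy then gives the desired factor $(1+\|\mathbf{W}^{L+1}\|)(1+\alpha_l^{L+1})$ via the triangle inequality. Plugging in the bound on $\|D^{L}\|$ from Theorem~\ref{thm:one} closes out the first claim.

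The step I expect to be the main obstacle is the clean justification that the perturbation at the input of layer $L+1$ is supported only on the last position, since this is precisely what collapses a generic bound of the form $\sum_i \alpha_i^{L+1}\|D^L_i\|$ to the single term $\alpha_l^{L+1}\|D^L\|$. A secondary subtlety is bookkeeping for the linearization errors: those from layer $L$ are already carried inside $\|D^L\|$ via Theorem~\ref{thm:one}, so I would be careful not to double-count them while passing through the attention and FFN of layer $L+1$, and follow the same convention used in the statement of the second inequality.
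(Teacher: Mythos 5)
Your proposal follows essentially the same route as the paper's proof in Appendix~\ref{app:proof}: the second bound is Theorem~\ref{thm:one} applied verbatim at layer $L+1$, and the first bound is obtained by propagating $D^{L}$ through the residual connection (contributing $D^{L}$), the layer-$(L+1)$ attention with frozen weights (contributing $\alpha_n^{L+1}D^{L}$), and the linearized FFN (contributing the multiplicative factor $1+\|\mathbf{W}^{L+1}\|$), giving $\delta z^{L+1} = (1+\alpha_n^{L+1})D^{L}$ and hence the stated bound via the triangle inequality. The concern you single out as the main obstacle --- that the layer-$L$ perturbation is supported only on the last position --- is in fact an unstated simplification in the paper's own argument: the intervention replaces the attention weights by the identity for \emph{every} row, so all positions' layer-$L$ outputs change and the layer-$(L+1)$ attention weights themselves would shift, yet the proof (like yours) tracks only the last-token change and keeps $\alpha_i^{L+1}$ fixed when writing $z^{L+1}_{\mathrm{IA},L} = z^{L+1} + \alpha_n^{L+1}D^{L} + D^{L}$. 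Your instinct about error-term bookkeeping is also on point: the paper's derivation actually carries an extra additive $\|\epsilon^{L+1}_{\mathrm{IA},L}-\epsilon^{L+1}\|$ in the final bound for $\|D^{L+1}_{\mathrm{IA},L}\|$ that has been dropped from the theorem statement, so following the convention of the second inequality (which retains its $\epsilon$ term) is the more consistent choice.
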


\begin{hproof}
    This proof uses the result from Theorem~\ref{thm:one} to compute the expression of the output of layer $L$ with and without short-circuiting the attention module. Next, we derive the output of layer $L+1$ based on the output of the first and compute the difference in output vectors when replacing the attention in the earlier layer vs. the later layer. The bounds indicate that replacing attention at an earlier layer \( L \) introduces differences that propagate and potentially amplify through subsequent layers due to the operations of the FFNs and residual connections. The amplification is influenced by the operator norms of the weight matrices and the attention weights. The theoretical bounds suggest that replacing attention at a lower layer (\( L \)) can lead to larger differences at layer \( L+1 \), while replacing attention at layer \( L+1 \) introduces more localized differences.  The complete proof has been provided in Appendix~\ref{app:proof}.%
\end{hproof}

\vspace{-15pt}
\section{Experiments}
\vspace{-5pt}
\label{sec:experiments}

\begin{figure*}[h!]
    \centering
    \begin{subfigure}[b]{0.83\linewidth}
        \centering
        \includegraphics[width=\linewidth]{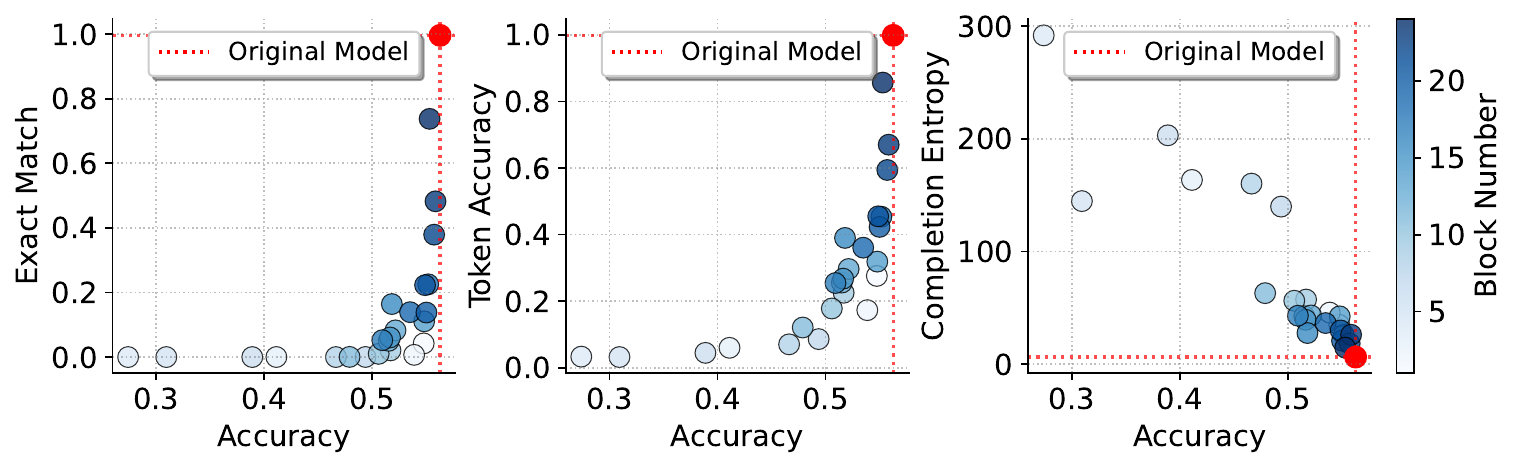}
        \caption{ARC - Easy}
        \label{fig:subfig1}
    \end{subfigure}
    \vspace{-0.05in}
    \begin{subfigure}[b]{0.83\linewidth}
        \centering
        \includegraphics[width=\linewidth]{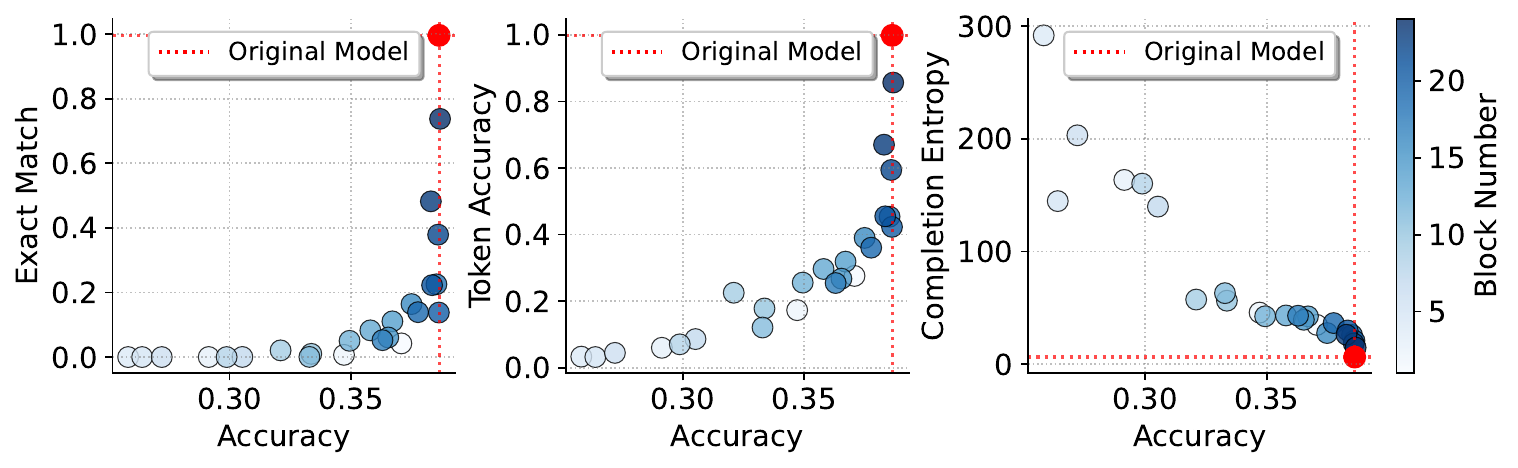}
        \caption{Hellaswag}
        \label{fig:subfig2}
    \end{subfigure}
    \vspace{-0.05in}
    \begin{subfigure}[b]{0.83\linewidth}
        \centering
        \includegraphics[width=\linewidth]{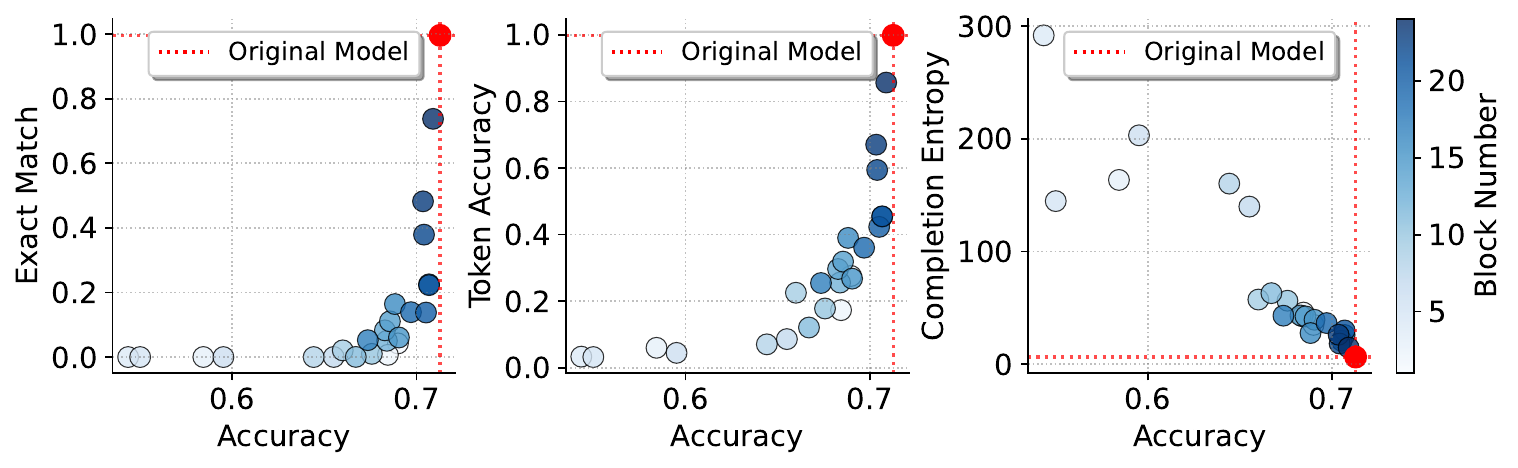}
        \caption{PIQA}
        \label{fig:subfig3}
    \end{subfigure}
    \vspace{-0.05in}
    \begin{subfigure}[b]{0.83\linewidth}
        \centering
        \includegraphics[width=\linewidth]{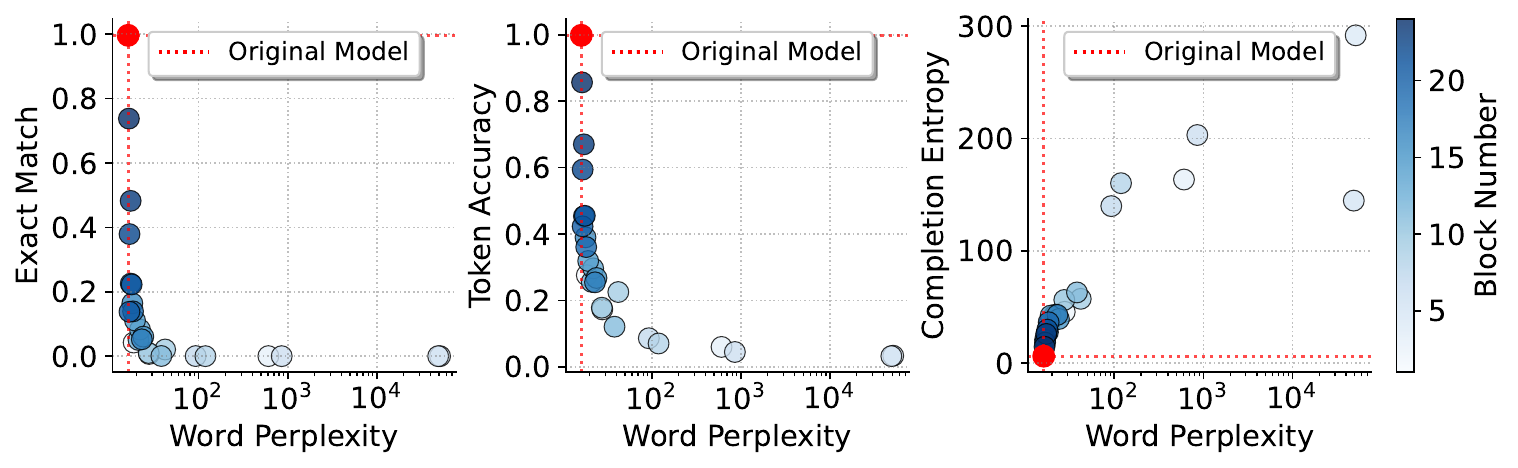}
        \caption{Wikitext}
        \label{fig:subfig4}
    \end{subfigure}

    \caption{Memorization vs. Downstream Performance for GPTNeo-1.3B with short-circuiting applied to the attention mechanism of each block independently. When applied to later blocks of the model, attention short-circuiting consistently yields lower memorization scores, while maintaining downstream benchmark scores. Results on all models and datasets in Appendix~\ref{app:results}.}
    \label{fig:memorization_downstream}
    \vspace{-15pt}
\end{figure*}
Using our model attribution method described in Sec.~\ref{sec:method}, we now study the effect of `short-circuiting' the attention mechanism in various blocks of an LLM on its memorization of the training dataset. 

\subsection{Experimental Setup}

\xhdr{Models} We consider six LLMs of varying scales across two architectures -- GPTNeo\{1.3B, 2.7B\}~\cite{gpt-neo} and Pythia\{1.4B, 2.8B, 6.9B, 12B\}~\cite{biderman2023pythia}, trained on the publicly available Pile dataset~\cite{gao2020pile}, allowing us to find samples from the training corpus that show extractable memorization. 

\xhdr{Memorized Dataset} For each model family, we collect 15k samples from the training dataset which are highly memorized (more than 90\% samples show \textit{extractable memorization}) by all model scales. These samples were made available by \citet{Carlini2020ExtractingTD, prashanth2024recite} in their works on investigating memorization in LLMs. The memorized samples collected for GPTNeo models have a prefix length $l_p$ of 150 tokens and a suffix length $l_s$ of 50 tokens. The memorized samples collected for Pythia models have a prefix length $l_p$ of 32 tokens and a suffix length of $l_s$ of 32 tokens. 

\xhdr{Evaluation Benchmarks} Across our model attribution experiments, we also aim to quantify the general capabilities of the model alongside the memorization characteristics. To this end, we employ five different benchmarks -- ARC~\citep{arcreasoning}, HellaSwag~\citep{zellers2019hellaswag}, LAMBADA~\citep{paperno2016lambada}, PIQA~\citep{bisk2020piqa}, and Wikitext~\citep{wikitext} -- which test the model's performance on a variety of capabilities. In particular, we include benchmarks that test the reasoning and language understanding of the LLM. Refer to Appendix~\ref{app:benchmarks} for a detailed description of each benchmark. 

\xhdr{Memorization Metrics} The focus of this work is to study the effect of various model components on the memorization characteristics of the LLM. While exact memorization, as defined in the previous section, is the strongest indicator of memorization, we also employ two additional metrics of memorization, for a total of three:

\noindent\textit{1) Exact Match:} This is simply the definition of extractable memorization, realized as a metric, \ie
\vspace{-7pt}
\begin{equation}
    \text{EM}(f_\theta, p, s) = \mathbf{1}(\text{GS}(f_\theta, p) == s))
\vspace{-5pt}
\end{equation}
\textit{2) Token Accuracy:} Samples may not be verbatim memorized, but still show a large overlap with the exact training example. To account for this, we calculate the number of matching tokens between the generated sequence, and the suffix, \ie
\begin{equation}
    \text{TA}(f_\theta, p, s) = \frac{1}{l_s}\sum_{i=1}^{l_s} (\text{GS}(f_\theta, p)[i] == s[i]))
\end{equation}
We cut off the generation process after $l_s$ tokens and ignore any further tokens.

\noindent\textit{3) Completion Entropy:} is the total entropy of the logit distribution for the each token of the suffix conditioned on the prefix for memorized and non-memorized samples. Let the concatenated sequence $(p||s)$ be represented by a sequence of tokens $(x_1, \dots, x_{l_p+l_s})$. The completion entropy (CE) is defined as follows
\begin{equation}
    \small
    \text{CE}(f_\theta, p, s){=} -\sum_{i=l_p}^{l_p+l_s-1} \sum_{j=1}^{|V|} p_\theta^j(x_{i+1}|x_{1:i})\log p_\theta^j(x_{i+1}|x_{1:i}),
\end{equation}
where $p_\theta^j$ is the prediction probability for the $j^\text{th}$ token in the vocabulary from the LLM $f_\theta$ and $|V|$ is the vocabulary size. 

\begin{figure}[h!]
    \centering
    \includegraphics[width=0.81\linewidth]{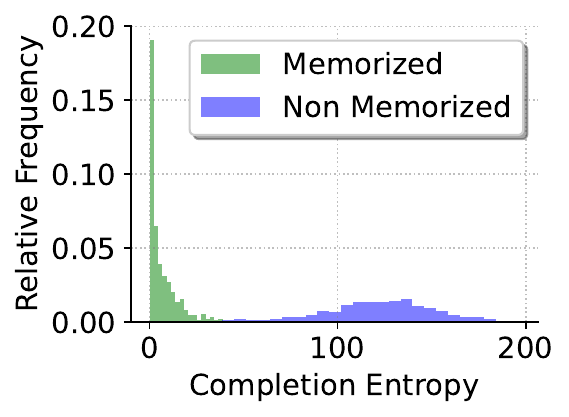}
    \vspace{-0.1in}
    \caption{Completion Entropy Scores for Memorized and Non Memorized Samples. Model used is GPTNeo-1.3B. Memorized samples consistently show significantly lower completion entropy scores.}
    \label{fig:completion_entropy}
    \vspace{-0.25in}
\end{figure}

\begin{figure*}[t]
    \centering
    \small
    \begin{flushleft}
        \hspace{1.2cm}{(a) Pythia-1.4B\hspace{1.8cm}{(b) Pythia-2.8B}\hspace{1.8cm}{(c) Pythia-6.9B}\hspace{1.8cm}{(d) Pythia-12B}}
    \end{flushleft} 
    \includegraphics[width=\linewidth]{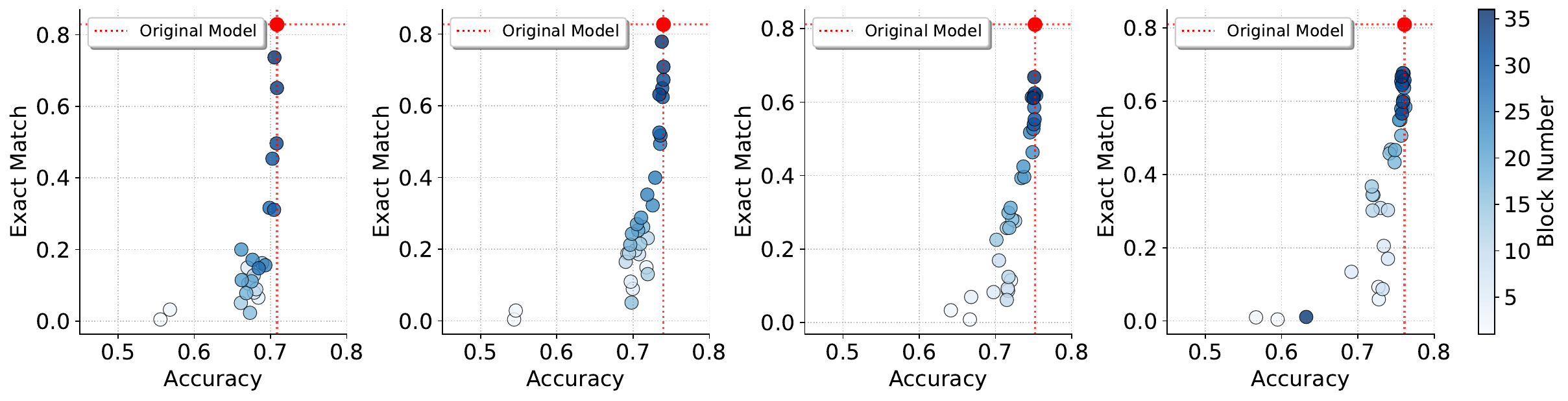}
    \vspace{-0.3in}
    \caption{Results for short-circuiting the attention mechanism in each block of Pythia models across four scales - 1.4B, 2.8B, 6.9B, 12B. Short-circuiting in later blocks of the LLM consistently shows lower memorization with a negligible drop in downstream benchmark performance on PIQA across all model scales. Additional results on other benchmarks can be found in the Appendix~\ref{app:results}.}
    \label{fig:all-scale-memorization}
\end{figure*}
\begin{figure*}[ht]
    \centering
    \vspace{-20pt}
    \begin{flushleft}
        \hspace{0.15in}
        \rotatebox[origin=c]{90}{\footnotesize\hspace{-3.5in}{Greedy \hspace{0.45in}{Temperature=1}}}
    \end{flushleft}
    \includegraphics[width=0.9\linewidth]{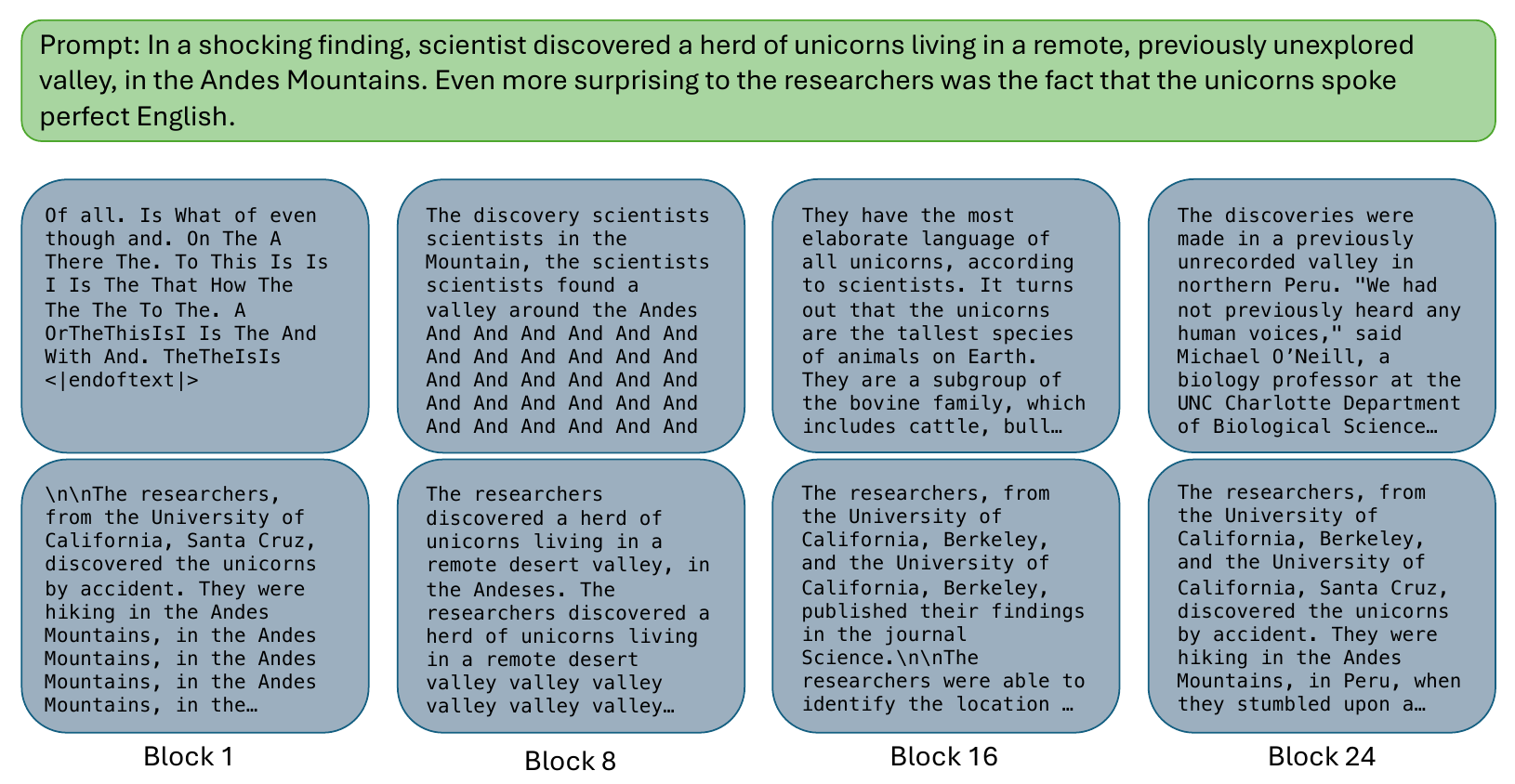}
    \caption{Samples generated using greedy and temperature sampling by GPTNeo-1.3B when short-circuiting attention at different blocks. Attention short-circuiting in earlier blocks leads to model collapse, while applying it to later blocks still results in coherent generations. Query prompt is taken from the GPT-2 paper~\citep{Radford2019LanguageMA}.}
    \label{fig:qualitative}
    \vspace{-15pt}
\end{figure*}

\subsection{Results}
Next, we present the insights from our experiments.

\subsubsection{Memorization vs. Performance}
\label{sec:memdown}
We aim to understand the trade-off between the memorization and downstream performance of LLMs. In particular, we strive to answer: ``\textit{Can we isolate model components that correspond to general performance and memorization, or are the two deeply intertwined?}'' To answer this question, we produce $L$ edited variants of the given LLM: $\{f_\theta^1, f_\theta^2, \dots, f_\theta^L\}$ by short-circuiting the attention mechanism in each of the $L$ blocks of the model. Next, we quantify the memorization performance of each model variant using the memorization metrics defined in Sec.~\ref{sec:method} and their performance on standard benchmark. Following \citet{prashanth2024recite}, we evaluate all memorization metrics using greedy sampling and leave the exploration of other targeted extraction techniques as future work. 

\noindent We present our findings in Fig.~\ref{fig:memorization_downstream} for the GPTNeo-1.3B model across all benchmarks. Interestingly, we observe that short-circuiting attention in many blocks (particularly, the last quartile blocks, \ie layers 18-24 for GPTNeo-1.3B) leads to a significant drop in memorization, with negligible drop in model performance. In some cases, the short-circuited variants even outperform the benchmark performance of the original model (see LAMBADA results in Fig.~\ref{fig:app_memorization_downstream_1}c). Further, short-circuiting the attention mechanism in the earlier layers of the LLMs leads to almost no memorization and leads to a significant drop of benchmark performance, with accuracy no better than random chance, \ie the LLM collapses. We observe that applying attention short-circuiting in the first block (pure white circle in Fig.~\ref{fig:memorization_downstream}) sometimes spuriously leads to high performance, however the model is unable to generate any coherent sequences in this state, as we discuss in Sec.~\ref{sec:qualitative}. Please refer to Appendix~\ref{app:results} for additional results on other LLMs and benchmarks.

\vspace{-7pt}
\subsubsection{Short-Circuiting across Model Scale}
\label{sec:scake}
Our findings in Sec.~\ref{sec:memdown} show that the attention in later blocks of LLMs contributes strongly to memorization and has an insignificant effect on the downstream performance. This insight is crucial within the context of memorization in LLMs as it \textbf{implies that the attention mechanism in later layers of a transformer can be skipped during inference}, with the desirable result of lower memorization and retaining downstream performance. Next, we investigate whether this insight generalizes to LLMs of different scale (increasing parameter size), and our findings in Fig.~\ref{fig:all-scale-memorization} show a consistent trend across four different scales of the Pythia models, wherein short-circuiting attention in later blocks of the LLM results in reduced memorization. %
We note that larger models are more resistant to this phenomenon, with the gap in memorization between edited and original models reducing as model scale increases. We hypothesize that applying attention short-circuiting to groups of blocks in parallel may result in a larger drop in memorization scores, though we leave this for future exploration. 

\vspace{-10pt}
\subsubsection{Qualitative Analysis}
\label{sec:qualitative}
\looseness=-1 Here, we analyze the qualitative effect of short-circuiting attention on its generation performance. In Fig.~\ref{fig:qualitative}, we show the result of short circuiting the attention mechanism in four different blocks of GPTNeo-1.3B, finding that short-circuiting attention in the earlier blocks leads to a breakdown of coherent generations. We hypothesize that the earlier layers of the LLM are responsible for general semantics and grammar and any change in them results in model collapse. The same effect is not observed when short-circuiting later blocks of the LLM, wherein the model still generated coherent and plausible completions to the given prompt. This behavior is also reflected in the Wikitext perplexity scores obtained by short-circuiting each of the different attention blocks in the model, shown in Fig.~\ref{fig:wikitext_perplexity}, where the short-circuit operation when applied to later attention blocks, does not degrade the perplexity scores significantly. 
\begin{figure}[h!]
    \centering
    \vspace{-0.1in}
    \includegraphics[width=0.7\linewidth]{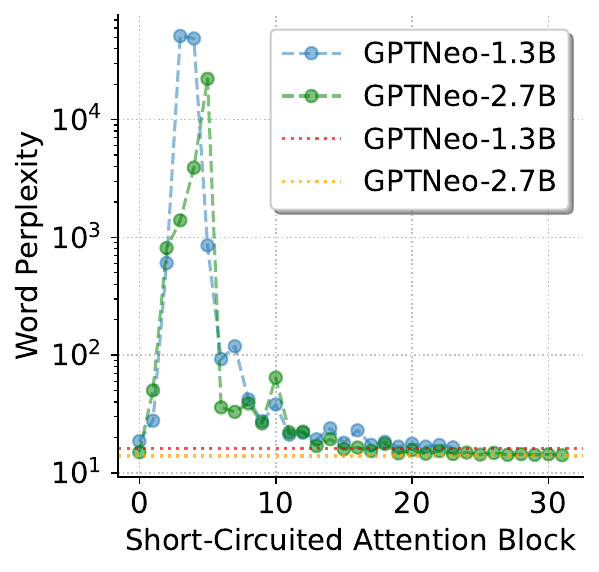}
    \vspace{-10pt}
    \caption{Wikitext word perplexity scores for GPTNeo models with attention short-circuiting applied to each block. Short-circuiting attention in later blocks results in minimal degradation in perplexity scores. Dashed lines indicate vanilla model performance.}
    \label{fig:wikitext_perplexity}
    \vspace{-15pt}
\end{figure}

\begin{figure}[t]
    \centering
    \includegraphics[width=0.7\linewidth]{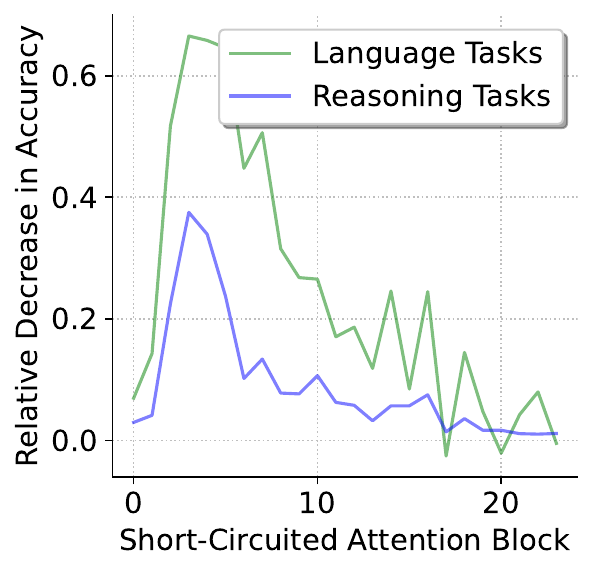}
    \vspace{-0.1in}
    \caption{Relative decrease in benchmark performance for reasoning and language understanding tasks after short-circuiting attention mechanism in each block of GPTNeo-1.3B. Reasoning tasks are more resistant to attention short-circuiting, especially in later blocks. Results for additional models in Appendix~\ref{app:results}.}
    \label{fig:reasoning-non-reasoning}
    \vspace{-15pt}
\end{figure}
\subsubsection{Reasoning vs. Non-reasoning Tasks}
In the above sections, we evaluate short-circuited models across different scales on a comprehensive set of benchmarks, where results clearly show that short-circuiting applied to the later blocks of an LLM yields a huge decrease in all memorization metrics, without sacrificing model performance significantly. Here, we further investigate this phenomenon by dividing the benchmark tasks into reasoning and language understanding tasks. We measure the average drop in performance across benchmarks in each type of task after short-circuiting the attention mechanism in each layer. 

\looseness=-1 \noindent Results in Fig.~\ref{fig:reasoning-non-reasoning} show a clear pattern, where the average relative decrease in score on reasoning tasks (PIQA, ARC-Easy) is much lesser than in the case of language understanding tasks (Hellaswag, LAMBADA). Additionally, a large set of transformer blocks show almost \textbf{zero} drop in relative performance on reasoning tasks after attention short-circuiting, suggesting that these blocks contribute almost exclusively to memorization, while not improving the reasoning capabilities of the model. We present consistent results across all six models in Appendix~\ref{app:results}.

\vspace{-8pt}
\section{Conclusion}
\label{sec:conclusion}

Our study explores how attention modules at different depths contribute to memorization in LLMs. By systematically bypassing attention modules at targeted blocks while preserving other components, we isolate their impact on memorization and generalization. Our theoretical analysis showed that bypassing attention in earlier layers disrupts output representations, harming performance and generalization, while interventions in deeper layers prevent memorized content generation without impairing generalization abilities. Empirical experiments with Pythia and GPT-Neo models aligned with these predictions, indicating that deeper attention blocks play a critical role in memorization phenomena. This work enhances understanding of how transformer components influence memorization and offers a practical approach to address ethical concerns like privacy violations. By reducing memorization without degrading model performance, we contribute to developing more ethical and reliable LLMs suitable for real-world applications. Our targeted intervention strategy opens avenues for future research into architectural modifications that balance memorization and generalization, advancing responsible artificial intelligence technologies.

\section{Limitations and Future Directions}

While our approach shows promising results, several limitations warrant further investigation:

\begin{enumerate}
    \item Effects of attribution on multiple attention blocks: Our work provides the first step towards analyzing memorization from an architectural perspective. We study memorization by isolating and short-circuiting individual attention modules. The current work does not include studying the effects of short-circuiting multiple attention modules simultaneously, which is an interesting direction for future work.

    \item Effects of non-attention modules:  Additionally, the work currently studies the effects of only attention modules, not the other transformer components such as layer normalizations or MLP layers. The study of these components is also another interesting direction for future work. 

    \item Study on targeted extraction: While our work shows that attention short-circuiting is successful in reducing verbatim memorization, we only study this under the conditions of greedy sampling. An extension of this work to include more targeted extraction attacks is an important future exploration. 

    \item Study of models trained on closed datasets: Finally, the models and datasets studied in this work are on the smaller side compared to commercial models like GPT-4. However, such large-scale models cannot be directly evaluated for memorization as the data used to train such models is not publicly available. However, our results indicate that applying the proposed interventions can nevertheless prevent them from generating memorized content, without compromising on their generation capabilities.
\end{enumerate}

\section{Ethical Considerations}

Our method studies the impact of attention modules on memorization and provides a mechanism to overcome generation of memorized sequences. All data used in our studies come from the public domain and do not contain private information. All models are also publicly available. Hence, our work does not have any ethical concerns that need to be explicitly addressed.

\bibliography{custom}

\clearpage
\newpage

\onecolumn
\appendix

\renewcommand{\thesection}{A}
\renewcommand{\thefigure}{A\arabic{figure}}
\renewcommand{\thetable}{A\arabic{table}}

\setcounter{page}{1}
\setcounter{theorem}{0}

\section{Appendix}

\localtableofcontents

\subsection{Proofs}
\label{app:proof}
\begin{theorem}[Bounding the Difference Between Identity Attention and Standard Attention for a Transformer with a single block]
    \looseness=-1 The normed difference between the output vectors obtained by using standard attention and short-circuited attention for a single transformer block is upper bounded by:
    \[
    \| v_{IA} - v \| \leq (1 + \| \mathbf{W} \|) M (1 - \alpha_l) + \| \epsilon_{IA} - \epsilon \|,
    \]
    where \( M = \max_{i \ne l} \| x_n - x_i \| \), $\mathbf{W}$ is the weight matrix of the FFN layer, and $\epsilon$ is the error in the linear approximation of the activation function.
    \label{thm:one}
\end{theorem}

\begin{proof}

We begin by expressing the difference \( D \) between the output value vectors after the application of identity attention and standard attention:
\[
D = v^{IA} - v
\]
Substituting the expressions for \( v^{IA} \) and \( v \):

\[
D = [2 x_n + 2~\text{FFN}(x_n) + \epsilon^{IA}] - [z_n + \text{FFN}(z_n) + \epsilon]
\]

where $z_n$ is the output of the standard attention module corresponding to the input $x_n$, after the residual addition.

Simplifying, we get:

\[
D = [2 x_n - z_n] + [2~\text{FFN}(x_n) - \text{FFN}(z_n)] + (\epsilon^{IA} - \epsilon)
\]

The first term is:

\[
2 x_n - z_n = 2 x_n - \left( \sum_{i} \alpha_i x_i + x_n \right) = x_n - \sum_i \alpha_i x_i
\]

This can be rewritten as:

\[
2 x_n - z_n = (1 - \alpha_n) x_n - \sum_{i \ne n} \alpha_i x_i
\]

Since \( \alpha_n + \sum_{i \ne n} \alpha_i = 1 \), we have:

\[
2 x_n - z_n = \sum_{i \ne n} \alpha_i (x_n - x_i)
\]

Now consider the second term:

\[
2~\text{FFN}(x_n) - \text{FFN}(z_n) = 2~\text{FFN}(x_n) - \text{FFN}\left( \sum_{i} \alpha_i x_i + x_n \right)
\]

Using the linearity of the FFN:

\[
2~\text{FFN}(x_n) - \text{FFN}(z_n) = \text{FFN}(x_n) - \sum_{i} \alpha_i \text{FFN}(x_i)
\]

Thus, the total difference \( D \) becomes:

\[
D = \sum_{i \ne n} \alpha_i \left( x_n - x_i + \text{FFN}(x_n) - \text{FFN}(x_i) \right) + (\epsilon^{IA} - \epsilon)
\]

Since \( \text{FFN}(x_n) - \text{FFN}(x_i) = \text{FFN}(x_n - x_i) \) due to the linearity of FFN, we can rewrite \( D \) as:

\[
D = \sum_{i \ne n} \alpha_i \left( \delta x_i + \text{FFN}(\delta x_i) \right) + (\epsilon^{IA} - \epsilon)
\]

where \( \delta x_i = x_n - x_i \).

Now, using the triangle inequality:

\[
\| D \| \leq \sum_{i \ne n} \alpha_i \left( \| \delta x_i \| + \| \text{FFN}(\delta x_i) \| \right) + \| \epsilon^{IA} - \epsilon \|
\]

Since \( \| \text{FFN}(\delta x_i) \| = \| W \delta x_i \| \leq \| W \| \| \delta x_i \| \), we have:

\[
\| D \| \leq \sum_{i \ne n} \alpha_i \left( \| \delta x_i \| + \| W \| \| \delta x_i \| \right) + \| \epsilon^{IA} - \epsilon \|
\]

Factoring \( \| \delta x_i \| \):

\[
\| D \| \leq (1 + \| W \|) \sum_{i \ne n} \alpha_i \| \delta x_i \| + \| \epsilon^{IA} - \epsilon \|
\]

Let \( M = \max_{i \ne n} \| x_n - x_i \| \). Then:

\[
\| \delta x_i \| \leq M
\]

Thus, we can bound \( D \) as:

\[
\| D \| \leq (1 + \| W \|) M \sum_{i \ne n} \alpha_i + \| \epsilon^{IA} - \epsilon \|
\]

Since \( \sum_{i \ne n} \alpha_i = 1 - \alpha_n \), we obtain the final bound:

\[
\| D \| \leq (1 + \| W \|) M (1 - \alpha_n) + \| \epsilon^{IA} - \epsilon \|
\]
\end{proof}

\begin{theorem}[Impact of Replacing Attention in any two consecutive Transformer Layers]

For any two adjacent layers $L$ and $L+1$, the difference in the output representations can be bounded as:

\noindent When replacing attention at layer \( L \):\\
\(
\| D^{L+1}_{IA,L} \| \leq (1 + \| W^{L+1} \|)(1 + \alpha_n^{L+1}) \| D^{L} \|,
\)
where \( \| D^{L} \| \leq (1 + \| W^{L} \|) M^{L} (1 - \alpha_n^{L}) + \| \epsilon^{L}_{\text{IA}} - \epsilon^{L} \| \).

\noindent When replacing attention at layer \( L+1 \):\\
\(
\| D^{L+1}_{\text{IA,L+1}} \| \leq (1 + \| W^{L+1} \|) M^{L+1} (1 - \alpha_l^{L+1}) + \| \epsilon^{L+1}_{\text{IA}} - \epsilon^{L+1} \|
\)
\end{theorem}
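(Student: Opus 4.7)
The approach is to build directly on the single-block analysis of Theorem~\ref{thm:one} and track how a perturbation introduced at one layer propagates through one additional transformer block. The second bound (intervention at layer $L+1$) is essentially immediate: since layer $L+1$ sees the unperturbed input $\mathbf{V}^L$ and the short-circuit is applied within that block itself, Theorem~\ref{thm:one} applies verbatim with $M^{L+1}=\max_{i\neq n}\|v_n^L-v_i^L\|$, weight matrix $W^{L+1}$, attention weight $\alpha_n^{L+1}$, and error term $\epsilon^{L+1}_{\text{IA}}-\epsilon^{L+1}$, yielding the claimed expression with no further work.

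For the first bound (intervention at layer $L$, standard attention retained at layer $L+1$), I would proceed in three steps. First, invoke Theorem~\ref{thm:one} at layer $L$ to obtain the bound on $\|D^L\|$ that is re-stated in the theorem. Second, write the pre-FFN output of layer $L+1$ at the last-token position in both settings: with unperturbed inputs this is $z_n^{L+1}=\sum_i \alpha_i^{L+1}v_i^L+v_n^L$, and with the perturbed inputs coming from short-circuiting at layer $L$ it becomes $\tilde z_n^{L+1}=\sum_i \alpha_i^{L+1}(v_i^L+D_i^L)+(v_n^L+D_n^L)$. Following the convention established in Theorem~\ref{thm:one} that $D^L$ tracks the last-token perturbation, the attention-weighted sum contributes $\alpha_n^{L+1}D^L$ through the last-token slot and the residual adds another copy of $D^L$, so the pre-FFN difference is exactly $(1+\alpha_n^{L+1})D^L$. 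Third, push this through the linearized FFN and its residual connection as in the proof of Theorem~\ref{thm:one}: the map $y\mapsto y+W^{L+1}y$ has operator norm at most $1+\|W^{L+1}\|$, and the approximation-error terms compose additively via the triangle inequality. Combining gives $\|D^{L+1}_{\text{IA},L}\|\leq (1+\|W^{L+1}\|)(1+\alpha_n^{L+1})\|D^L\|$ as claimed.

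The main obstacle is the implicit assumption, inherited from Theorem~\ref{thm:one}, that the perturbation at layer $L$ is effectively concentrated at the last token when viewed from the vantage of the attention block at layer $L+1$. In reality, short-circuiting attention at layer $L$ perturbs every token's representation, and the last-token output of layer $L+1$ aggregates over all positions via the weights $\alpha_i^{L+1}$, so a fully rigorous argument needs to either declare this restriction explicitly or impose a uniform per-token bound $\max_i \|D_i^L\|\leq \|D^L\|$ (in which case the factor $\alpha_n^{L+1}$ becomes $1$ but the qualitative amplification by $(1+\|W^{L+1}\|)$ remains). A secondary, routine concern is the careful bookkeeping of the two linearization errors $\epsilon^L$ and $\epsilon^{L+1}$ so that they collapse into the single residual error term stated in the bound; this follows the same triangle-inequality pattern used at the end of the proof of Theorem~\ref{thm:one}.
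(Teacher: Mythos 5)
Your proposal follows essentially the same route as the paper: Case~2 is a verbatim re-application of Theorem~\ref{thm:one} one block later, and Case~1 tracks the last-token perturbation $D^L$ through the next block's attention ($\delta z^{L+1}=(1+\alpha_n^{L+1})D^L$, coming from the $\alpha_n^{L+1}$-weighted term plus the residual) and then through the linearized FFN-with-residual map of operator norm at most $1+\|W^{L+1}\|$, exactly as in the appendix. The ``main obstacle'' you flag is a real one, and notably the paper's own proof commits the same simplification you identify: it writes ``since the inputs are modified only through $v^L_{\text{IA}}$'' and hence $z^{L+1}_{\text{IA},L}=z^{L+1}+\alpha_n^{L+1}D^L+D^L$, silently neglecting that short-circuiting attention at layer $L$ perturbs every token's representation $v_i^L$, all of which feed into $z^{L+1}=\sum_i\alpha_i^{L+1}v_i^L+v_n^L$; your suggested repair (a uniform per-token bound) is a reasonable way to make the step honest, at the cost of changing the $\alpha_n^{L+1}$ prefactor.
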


\begin{proof}

\textbf{Case 1:} Replacing Attention at Layer \( L \)

Standard Attention Output at Layer \( L \):

  \[
  v^{L} = z^{L} + \text{FFN}^{L}(z^{L}) + \epsilon^{L},
  \]
  where
  \[
  z^{L} = \sum_{i} \alpha_i^{L} v_i^{L-1} + v_n^{L-1}.
  \]

Identity Attention Output at Layer \( L \):

  \[
  v^{L}_{\text{IA}} = z^{L}_{\text{IA}} + \text{FFN}^{L}(z^{L}_{\text{IA}}) + \epsilon^{L}_{\text{IA}},
  \]
  where
  \[
  z^{L}_{\text{IA}} = 2 v_n^{L-1}.
  \]

Difference at Layer \( L \):

  \[
  D^{L} = v^{L}_{\text{IA}} - v^{L} = \delta z^{L} + \text{FFN}^{L}(\delta z^{L}) + (\epsilon^{L}_{\text{IA}} - \epsilon^{L}),
  \]
  where
  \[
  \delta z^{L} = z^{L}_{\text{IA}} - z^{L} = \sum_{i \ne n} \alpha_i^{L} (v_n^{L-1} - v_i^{L-1}).
  \]

\textbf{Bounding \( \| D^{L} \| \):}

  Using the triangle inequality and linearity of the FFN approximation:

  \[
  \| D^{L} \| \leq (1 + \| \mathbf{W}^{L} \|) \| \delta z^{L} \| + \| \epsilon^{L}_{\text{IA}} - \epsilon^{L} \|.
  \]

  Since

  \[
  \| \delta z^{L} \| \leq M^{L} (1 - \alpha_n^{L}),
  \]
  where \( M^{L} = \max_{i \ne n} \| v_n^{L-1} - v_i^{L-1} \| \), we have:

  \[
  \| D^{L} \| \leq (1 + \| \mathbf{W}^{L} \|) M^{L} (1 - \alpha_n^{L}) + \| \epsilon^{L}_{\text{IA}} - \epsilon^{L} \|.
  \]

\textbf{Propagating the Difference to Layer \( L+1 \):}

  The modified input to layer \( L+1 \) is:

  \[
  v^{L}_{\text{IA}} = v^{L} + D^{L}.
  \]

Standard Attention Output at Layer \( L+1 \):

  \[
  v^{L+1} = z^{L+1} + \text{FFN}^{L+1}(z^{L+1}) + \epsilon^{L+1},
  \]
  where
  \[
  z^{L+1} = \sum_{i} \alpha_i^{L+1} v_i^{L} + v_n^{L}.
  \]

Modified Attention Output at Layer \( L+1 \):

  Since the inputs are modified only through \( v^{L}_{\text{IA}} \), the attention output becomes:

  \[
  z^{L+1}_{\text{IA,L}} = z^{L+1} + \alpha_n^{L+1} D^{L} + D^{L}.
  \]

Difference at Layer \( L+1 \):

  \[
  D^{L+1}_{\text{IA,L}} = v^{L+1}_{\text{IA,L}} - v^{L+1} = \delta z^{L+1} + \text{FFN}^{L+1}(\delta z^{L+1}) + (\epsilon^{L+1}_{\text{IA,L}} - \epsilon^{L+1}),
  \]
  where

  \[
  \delta z^{L+1} = z^{L+1}_{\text{IA,L}} - z^{L+1} = (1 + \alpha_n^{L+1}) D^{L}.
  \]

Bounding \( \| D^{L+1}_{\text{IA,L}} \| \):

  Using the triangle inequality and linearity:

  \[
  \| D^{L+1}_{\text{IA,L}} \| \leq (1 + \| \mathbf{W}^{L+1} \|) \| \delta z^{L+1} \| + \| \epsilon^{L+1}_{\text{IA,L}} - \epsilon^{L+1} \|.
  \]

  Substituting \( \delta z^{L+1} = (1 + \alpha_n^{L+1}) D^{L} \):

  \[
  \| D^{L+1}_{\text{IA,L}} \| \leq (1 + \| \mathbf{W}^{L+1} \|)(1 + \alpha_n^{L+1}) \| D^{L} \| + \| \epsilon^{L+1}_{\text{IA,L}} - \epsilon^{L+1} \|.
  \]

Final Bound for \( D^{L+1}_{\text{IA,L}} \):

Combining with the bound for \( \| D^{L} \| \):

\[
\| D^{L+1}_{\text{IA,L}} \| \leq (1 + \| \mathbf{W}^{L+1} \|)(1 + \alpha_n^{L+1}) \left[ (1 + \| \mathbf{W}^{L} \|) M^{L} (1 - \alpha_n^{L}) + \| \epsilon^{L}_{\text{IA}} - \epsilon^{L} \| \right] + \| \epsilon^{L+1}_{\text{IA,L}} - \epsilon^{L+1} \|.
\]

\textbf{Case 2:} Replacing Attention at Layer \( L+1 \)

Difference at Layer \( L+1 \):

Standard Attention Output at Layer \( L+1 \):

  \[
  v^{L+1} = z^{L+1} + \text{FFN}^{L+1}(z^{L+1}) + \epsilon^{L+1},
  \]
  where
  \[
  z^{L+1} = \sum_{i} \alpha_i^{L+1} v_i^{L} + v_n^{L}.
  \]

Identity Attention Output at Layer \( L+1 \):

  \[
  v^{L+1}_{\text{IA,L+1}} = z^{L+1}_{\text{IA,L+1}} + \text{FFN}^{L+1}(z^{L+1}_{\text{IA,L+1}}) + \epsilon^{L+1}_{\text{IA}},
  \]
  where
  \[
  z^{L+1}_{\text{IA,L+1}} = 2 v_n^{L}.
  \]

Difference at Layer \( L+1 \):

  \[
  D^{L+1}_{\text{IA,L+1}} = v^{L+1}_{\text{IA,L+1}} - v^{L+1} = \delta z^{L+1}_{\text{IA}} + \text{FFN}^{L+1}(\delta z^{L+1}_{\text{IA}}) + (\epsilon^{L+1}_{\text{IA}} - \epsilon^{L+1}),
  \]
  where
  \[
  \delta z^{L+1}_{\text{IA}} = z^{L+1}_{\text{IA,L+1}} - z^{L+1} = \sum_{i \ne n} \alpha_i^{L+1} (v_n^{L} - v_i^{L}) + (v_n^{L} - v_n^{L}) = \sum_{i \ne n} \alpha_i^{L+1} (v_n^{L} - v_i^{L}).
  \]

Bounding \( \| D^{L+1}_{\text{IA,L+1}} \| \):

  Using the triangle inequality and linearity:

  \[
  \| D^{L+1}_{\text{IA,L+1}} \| \leq (1 + \| \mathbf{W}^{L+1} \|) \| \delta z^{L+1}_{\text{IA}} \| + \| \epsilon^{L+1}_{\text{IA}} - \epsilon^{L+1} \|.
  \]

  Since

  \[
  \| \delta z^{L+1}_{\text{IA}} \| \leq M^{L+1} (1 - \alpha_n^{L+1}),
  \]
  where \( M^{L+1} = \max_{i \ne n} \| v_n^{L} - v_i^{L} \| \), we have:

  \[
  \| D^{L+1}_{\text{IA,L+1}} \| \leq (1 + \| \mathbf{W}^{L+1} \|) M^{L+1} (1 - \alpha_n^{L+1}) + \| \epsilon^{L+1}_{\text{IA}} - \epsilon^{L+1} \|.
  \]

Replacing attention at layer \( L \) introduces differences that propagate and potentially amplify through subsequent layers due to the operations of the FFNs and residual connections. The amplification is influenced by the operator norms of the weight matrices and the attention weights. The theoretical bounds suggest that replacing attention at a lower layer (\( L \)) can lead to larger differences at layer \( L+1 \), while replacing attention at layer \( L+1 \) introduces more localized differences.

\end{proof}

\subsection{Description of Benchmarks}
\label{app:benchmarks}
In this section, we provide detailed descriptions of the various benchmarks that we use in our experiments. We use a diverse set of reasoning and language understanding tasks to comprehensively understand the effect of attention short-circuiting on downstream benchmarks. 

\looseness=-1 \xhdr{Lambada~\citep{paperno2016lambada}} Evaluates contextual understanding with word prediction tasks, where the model is given a long text passage and asked to predict the last word, requiring deep comprehension. 

\xhdr{AI2 Reasoning Challenge (ARC)~\citep{arcreasoning} - Easy} Measures reasoning with elementary-level science and general knowledge multiple-choice questions from standardized tests that require straightforward answers. 

\xhdr{Hellaswag~\citep{zellers2019hellaswag}:} Assesses commonsense reasoning with multiple-choice narrative completion, where a model must select the most logical ending from several options for scenarios like activities or descriptions. 

\xhdr{Wikitext~\citep{wikitext}:} It contains well-structured and formal writing, focusing on the model’s ability to generate coherent text by testing LLMs in predicting text sequences from Wikipedia articles. 

\xhdr{Physical Interaction: Question Answering (PIQA)~\citep{bisk2020piqa}:} Tests physical commonsense reasoning using multiple-choice questions about everyday tasks, where the LLM must select the more reasonable option between two possible solutions.

\subsection{Additional Results}
\label{app:results}

In this section, we present results for all our experiments in Sec.~\ref{sec:experiments} for additional benchmarks and models, showing consistent trends across all six models (GPTNeo-\{1.3B, 2.7B\}, Pythia-\{1.4B, 2.8B, 6.9B, 12B\}) and five benchmarks (ARC-Easy, HellaSwag, LAMBADA, PIQA, and Wikitext).

\subsubsection{Memorization vs Downstream Performance on Additional Models}
In~\cref{fig:app_memorization_downstream_1,fig:app_memorization_downstream_2,fig:app_memorization_downstream_3,fig:app_memorization_downstream_4,fig:app_memorization_downstream_5}, we present additional results comparing the memorization scores and downstream benchmark performance for all remaining models (GPTNeo-2.7B, Pythia-{1.4B, 2.8B, 6.9B, 12B}). We consistenly observe lower memorization scores without a drop in benchmark performance when applying short circuiting to the attention mechanism of later blocks of the LLM, across all models and benchmarks. As mentioned in Sec.~\ref{sec:experiments}, we observe that the drop in memorization is less pronounced in larger models, suggesting a deeper investigation by short-circuiting attention mechanisms in \textbf{groups} of blocks. 

\begin{figure*}[h!]
    \centering
    \begin{subfigure}[b]{0.83\linewidth}
        \centering
        \includegraphics[width=\linewidth]{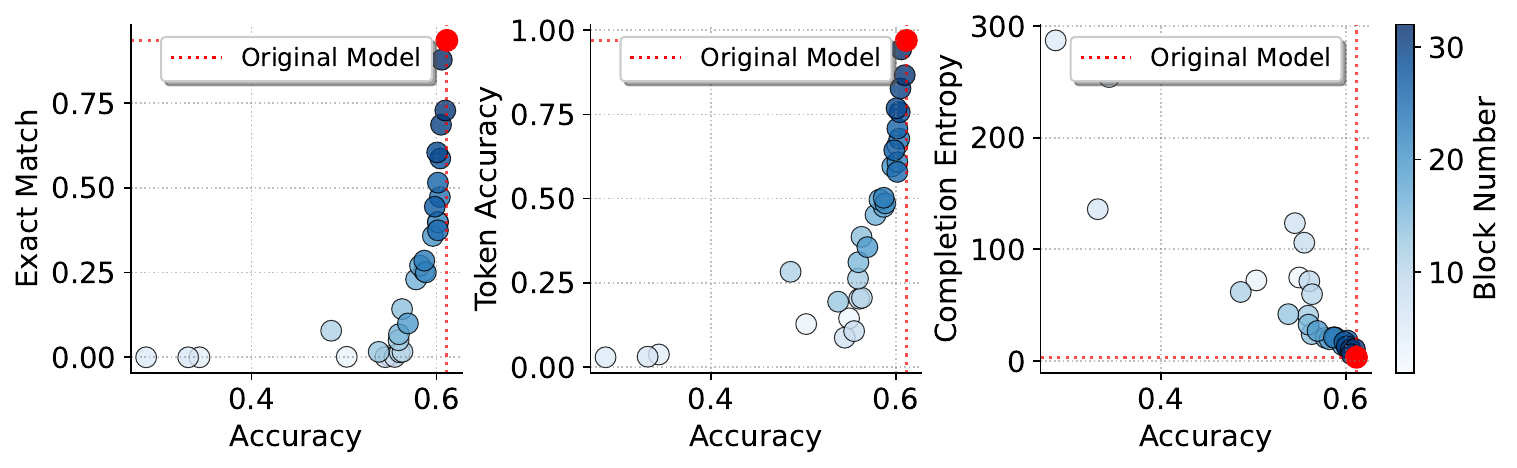}
        \caption{ARC - Easy}
    \end{subfigure}
    \vspace{-0.05in}
    \begin{subfigure}[b]{0.83\linewidth}
        \centering
        \includegraphics[width=\linewidth]{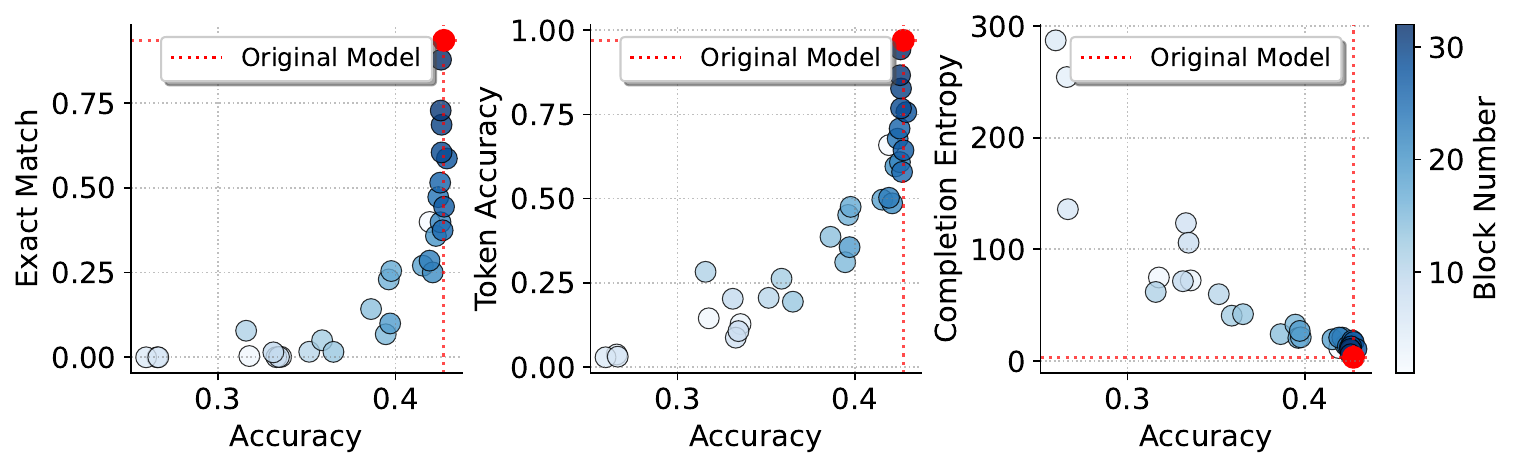}
        \caption{Hellaswag}
    \end{subfigure}
    \vspace{-0.05in}
    \begin{subfigure}[b]{0.83\linewidth}
        \centering
        \includegraphics[width=\linewidth]{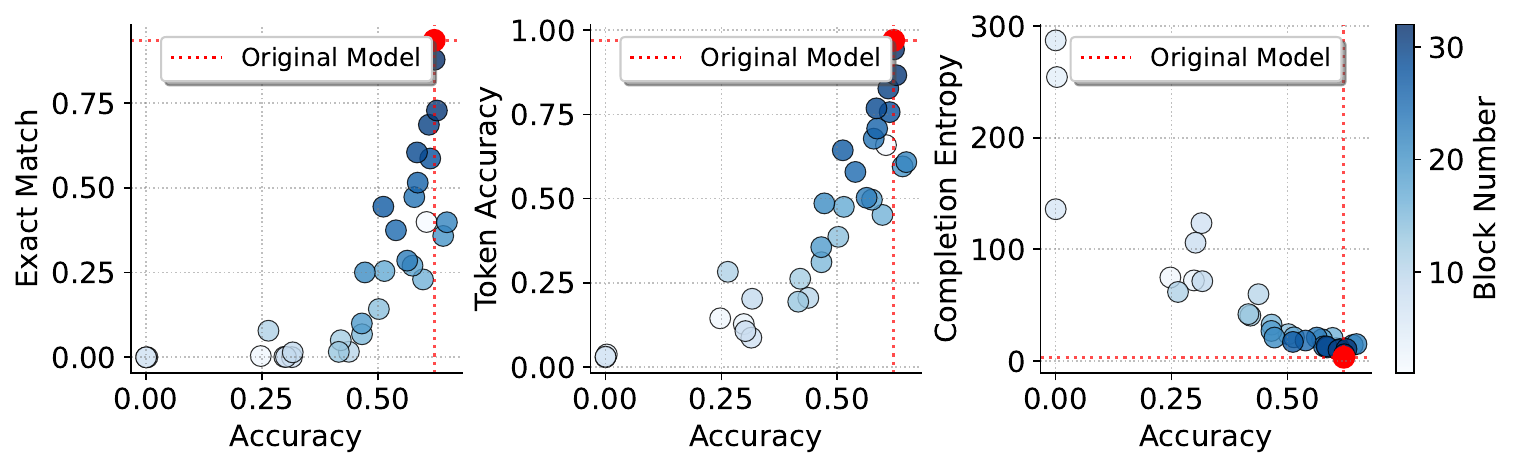}
        \caption{LAMBADA}
    \end{subfigure}
    \begin{subfigure}[b]{0.83\linewidth}
        \centering
        \includegraphics[width=\linewidth]{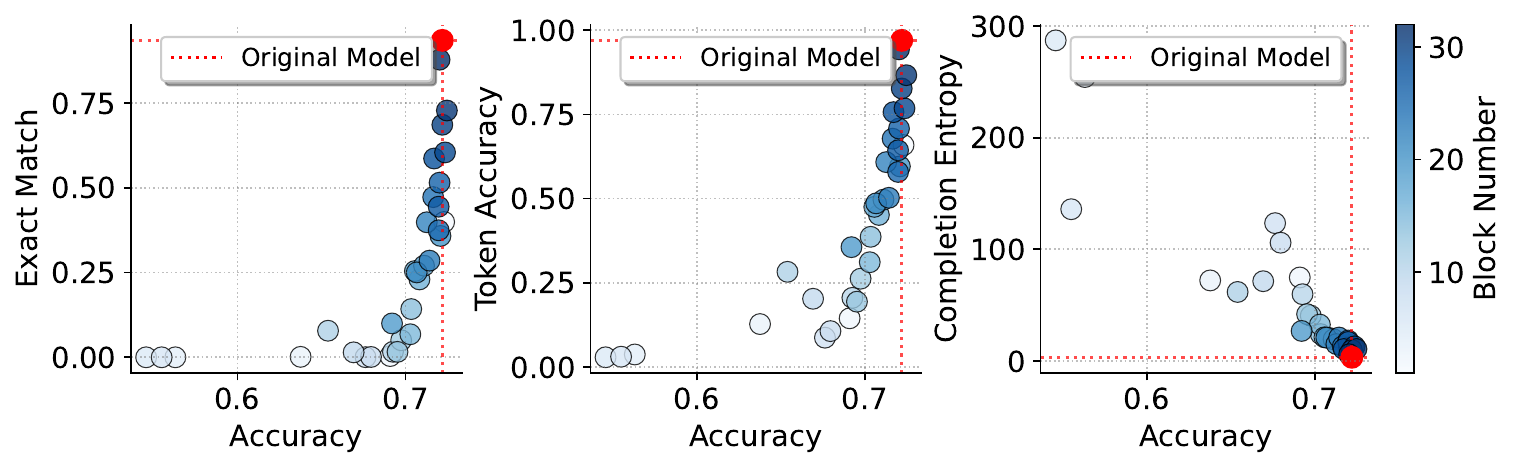}
        \caption{PIQA}
    \end{subfigure}
    \vspace{-0.05in}
    \begin{subfigure}[b]{0.83\linewidth}
        \centering
        \includegraphics[width=\linewidth]{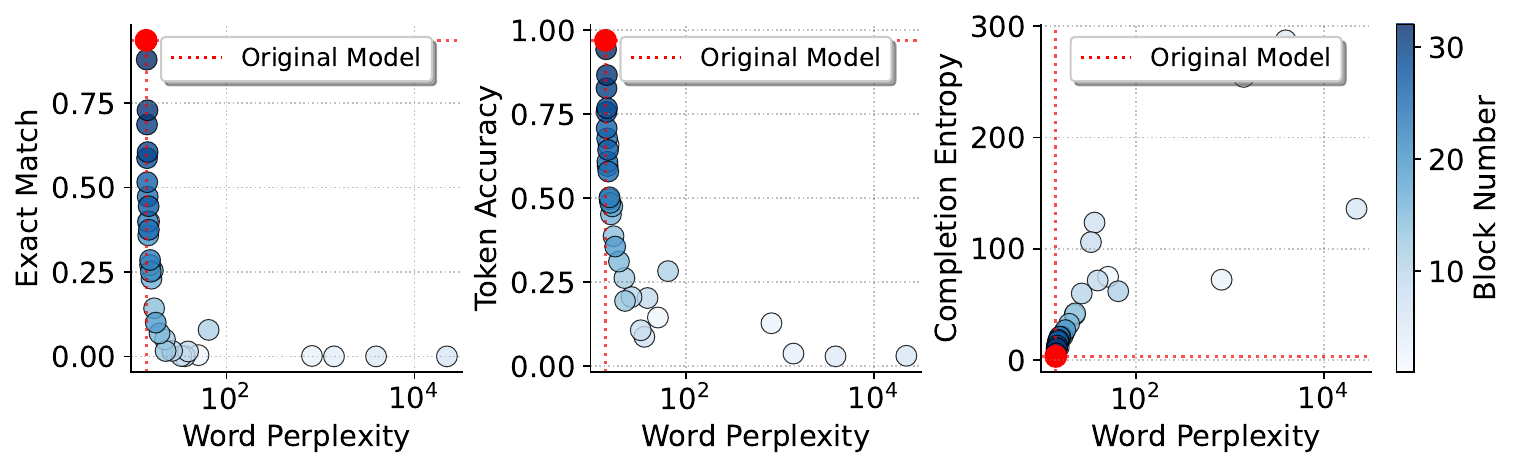}
        \caption{Wikitext}
    \end{subfigure}

    \caption{Memorization vs. Downstream Performance for GPTNeo-2.7B with short-circuiting applied to the attention mechanism of each block independently. }
    \label{fig:app_memorization_downstream_1}
\end{figure*}

\begin{figure*}[h!]
    \centering
    \begin{subfigure}[b]{0.83\linewidth}
        \centering
        \includegraphics[width=\linewidth]{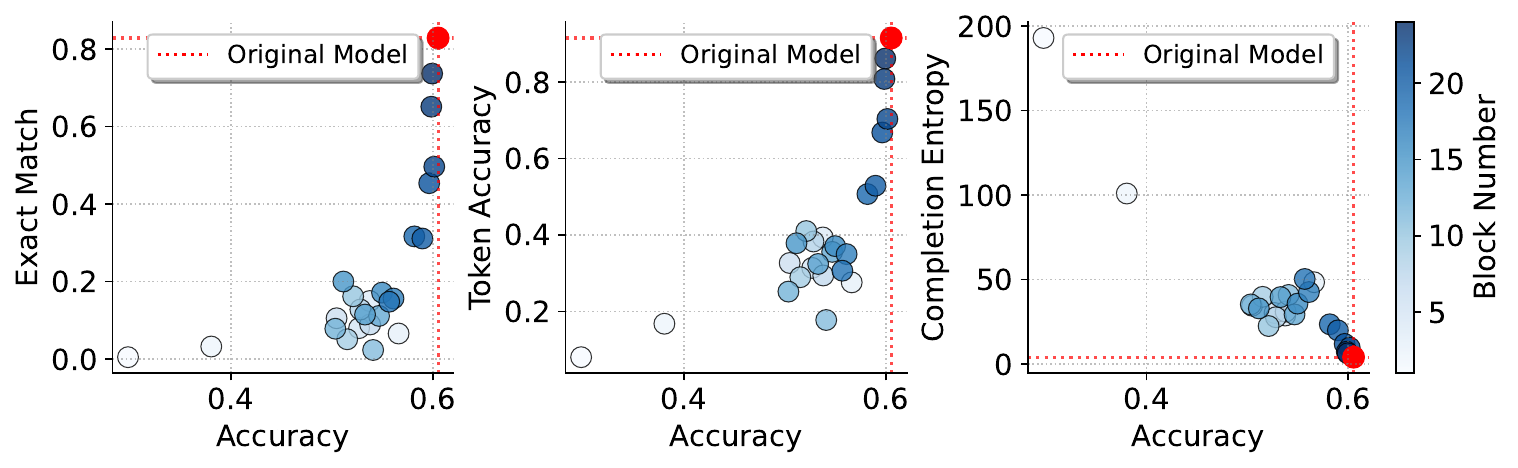}
        \caption{ARC - Easy}
    \end{subfigure}
    \vspace{-0.05in}
    \begin{subfigure}[b]{0.83\linewidth}
        \centering
        \includegraphics[width=\linewidth]{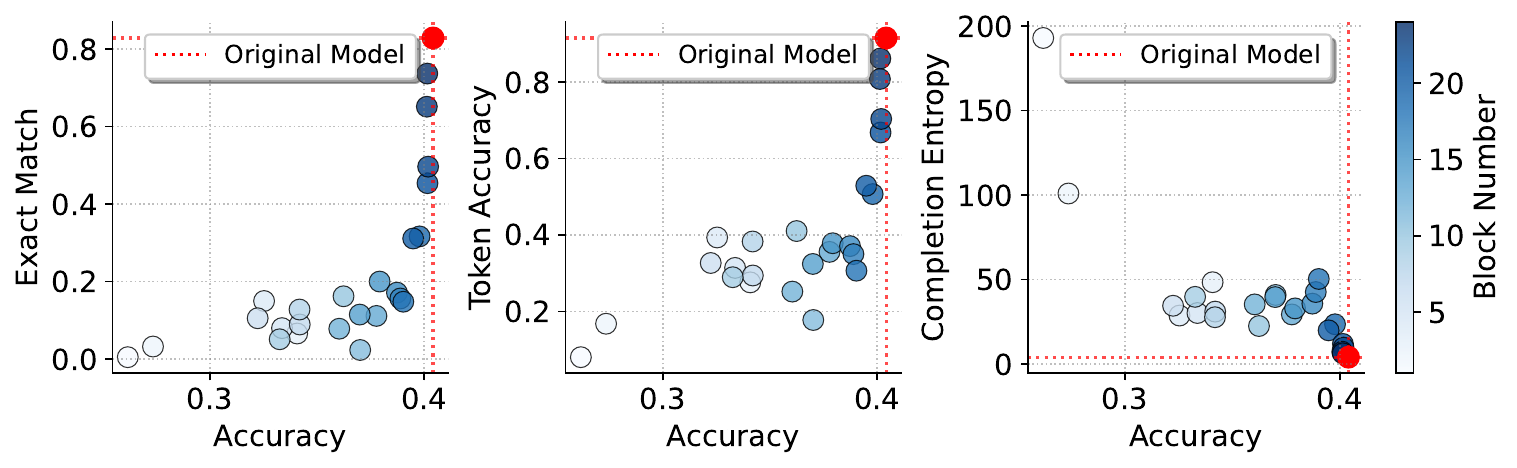}
        \caption{Hellaswag}
    \end{subfigure}
    \vspace{-0.05in}
    \begin{subfigure}[b]{0.83\linewidth}
        \centering
        \includegraphics[width=\linewidth]{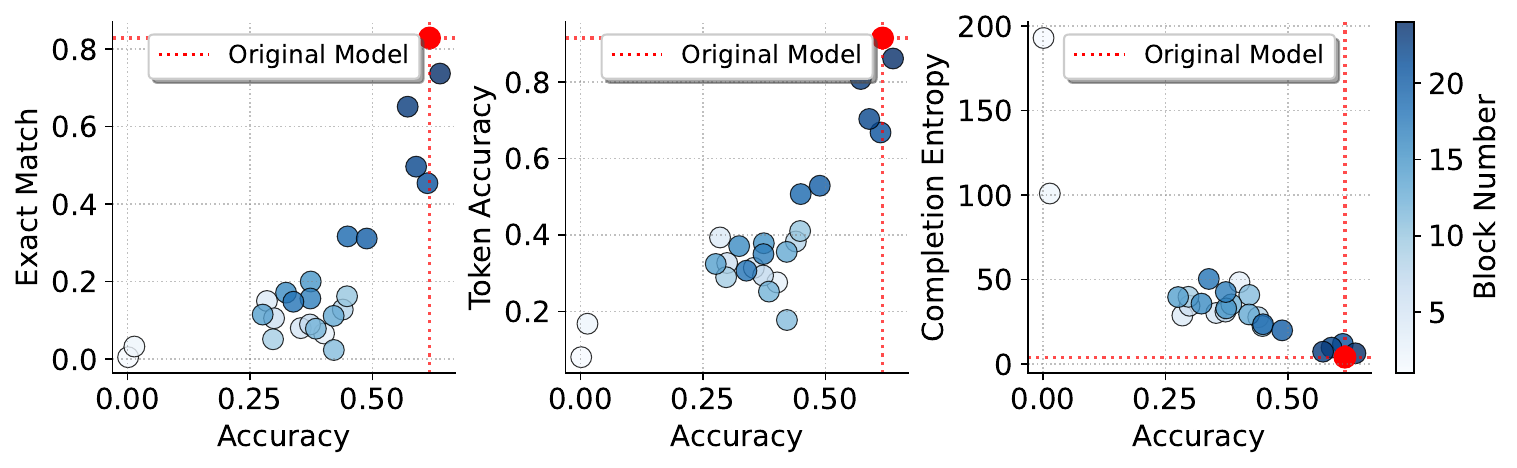}
        \caption{LAMBADA}
    \end{subfigure}
    \begin{subfigure}[b]{0.83\linewidth}
        \centering
        \includegraphics[width=\linewidth]{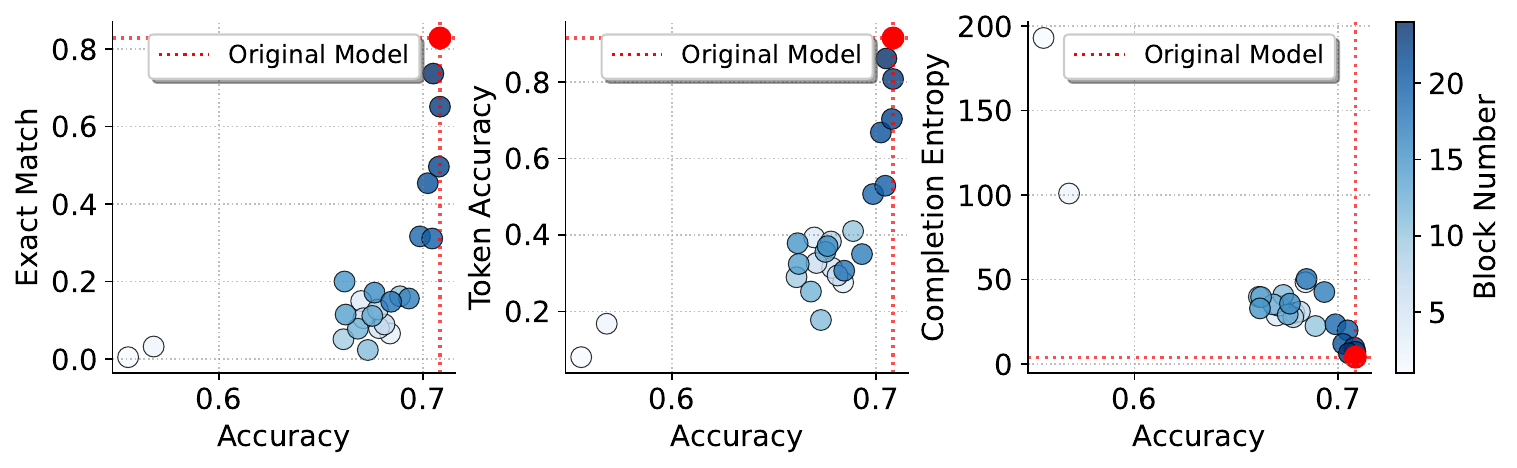}
        \caption{PIQA}
    \end{subfigure}
    \vspace{-0.05in}
    \begin{subfigure}[b]{0.83\linewidth}
        \centering
        \includegraphics[width=\linewidth]{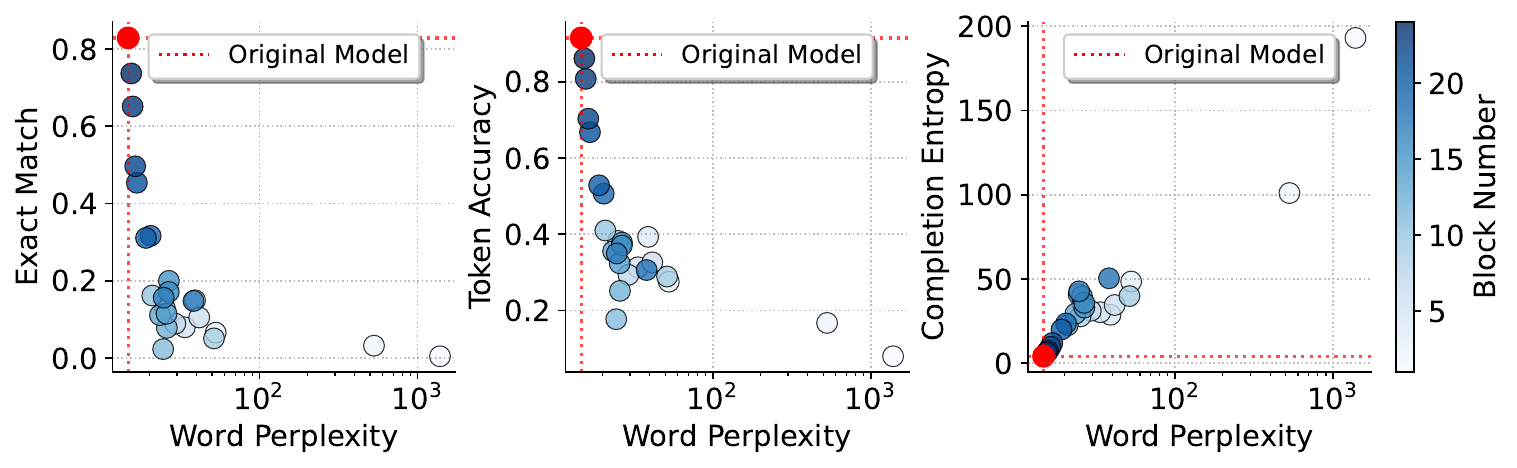}
        \caption{Wikitext}
    \end{subfigure}

    \caption{Memorization vs. Downstream Performance for Pythia-1.4B with short-circuiting applied to the attention mechanism of each block independently. }
    \label{fig:app_memorization_downstream_2}
\end{figure*}

\begin{figure*}[h!]
    \centering
    \begin{subfigure}[b]{0.83\linewidth}
        \centering
        \includegraphics[width=\linewidth]{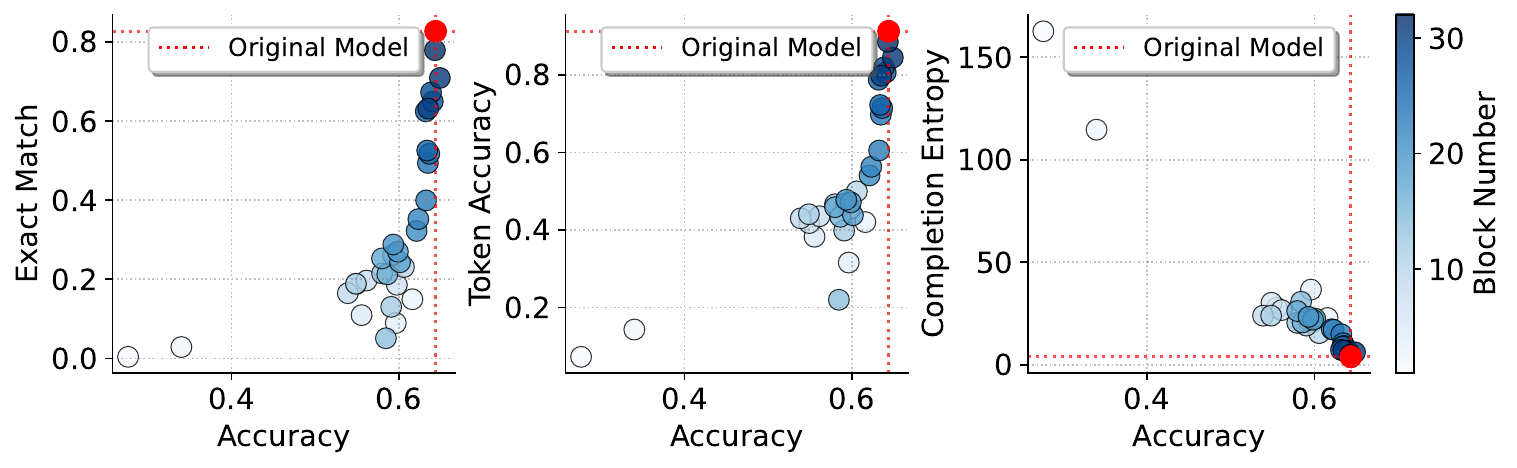}
        \caption{ARC - Easy}
    \end{subfigure}
    \vspace{-0.05in}
    \begin{subfigure}[b]{0.83\linewidth}
        \centering
        \includegraphics[width=\linewidth]{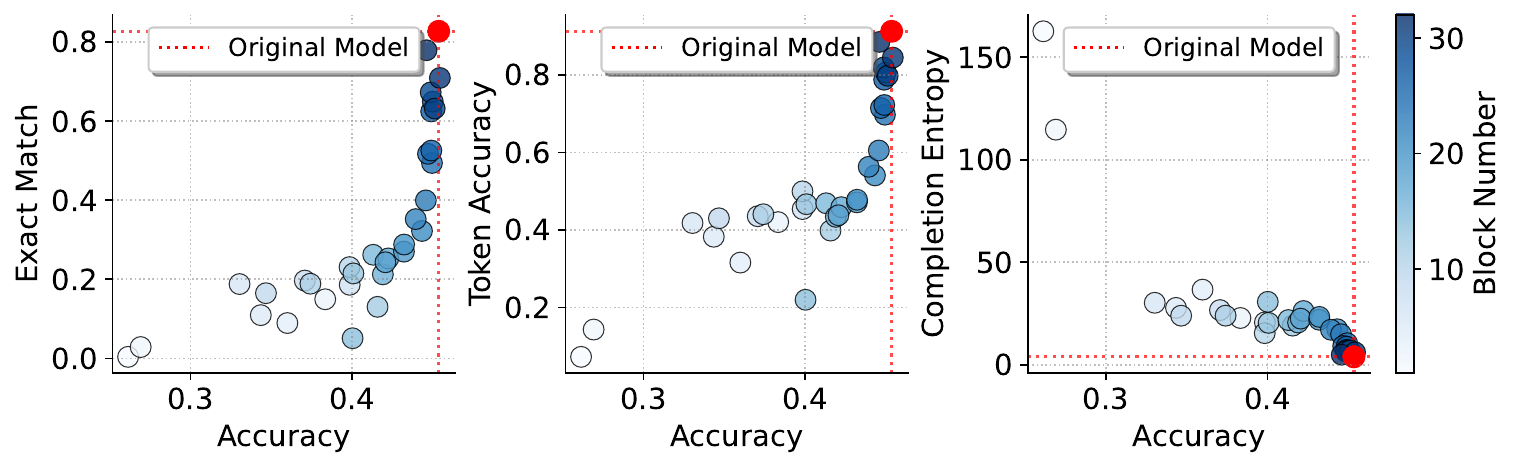}
        \caption{Hellaswag}
    \end{subfigure}
    \vspace{-0.05in}
    \begin{subfigure}[b]{0.83\linewidth}
        \centering
        \includegraphics[width=\linewidth]{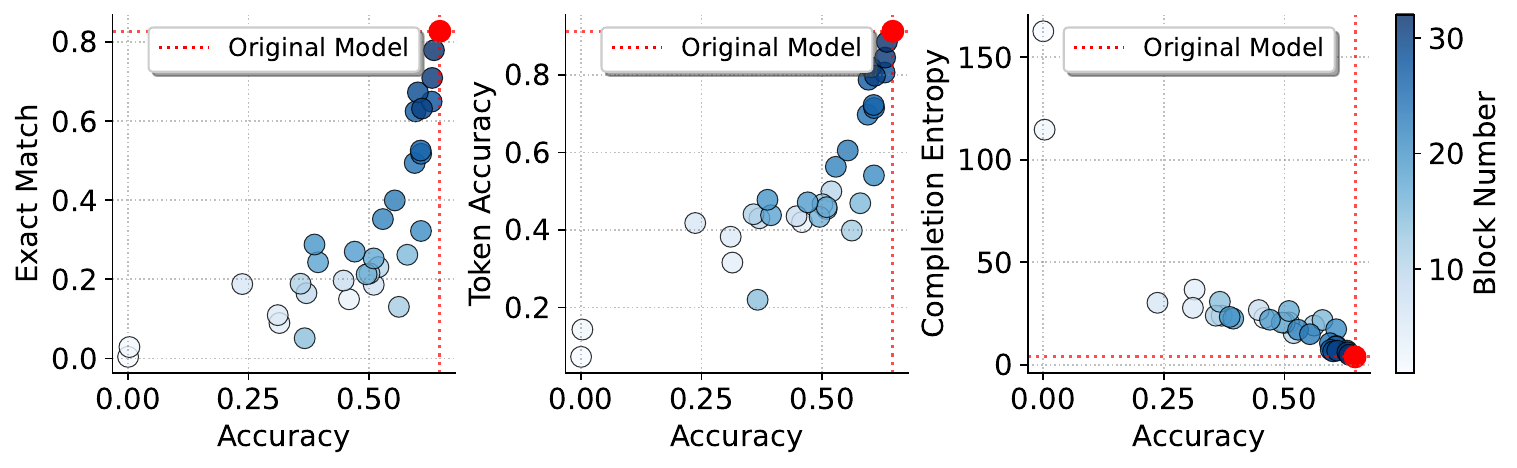}
        \caption{LAMBADA}
    \end{subfigure}
    \begin{subfigure}[b]{0.83\linewidth}
        \centering
        \includegraphics[width=\linewidth]{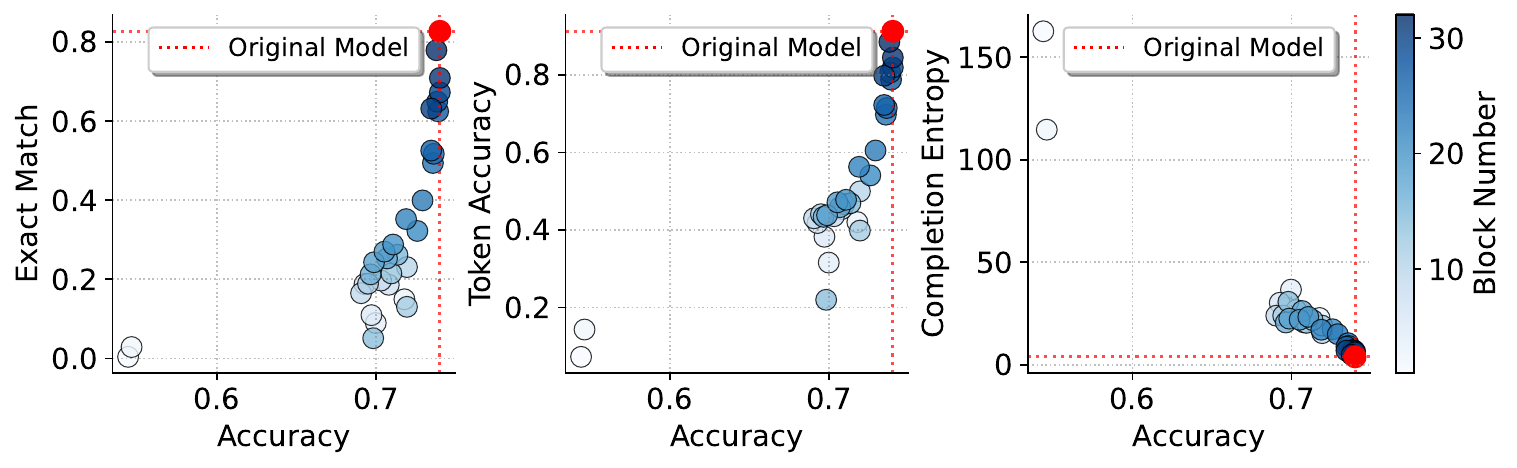}
        \caption{PIQA}
    \end{subfigure}
    \vspace{-0.05in}
    \begin{subfigure}[b]{0.83\linewidth}
        \centering
        \includegraphics[width=\linewidth]{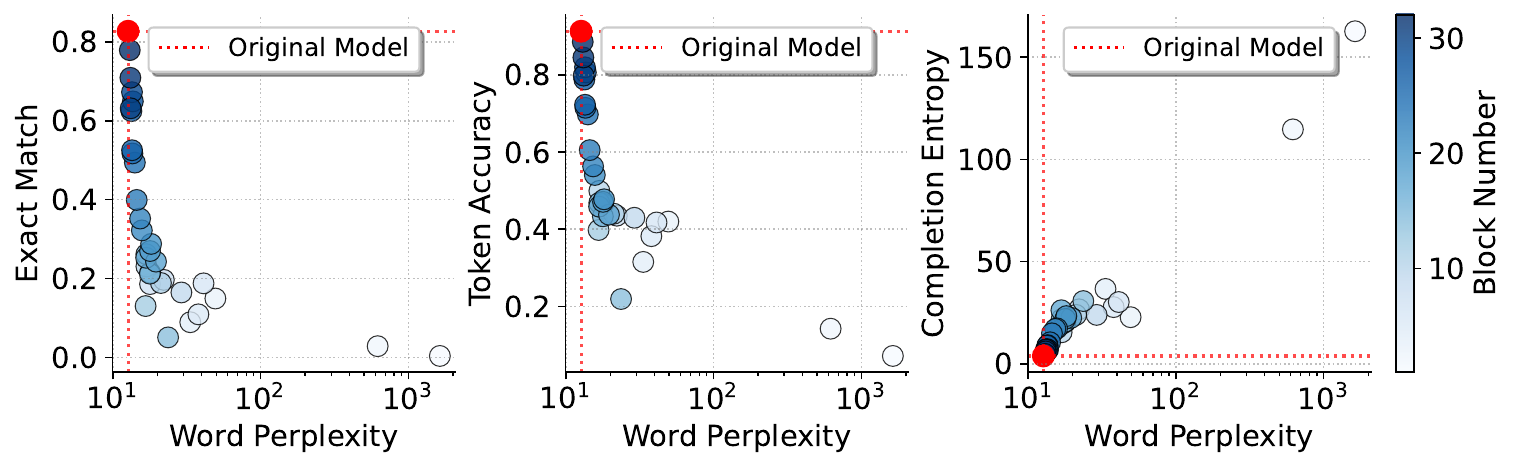}
        \caption{Wikitext}
    \end{subfigure}

    \caption{Memorization vs. Downstream Performance for Pythia-2.8B with short-circuiting applied to the attention mechanism of each block independently. }
    \label{fig:app_memorization_downstream_3}
\end{figure*}

\begin{figure*}[h!]
    \centering
    \begin{subfigure}[b]{0.83\linewidth}
        \centering
        \includegraphics[width=\linewidth]{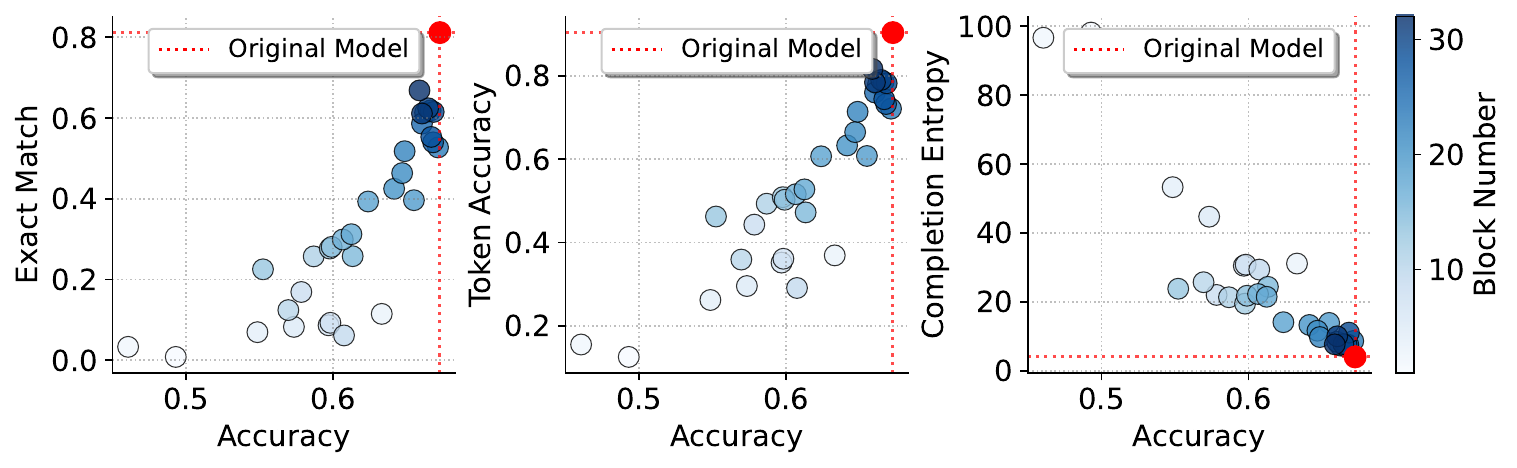}
        \caption{ARC - Easy}
    \end{subfigure}
    \vspace{-0.05in}
    \begin{subfigure}[b]{0.83\linewidth}
        \centering
        \includegraphics[width=\linewidth]{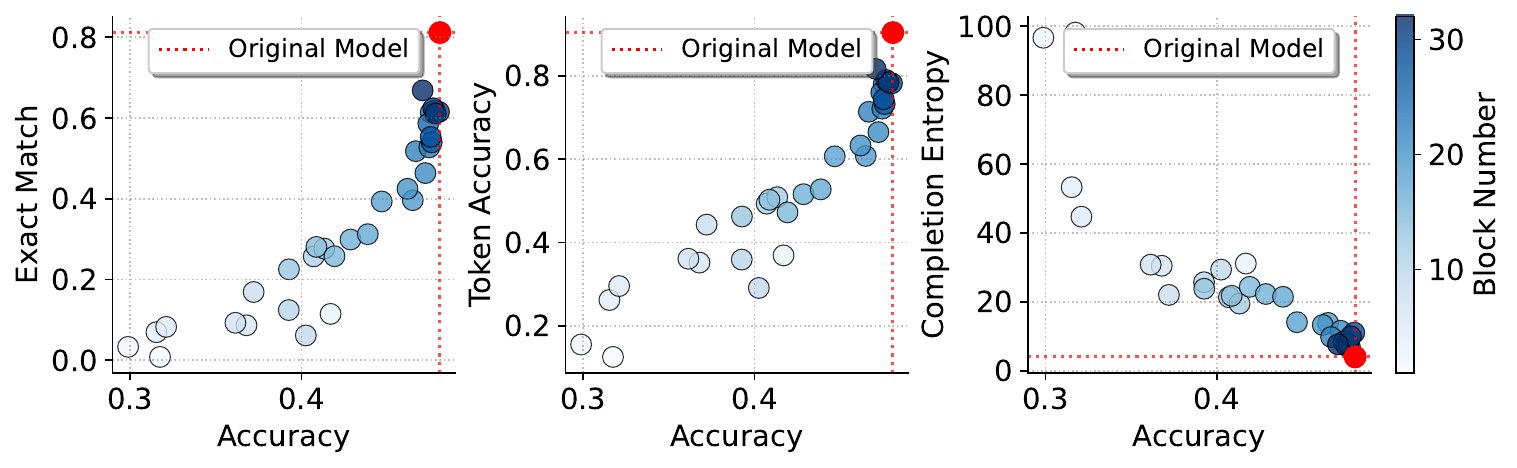}
        \caption{Hellaswag}
    \end{subfigure}
    \vspace{-0.05in}
    \begin{subfigure}[b]{0.83\linewidth}
        \centering
        \includegraphics[width=\linewidth]{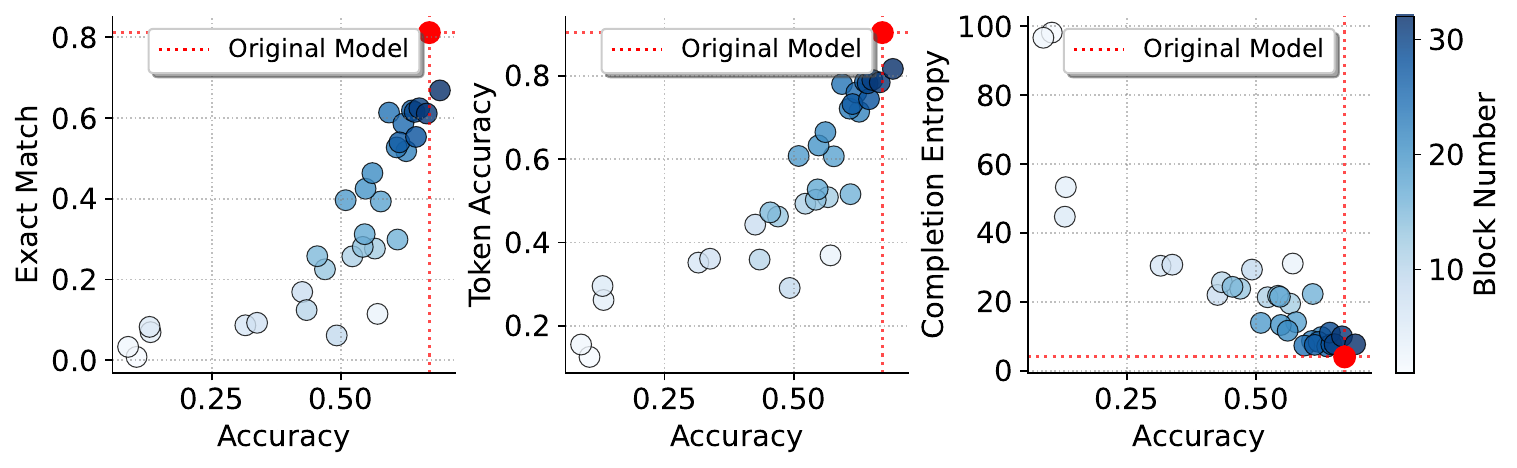}
        \caption{LAMBADA}
    \end{subfigure}
    \begin{subfigure}[b]{0.83\linewidth}
        \centering
        \includegraphics[width=\linewidth]{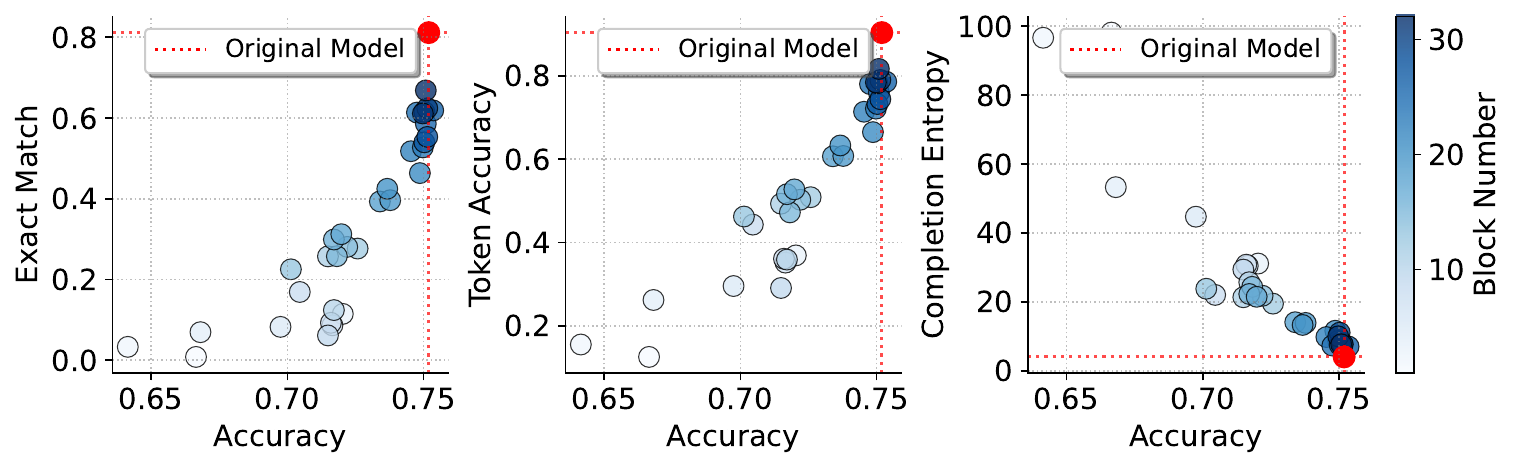}
        \caption{PIQA}
    \end{subfigure}
    \vspace{-0.05in}
    \begin{subfigure}[b]{0.83\linewidth}
        \centering
        \includegraphics[width=\linewidth]{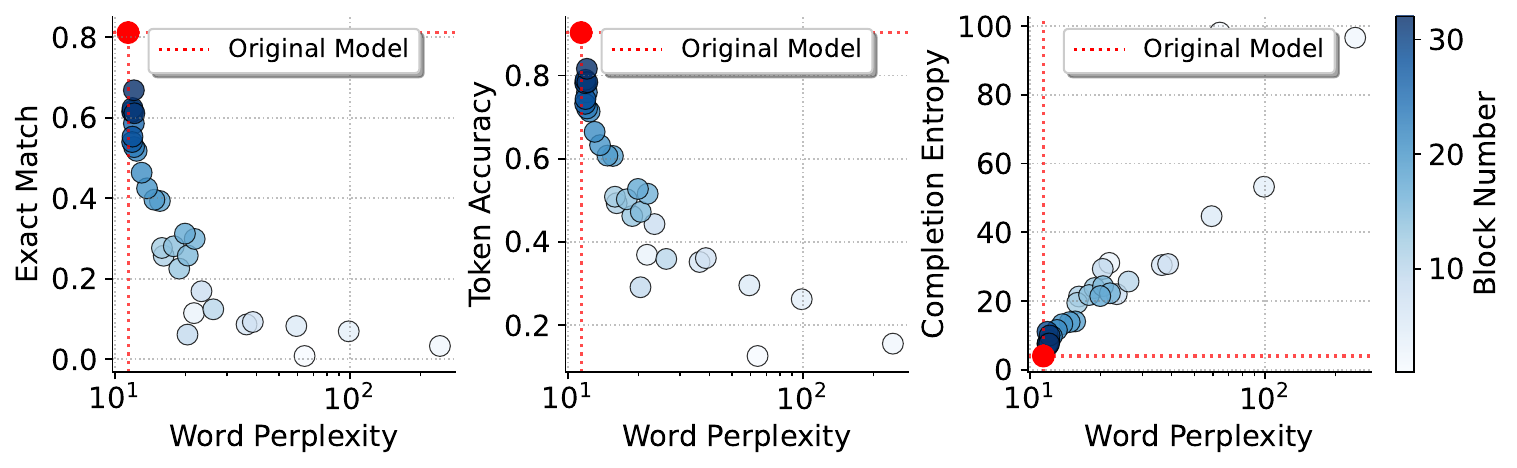}
        \caption{Wikitext}
    \end{subfigure}

    \caption{Memorization vs. Downstream Performance for Pythia-6.9B with short-circuiting applied to the attention mechanism of each block independently. }
    \label{fig:app_memorization_downstream_4}
\end{figure*}

\begin{figure*}[h!]
    \centering
    \begin{subfigure}[b]{0.83\linewidth}
        \centering
        \includegraphics[width=\linewidth]{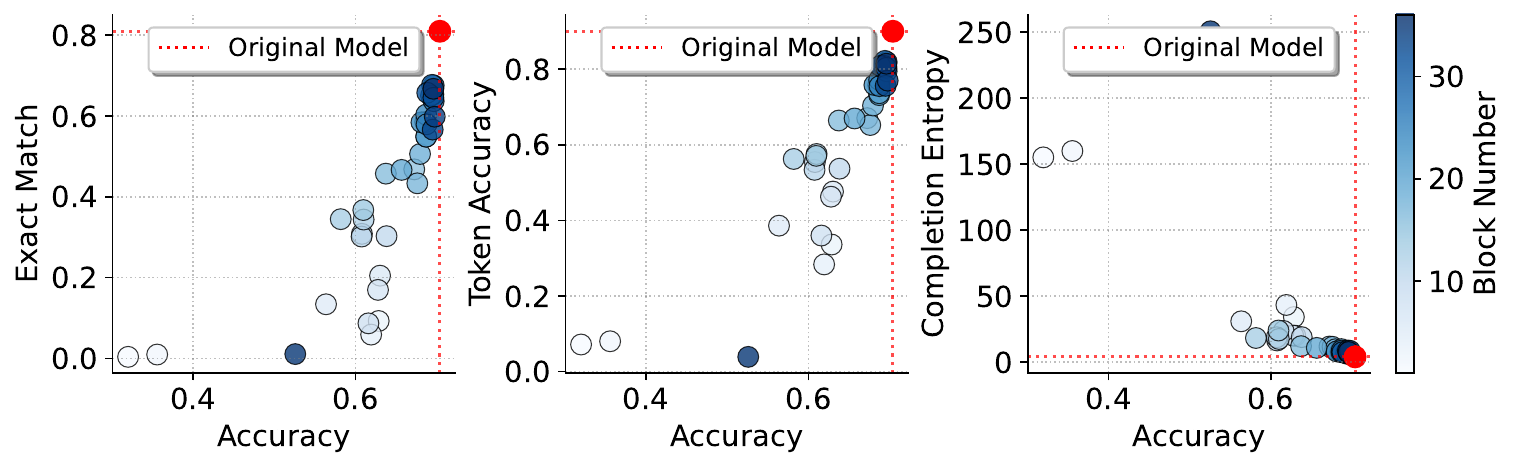}
        \caption{ARC - Easy}
    \end{subfigure}
    \vspace{-0.05in}
    \begin{subfigure}[b]{0.83\linewidth}
        \centering
        \includegraphics[width=\linewidth]{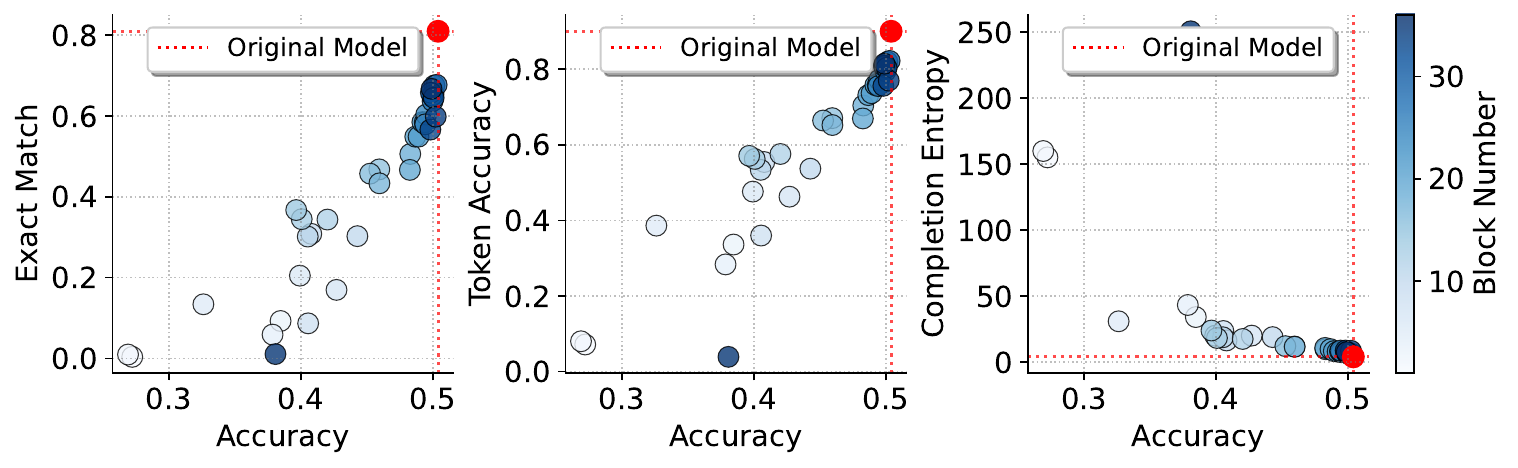}
        \caption{Hellaswag}
    \end{subfigure}
    \vspace{-0.05in}
    \begin{subfigure}[b]{0.83\linewidth}
        \centering
        \includegraphics[width=\linewidth]{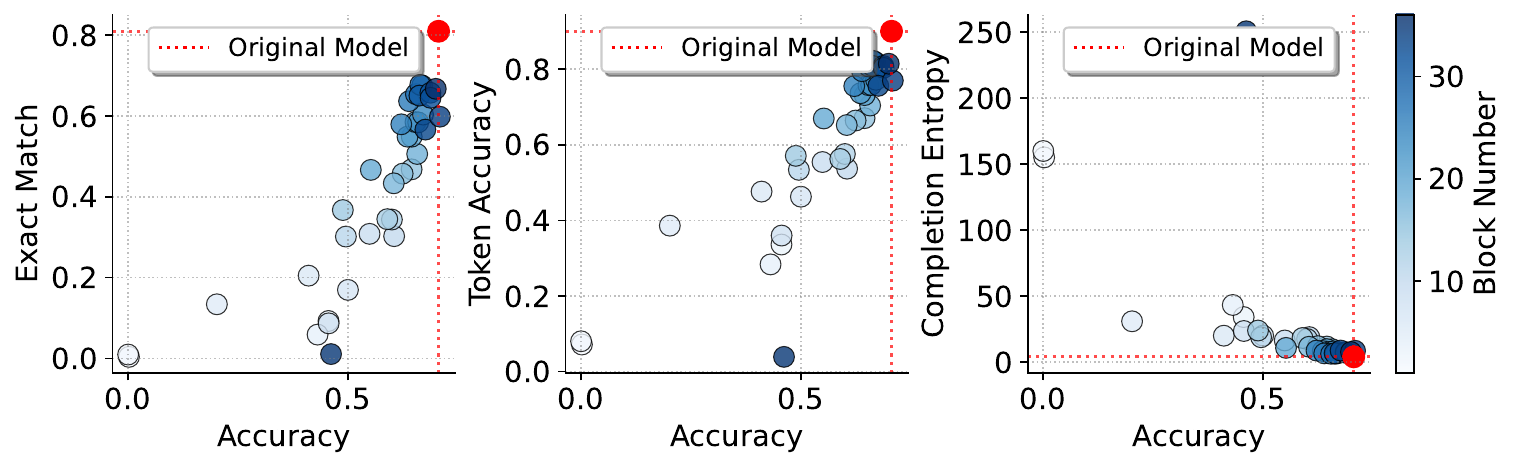}
        \caption{LAMBADA}
    \end{subfigure}
    \begin{subfigure}[b]{0.83\linewidth}
        \centering
        \includegraphics[width=\linewidth]{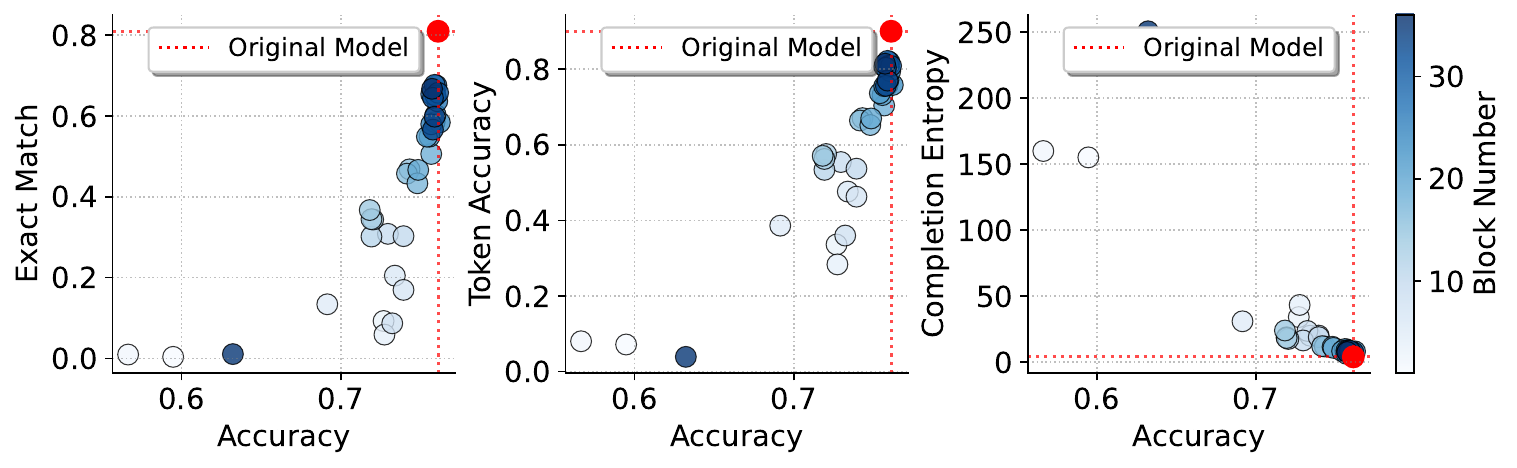}
        \caption{PIQA}
    \end{subfigure}
    \vspace{-0.05in}
    \begin{subfigure}[b]{0.83\linewidth}
        \centering
        \includegraphics[width=\linewidth]{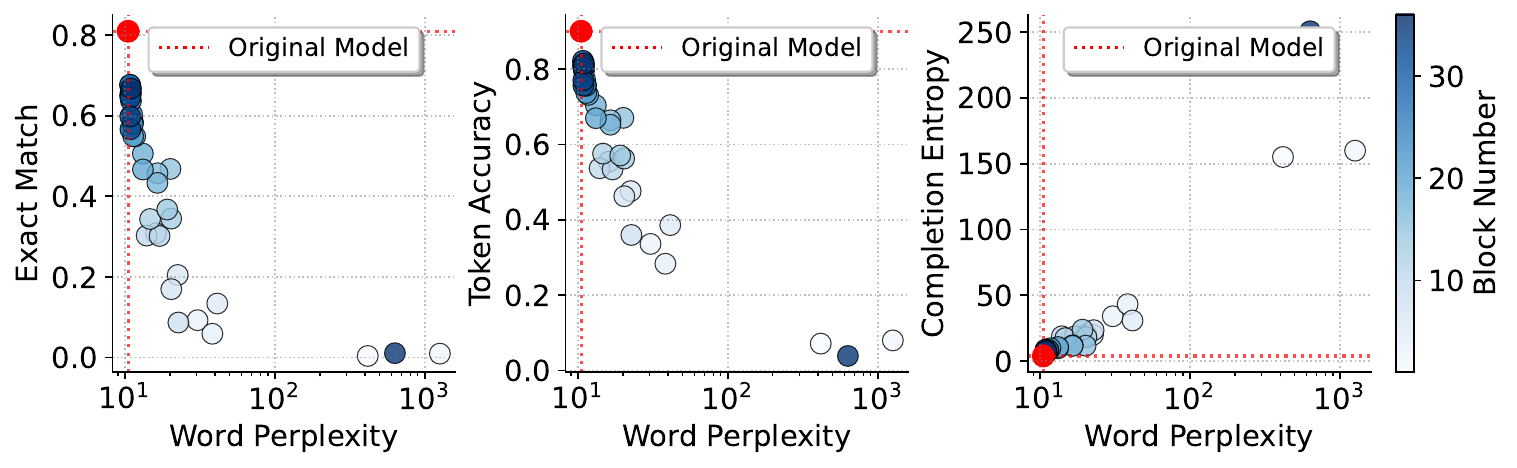}
        \caption{Wikitext}
    \end{subfigure}

    \caption{Memorization vs. Downstream Performance for Pythia-12B with short-circuiting applied to the attention mechanism of each block independently. }
    \label{fig:app_memorization_downstream_5}
\end{figure*}

\subsubsection{Applying Short-Circuiting to Multiple Layers}
\looseness=-1 Our experiments in Sec.~\ref{sec:experiments} mainly focused on the situation wherein short-circuiting is applied to a single attention block of an LLM. As discussed previously, short-circuiting applied to the attention mechanism in the last quartile yields a significant drop in memorization without significantly affecting downstream performance, motivating further investigation by short-circuiting \textbf{groups} of attention blocks, which we demonstrate in Figs.~\ref{fig:app_quartile_1}-\ref{fig:app_quartile_6}. Across all models, applying short-circuiting to all layers in a quartile results in \textit{near-zero} exact match memorization. Additionally, we notice a similar trend as the single-layer experiments, with short-circuiting applied to the layers in the final quartile of the model resulting in the least loss in downstream performance. This is an encouraging finding, demonstrating the absence of verbatim memorization while retaining most of the original performance on downstream tasks.

\begin{figure*}[h!]
    \centering
    \begin{subfigure}[b]{0.83\linewidth}
        \centering
        \includegraphics[width=\linewidth]{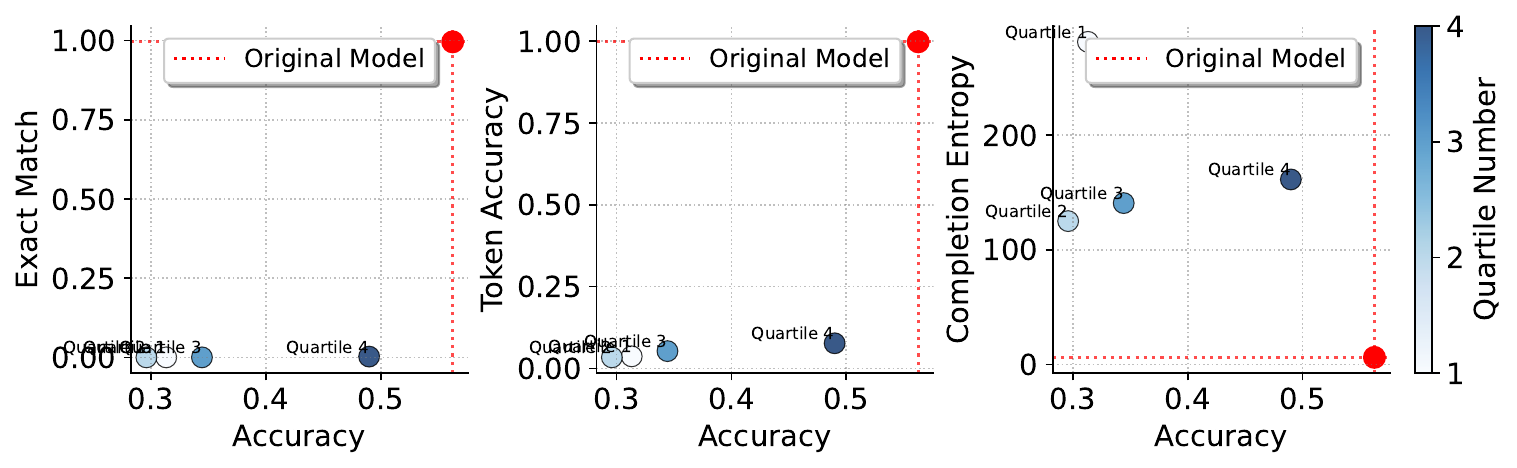}
        \caption{ARC - Easy}
    \end{subfigure}
    \vspace{-0.05in}
    \begin{subfigure}[b]{0.83\linewidth}
        \centering
        \includegraphics[width=\linewidth]{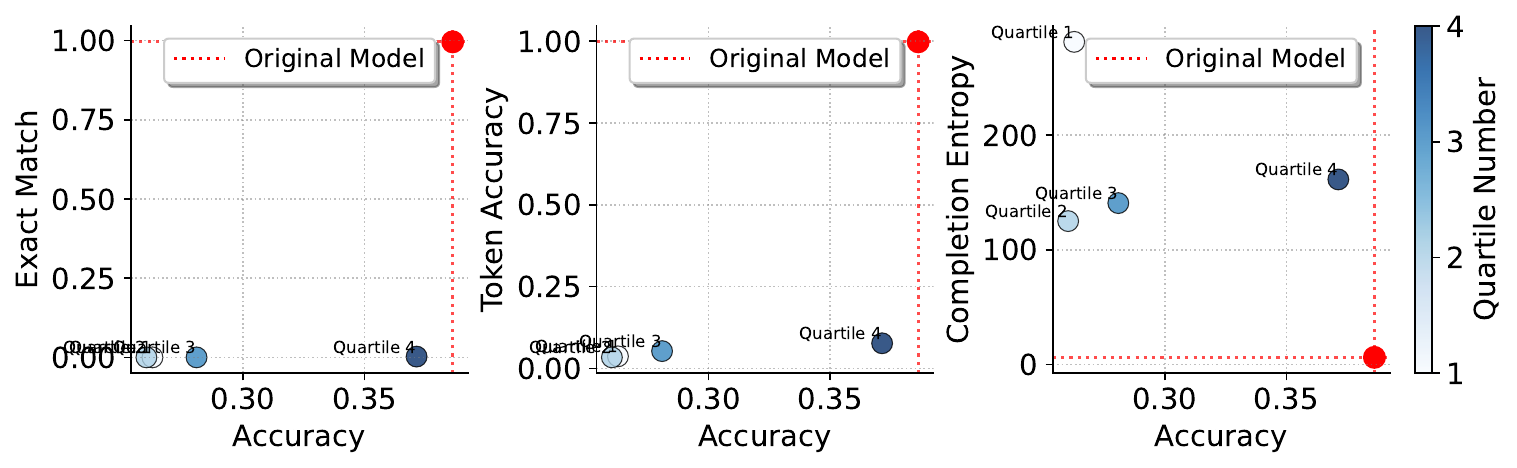}
        \caption{Hellaswag}
    \end{subfigure}
    \vspace{-0.05in}
    \begin{subfigure}[b]{0.83\linewidth}
        \centering
        \includegraphics[width=\linewidth]{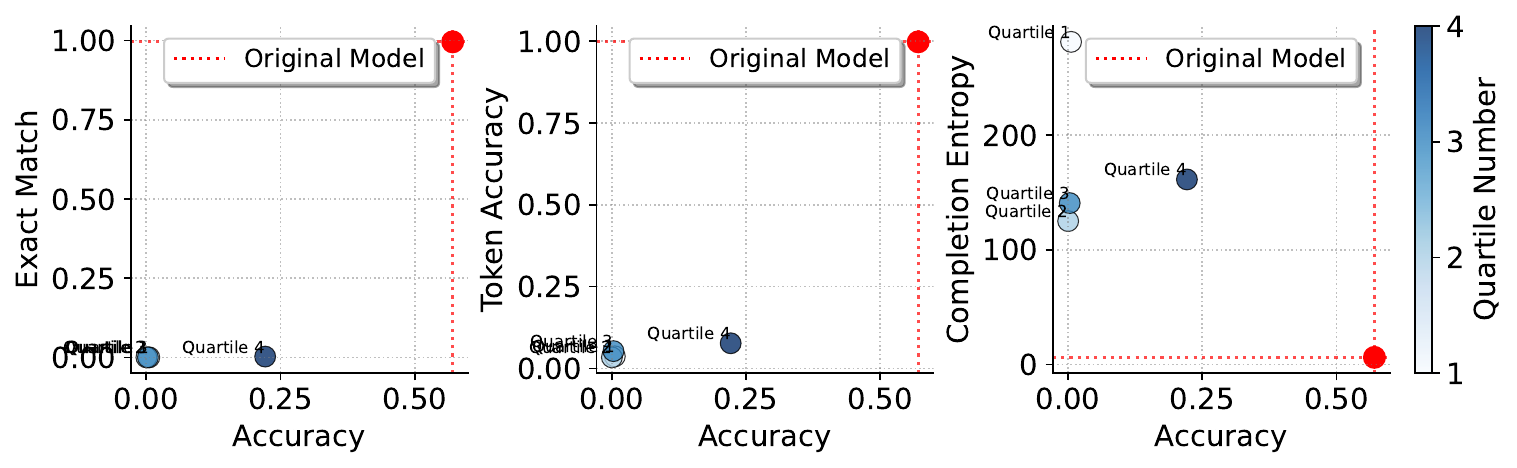}
        \caption{LAMBADA}
    \end{subfigure}
    \begin{subfigure}[b]{0.83\linewidth}
        \centering
        \includegraphics[width=\linewidth]{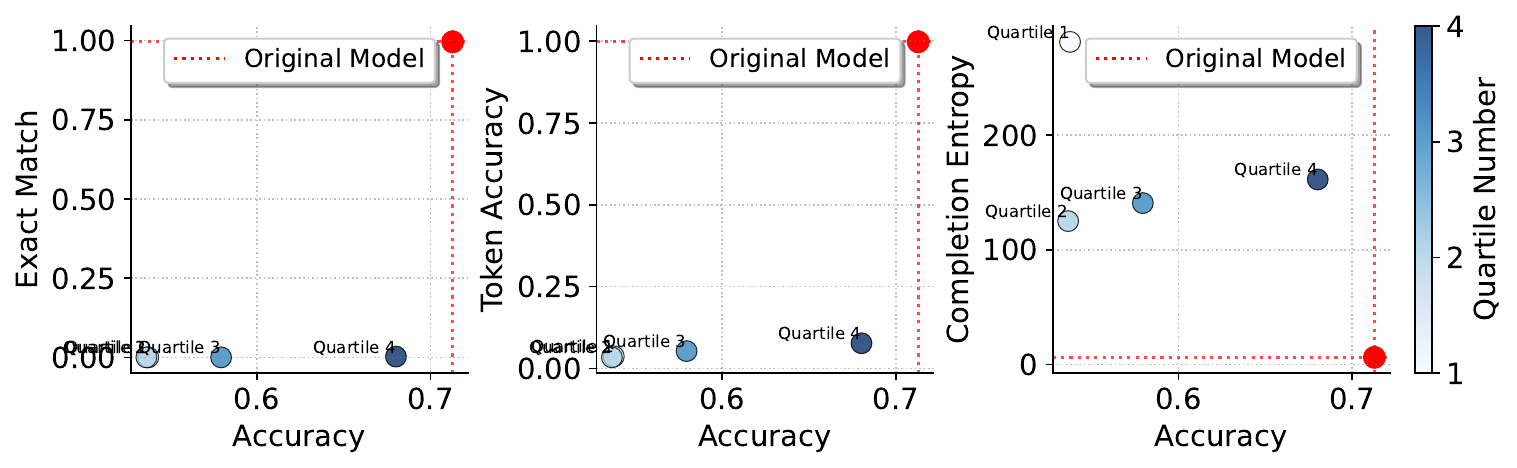}
        \caption{PIQA}
    \end{subfigure}
    \vspace{-0.05in}
    \begin{subfigure}[b]{0.83\linewidth}
        \centering
        \includegraphics[width=\linewidth]{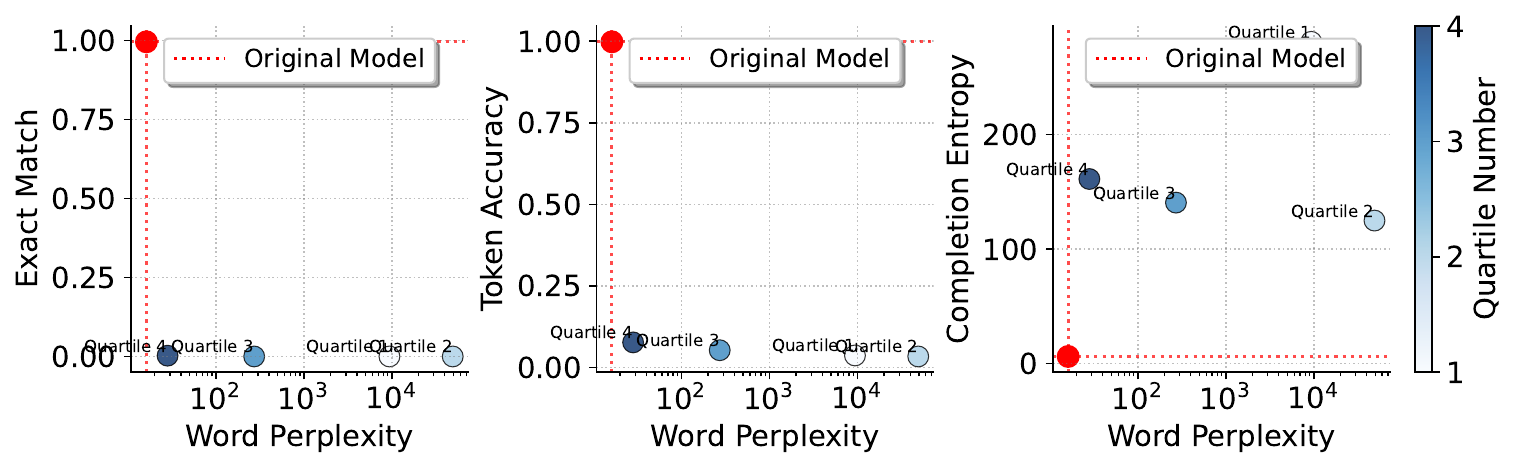}
        \caption{Wikitext}
    \end{subfigure}

    \caption{Memorization vs. Downstream Performance for GTPNeo-1.3B with short-circuiting applied to all attention blocks in each quartile of the model layers.}
    \label{fig:app_quartile_1}
\end{figure*}

\begin{figure*}[h!]
    \centering
    \begin{subfigure}[b]{0.83\linewidth}
        \centering
        \includegraphics[width=\linewidth]{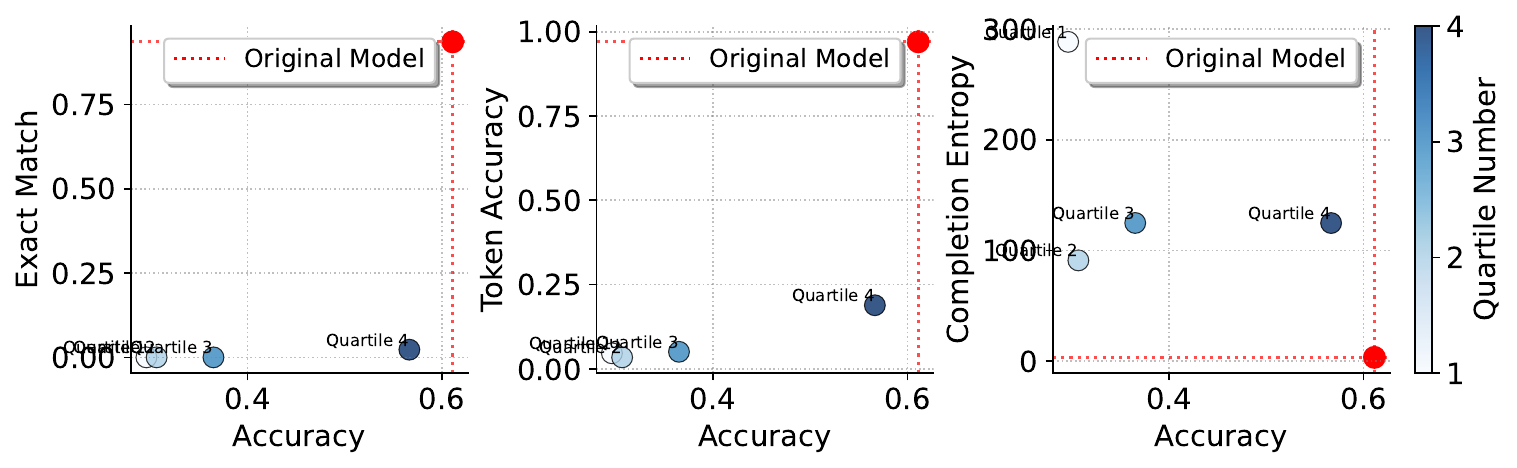}
        \caption{ARC - Easy}
    \end{subfigure}
    \vspace{-0.05in}
    \begin{subfigure}[b]{0.83\linewidth}
        \centering
        \includegraphics[width=\linewidth]{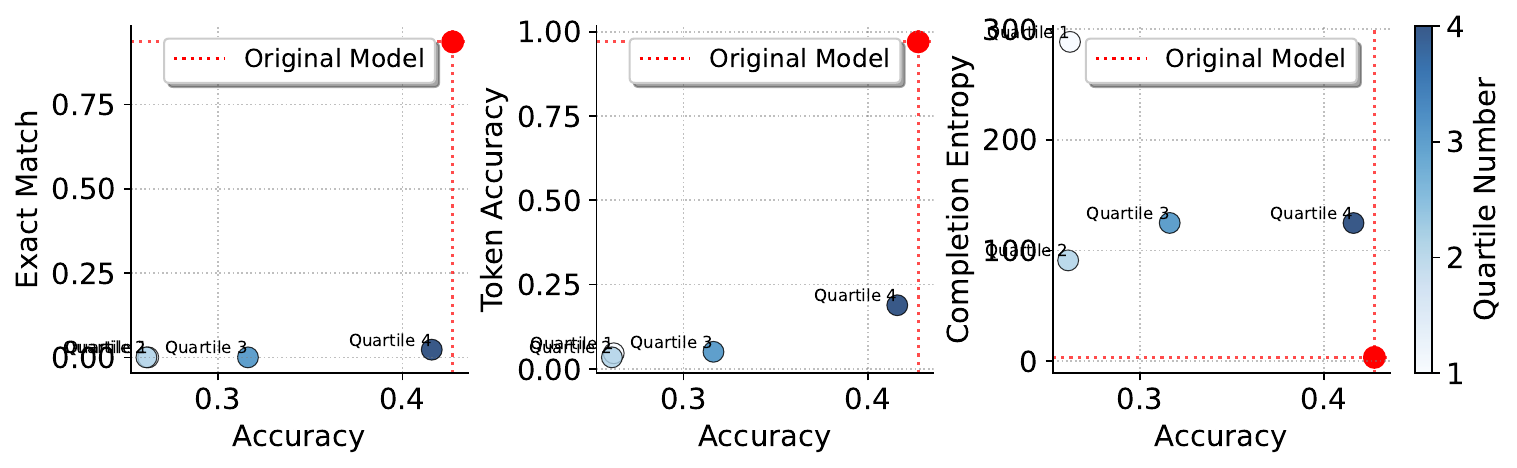}
        \caption{Hellaswag}
    \end{subfigure}
    \vspace{-0.05in}
    \begin{subfigure}[b]{0.83\linewidth}
        \centering
        \includegraphics[width=\linewidth]{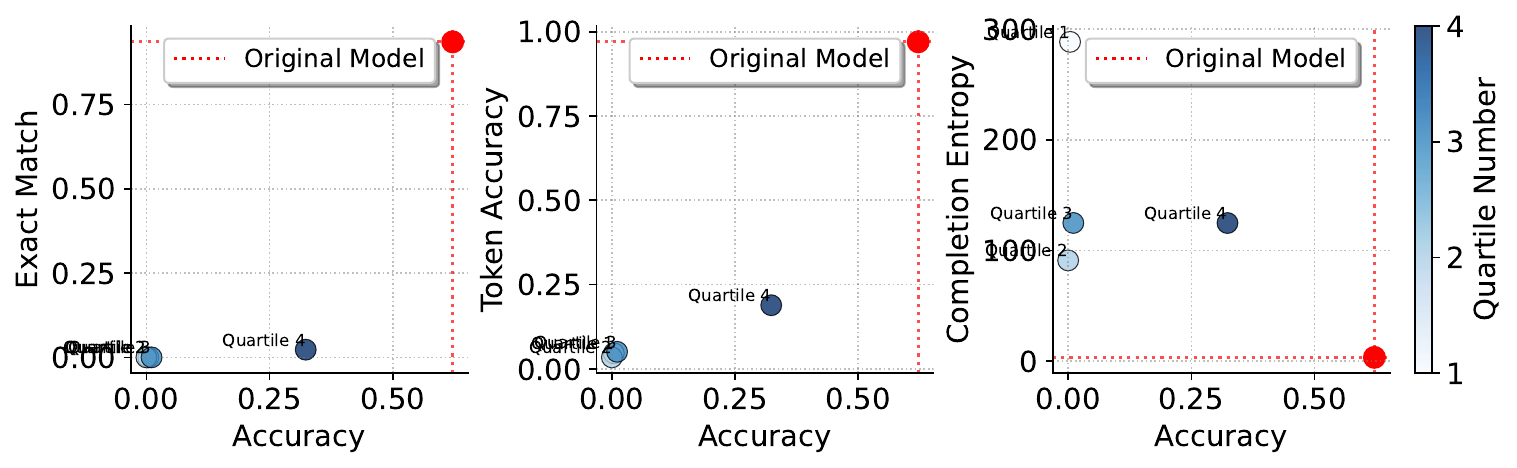}
        \caption{LAMBADA}
    \end{subfigure}
    \begin{subfigure}[b]{0.83\linewidth}
        \centering
        \includegraphics[width=\linewidth]{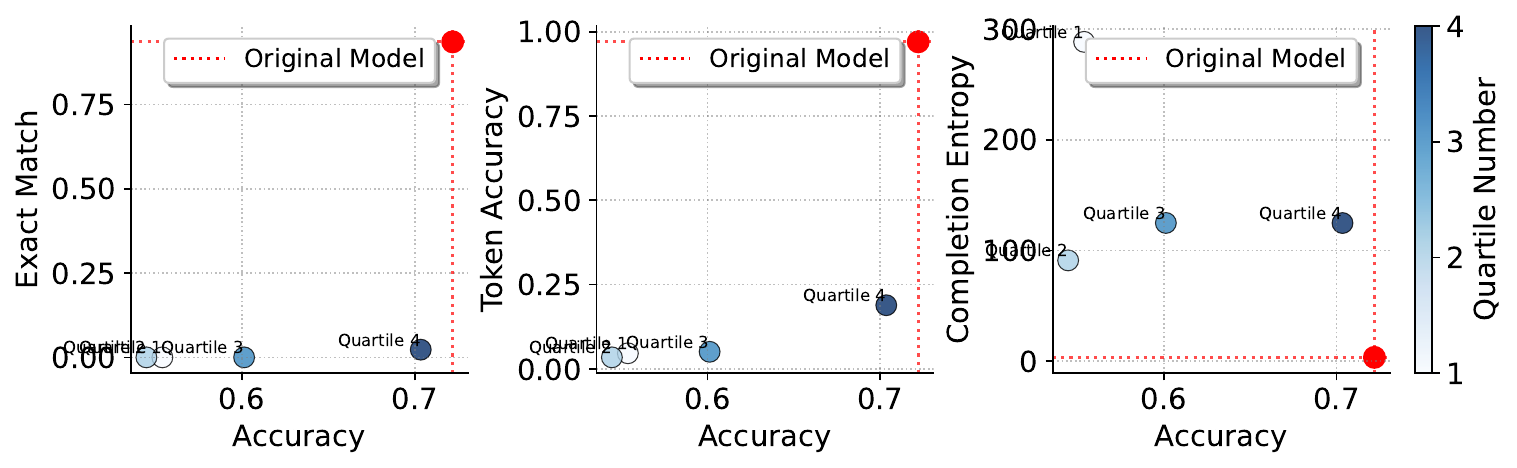}
        \caption{PIQA}
    \end{subfigure}
    \vspace{-0.05in}
    \begin{subfigure}[b]{0.83\linewidth}
        \centering
        \includegraphics[width=\linewidth]{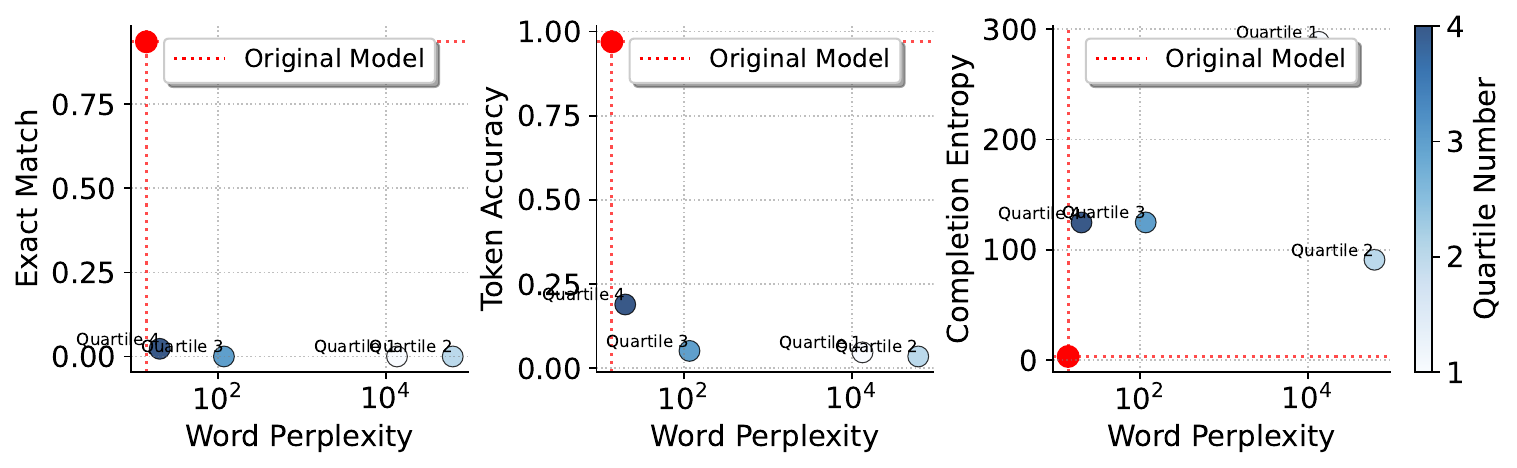}
        \caption{Wikitext}
    \end{subfigure}

    \caption{Memorization vs. Downstream Performance for GTPNeo-2.7B with short-circuiting applied to all attention blocks in each quartile of the model layers.}
    \label{fig:app_quartile_2}
\end{figure*}

\begin{figure*}[h!]
    \centering
    \begin{subfigure}[b]{0.83\linewidth}
        \centering
        \includegraphics[width=\linewidth]{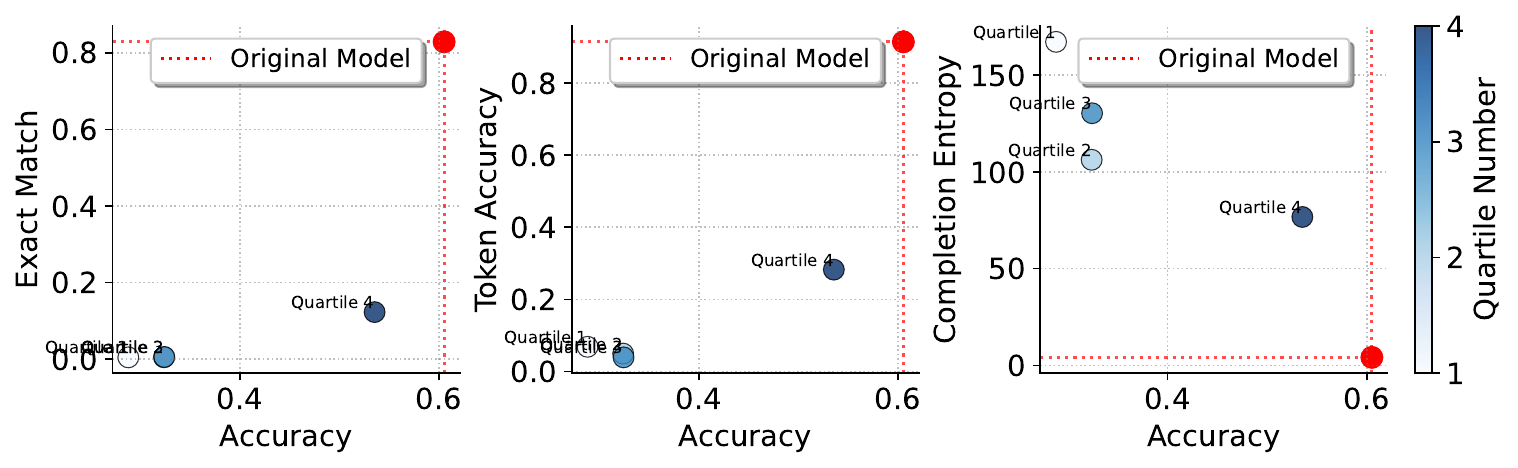}
        \caption{ARC - Easy}
    \end{subfigure}
    \vspace{-0.05in}
    \begin{subfigure}[b]{0.83\linewidth}
        \centering
        \includegraphics[width=\linewidth]{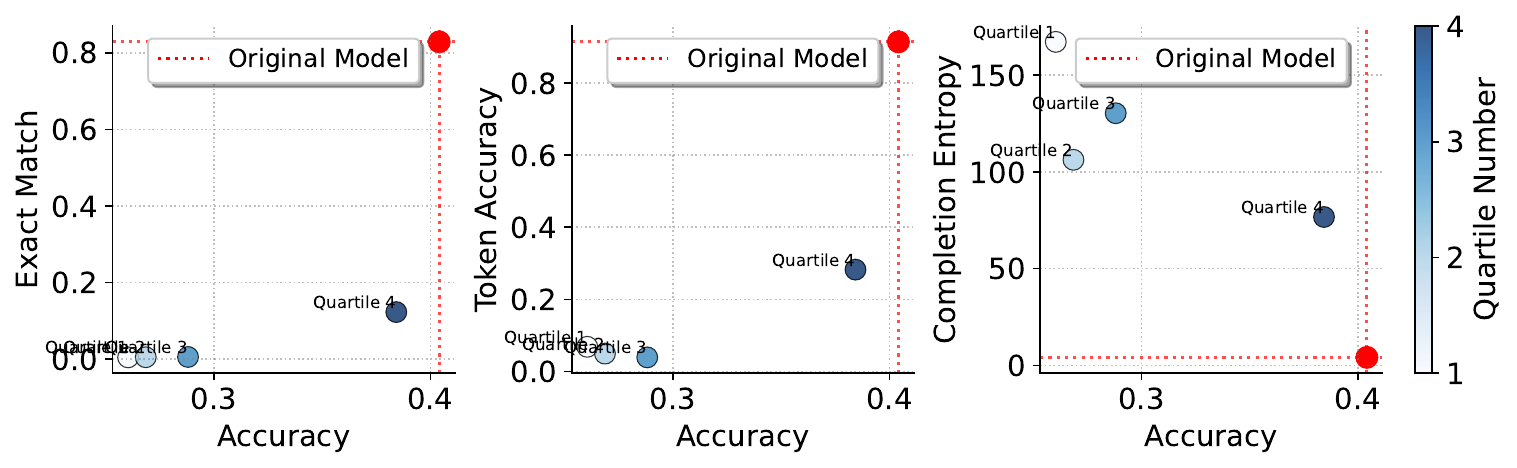}
        \caption{Hellaswag}
    \end{subfigure}
    \vspace{-0.05in}
    \begin{subfigure}[b]{0.83\linewidth}
        \centering
        \includegraphics[width=\linewidth]{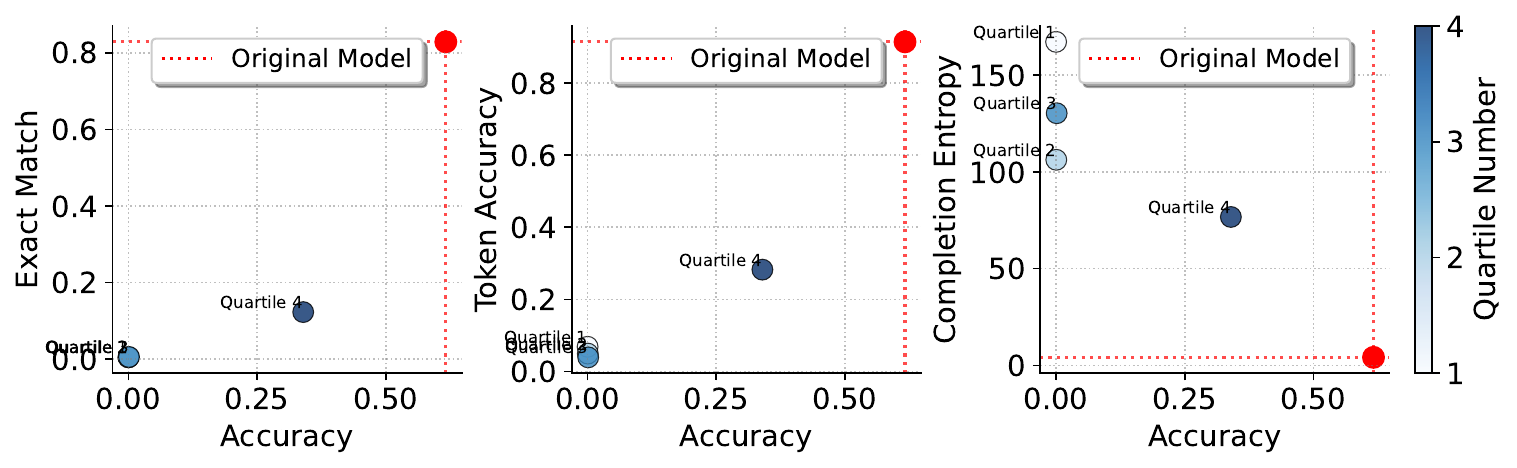}
        \caption{LAMBADA}
    \end{subfigure}
    \begin{subfigure}[b]{0.83\linewidth}
        \centering
        \includegraphics[width=\linewidth]{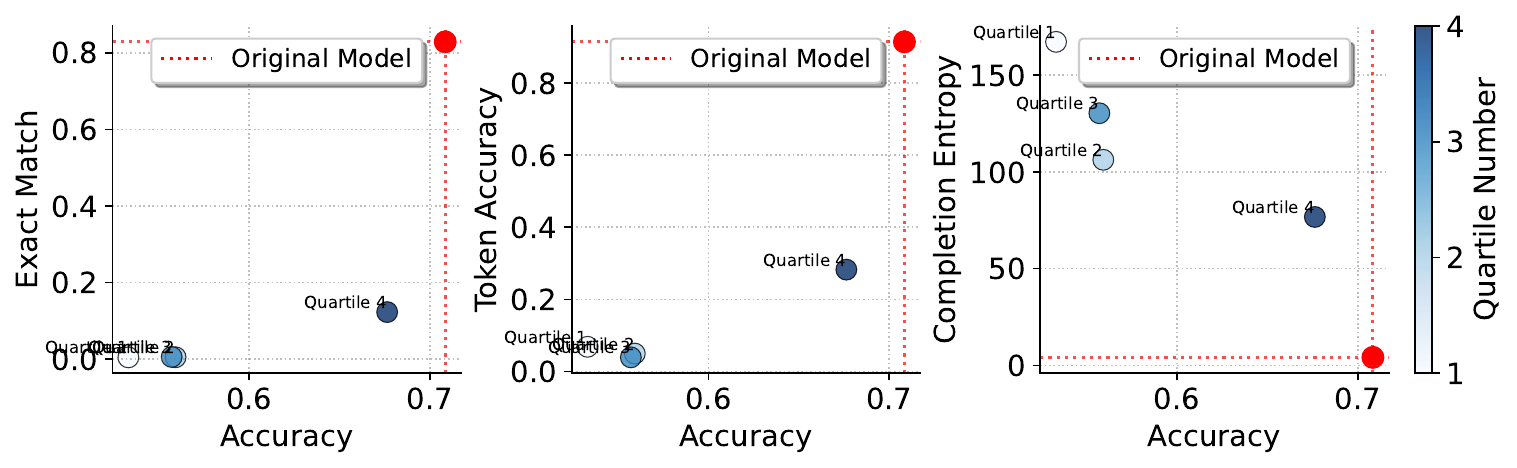}
        \caption{PIQA}
    \end{subfigure}
    \vspace{-0.05in}
    \begin{subfigure}[b]{0.83\linewidth}
        \centering
        \includegraphics[width=\linewidth]{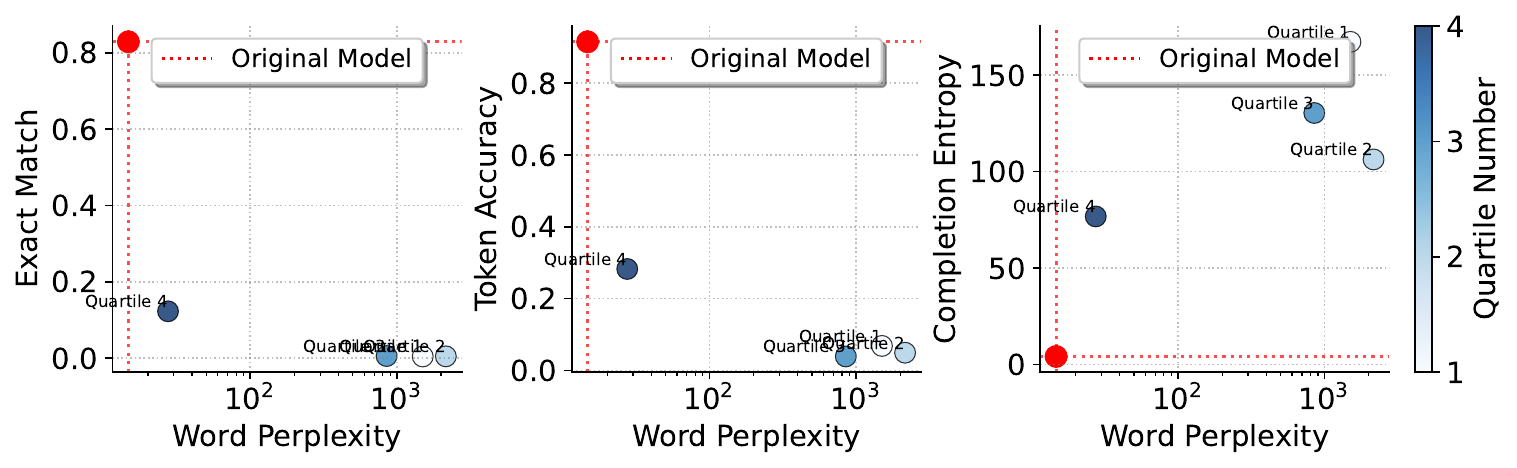}
        \caption{Wikitext}
    \end{subfigure}

    \caption{Memorization vs. Downstream Performance for Pythia-1.4B with short-circuiting applied to all attention blocks in each quartile of the model layers.}
    \label{fig:app_quartile_3}
\end{figure*}

\begin{figure*}[h!]
    \centering
    \begin{subfigure}[b]{0.83\linewidth}
        \centering
        \includegraphics[width=\linewidth]{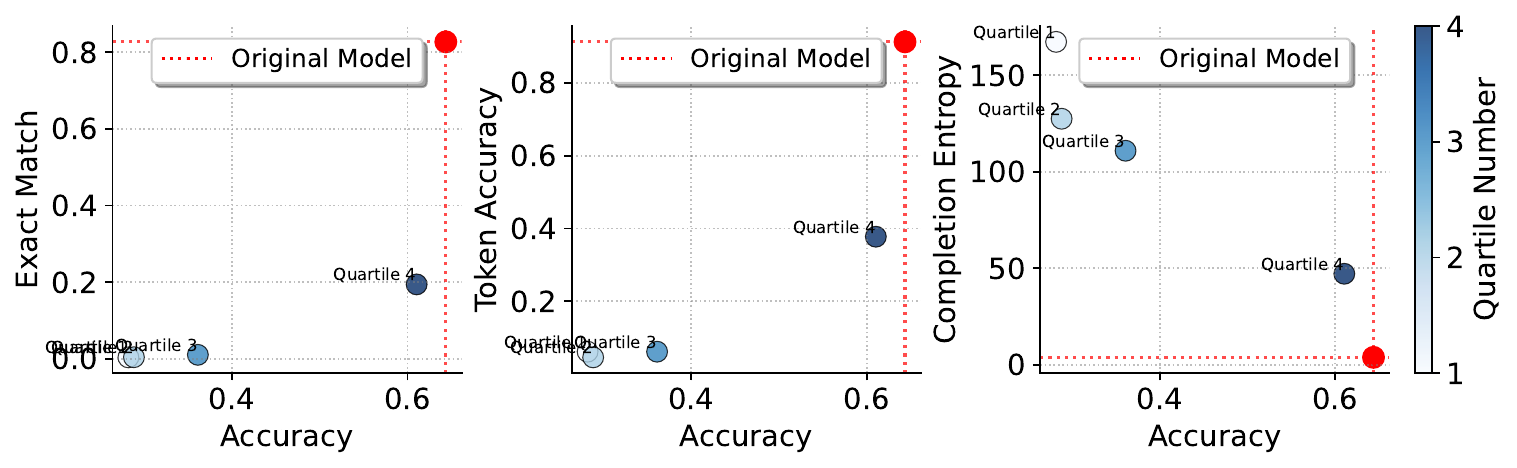}
        \caption{ARC - Easy}
    \end{subfigure}
    \vspace{-0.05in}
    \begin{subfigure}[b]{0.83\linewidth}
        \centering
        \includegraphics[width=\linewidth]{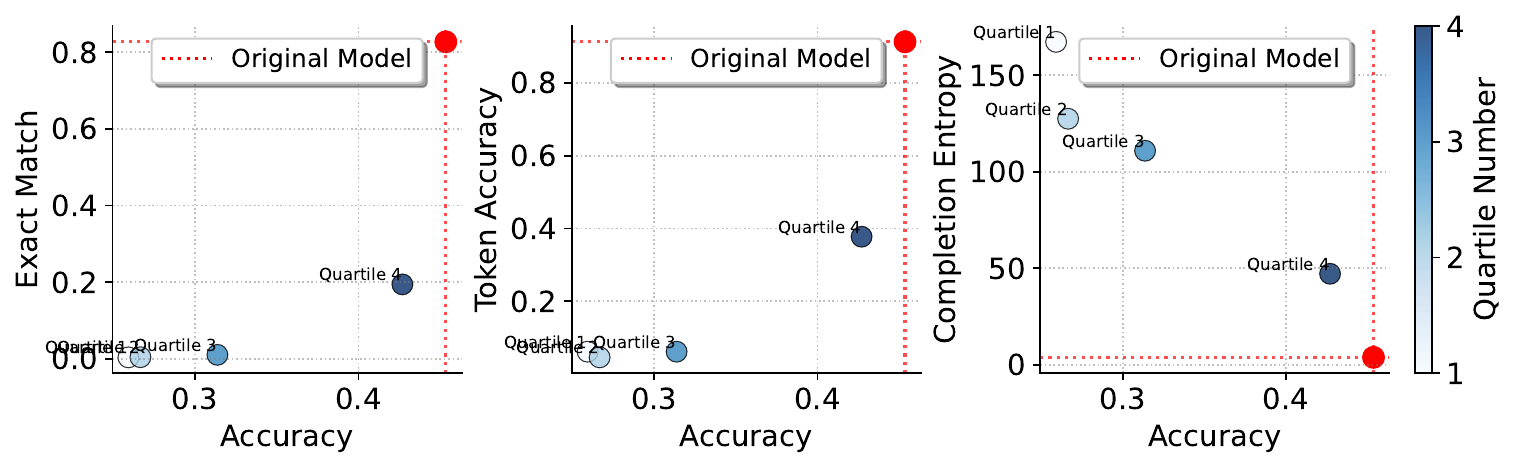}
        \caption{Hellaswag}
    \end{subfigure}
    \vspace{-0.05in}
    \begin{subfigure}[b]{0.83\linewidth}
        \centering
        \includegraphics[width=\linewidth]{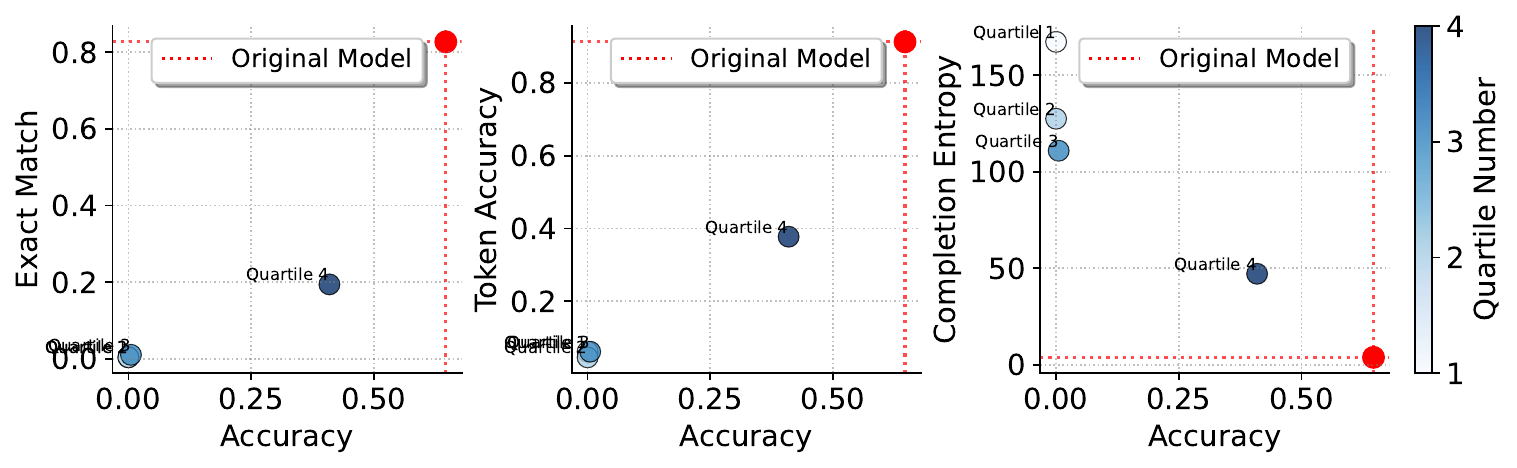}
        \caption{LAMBADA}
    \end{subfigure}
    \begin{subfigure}[b]{0.83\linewidth}
        \centering
        \includegraphics[width=\linewidth]{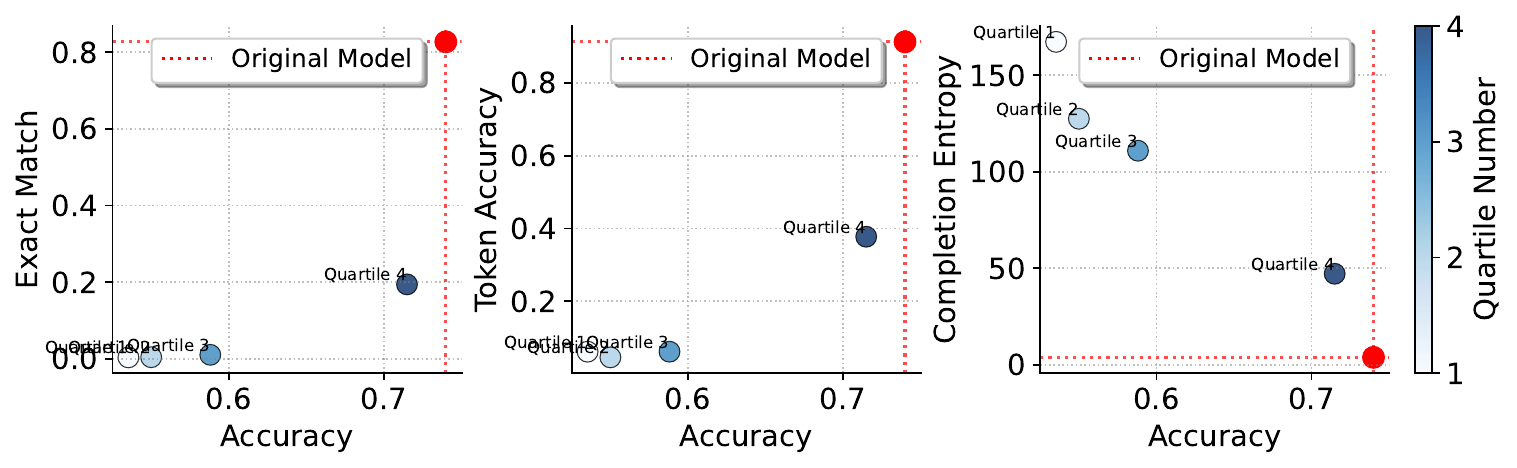}
        \caption{PIQA}
    \end{subfigure}
    \vspace{-0.05in}
    \begin{subfigure}[b]{0.83\linewidth}
        \centering
        \includegraphics[width=\linewidth]{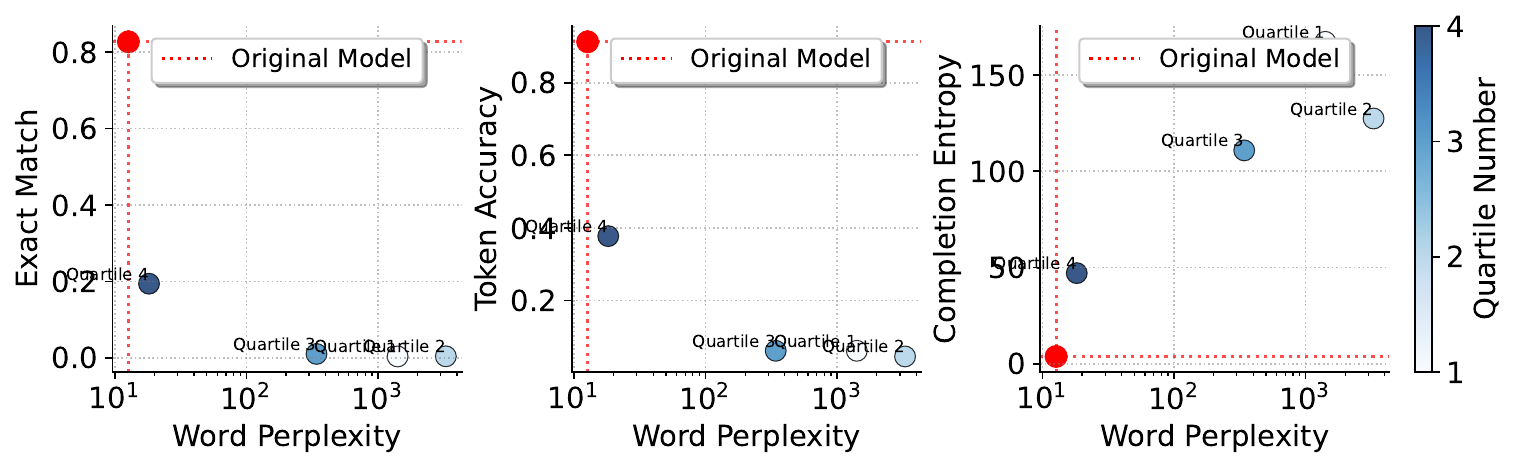}
        \caption{Wikitext}
    \end{subfigure}

    \caption{Memorization vs. Downstream Performance for Pythia-2.8B with short-circuiting applied to all attention blocks in each quartile of the model layers.}
    \label{fig:app_quartile_4}
\end{figure*}

\begin{figure*}[h!]
    \centering
    \begin{subfigure}[b]{0.83\linewidth}
        \centering
        \includegraphics[width=\linewidth]{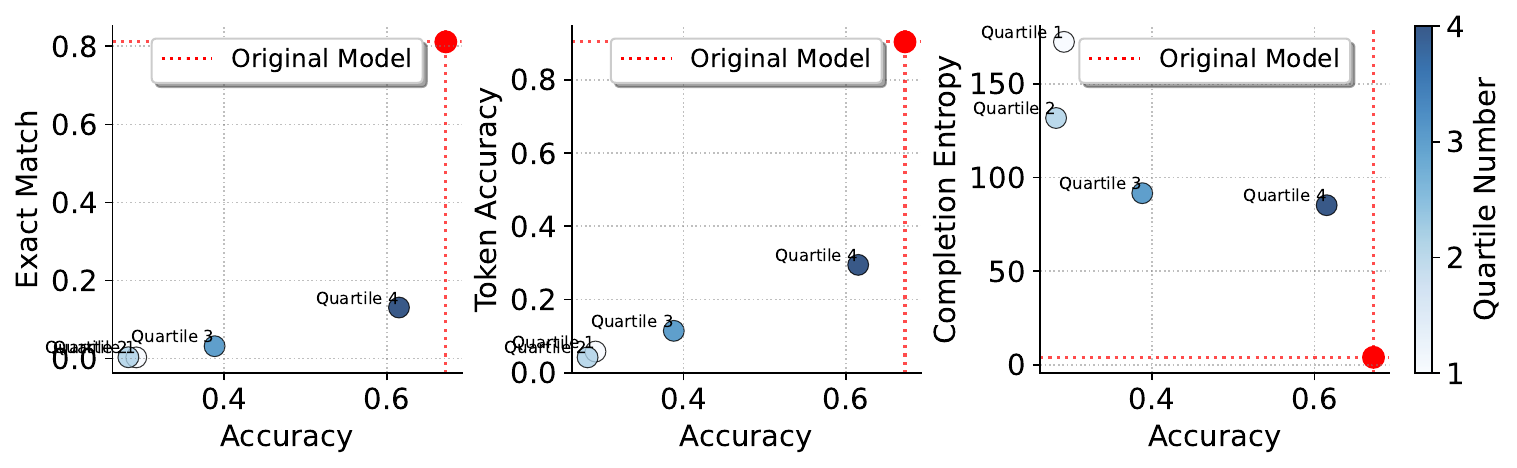}
        \caption{ARC - Easy}
    \end{subfigure}
    \vspace{-0.05in}
    \begin{subfigure}[b]{0.83\linewidth}
        \centering
        \includegraphics[width=\linewidth]{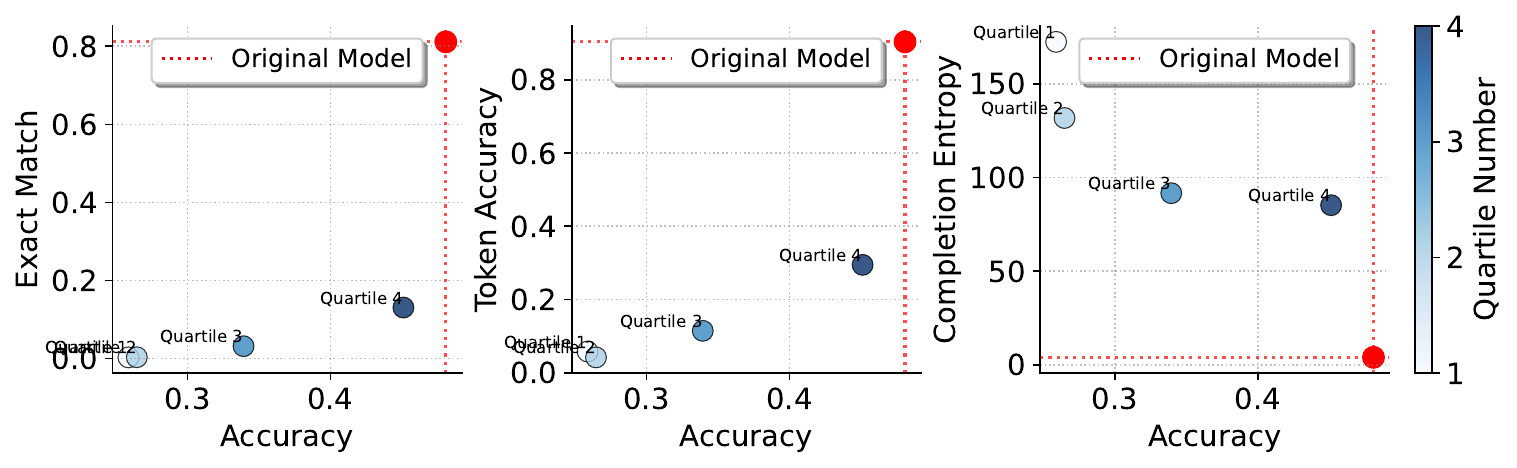}
        \caption{Hellaswag}
    \end{subfigure}
    \vspace{-0.05in}
    \begin{subfigure}[b]{0.83\linewidth}
        \centering
        \includegraphics[width=\linewidth]{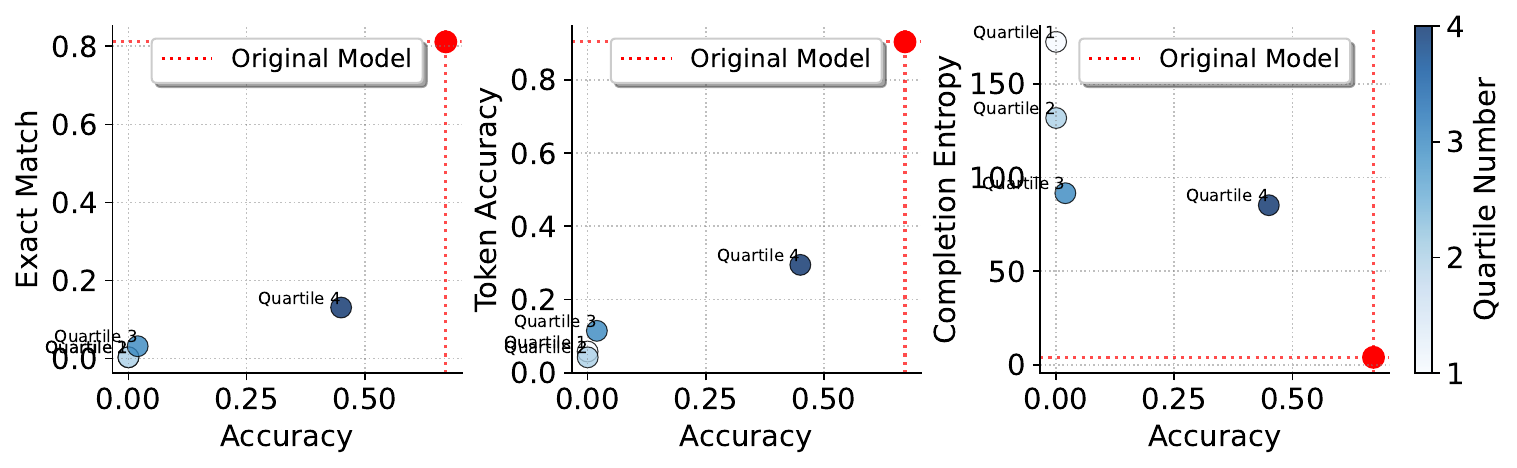}
        \caption{LAMBADA}
    \end{subfigure}
    \begin{subfigure}[b]{0.83\linewidth}
        \centering
        \includegraphics[width=\linewidth]{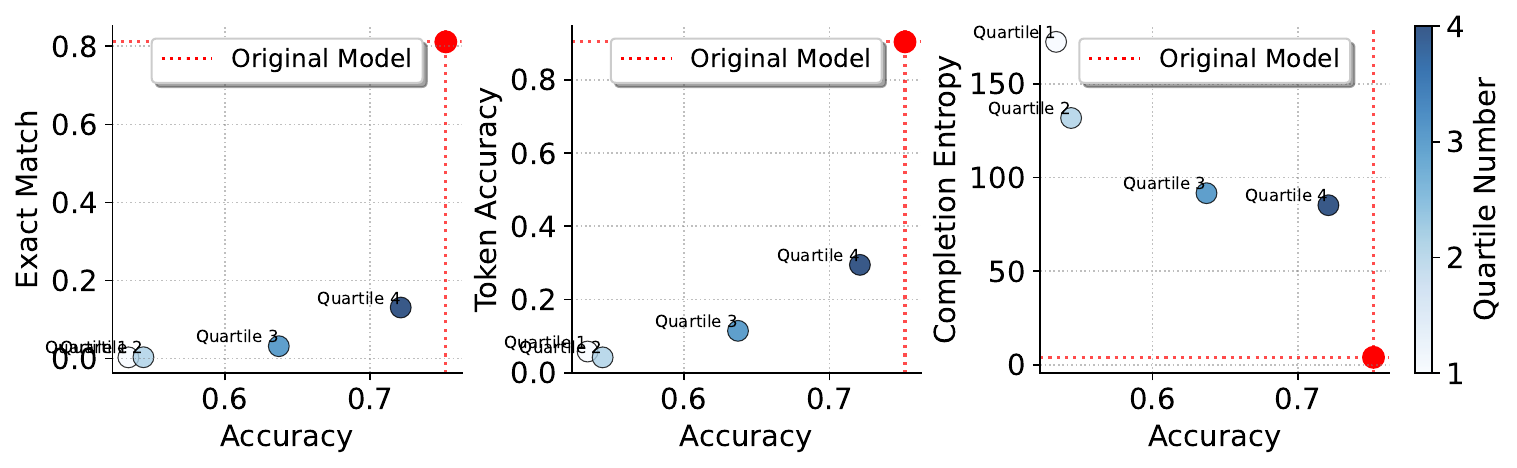}
        \caption{PIQA}
    \end{subfigure}
    \vspace{-0.05in}
    \begin{subfigure}[b]{0.83\linewidth}
        \centering
        \includegraphics[width=\linewidth]{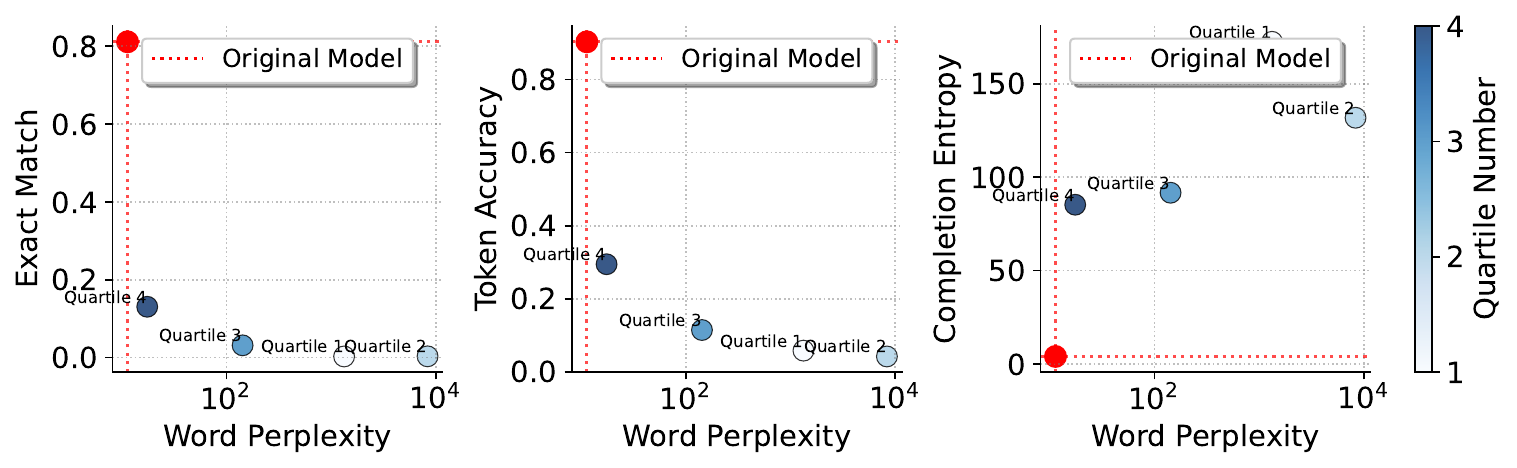}
        \caption{Wikitext}
    \end{subfigure}

    \caption{Memorization vs. Downstream Performance for Pythia-6.9B with short-circuiting applied to all attention blocks in each quartile of the model layers.}
    \label{fig:app_quartile_5}
\end{figure*}

\begin{figure*}[h!]
    \centering
    \begin{subfigure}[b]{0.83\linewidth}
        \centering
        \includegraphics[width=\linewidth]{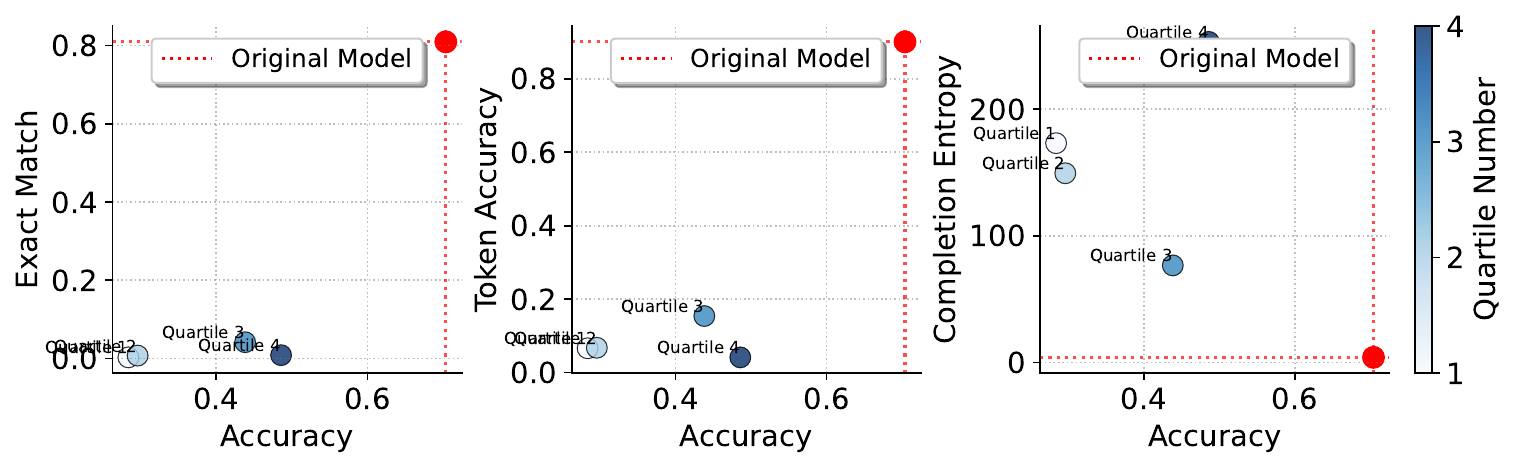}
        \caption{ARC - Easy}
    \end{subfigure}
    \vspace{-0.05in}
    \begin{subfigure}[b]{0.83\linewidth}
        \centering
        \includegraphics[width=\linewidth]{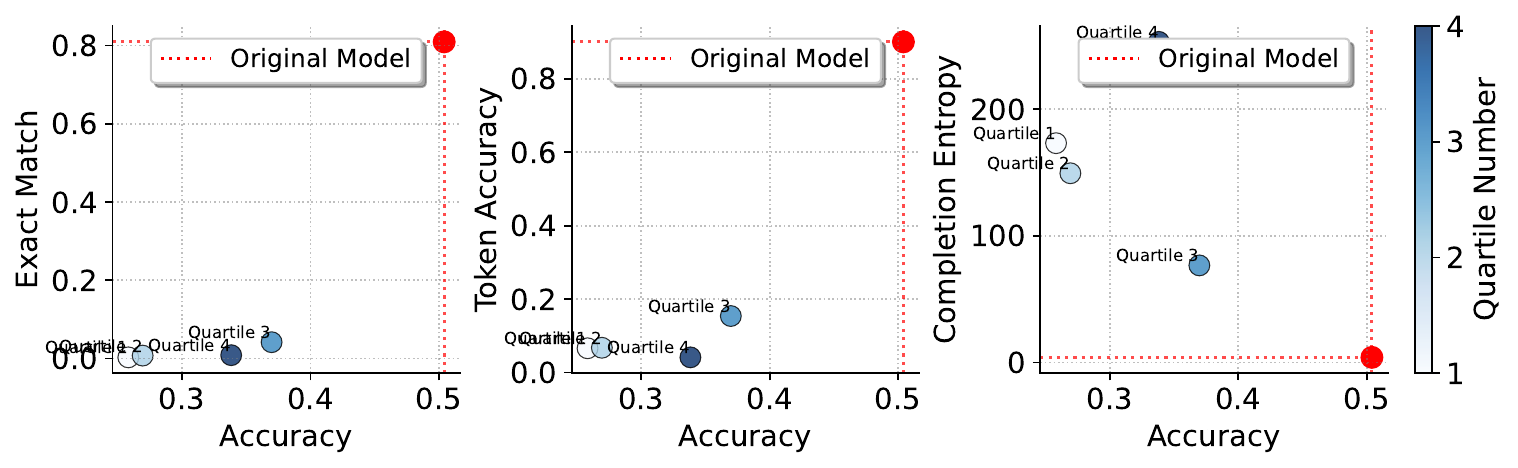}
        \caption{Hellaswag}
    \end{subfigure}
    \vspace{-0.05in}
    \begin{subfigure}[b]{0.83\linewidth}
        \centering
        \includegraphics[width=\linewidth]{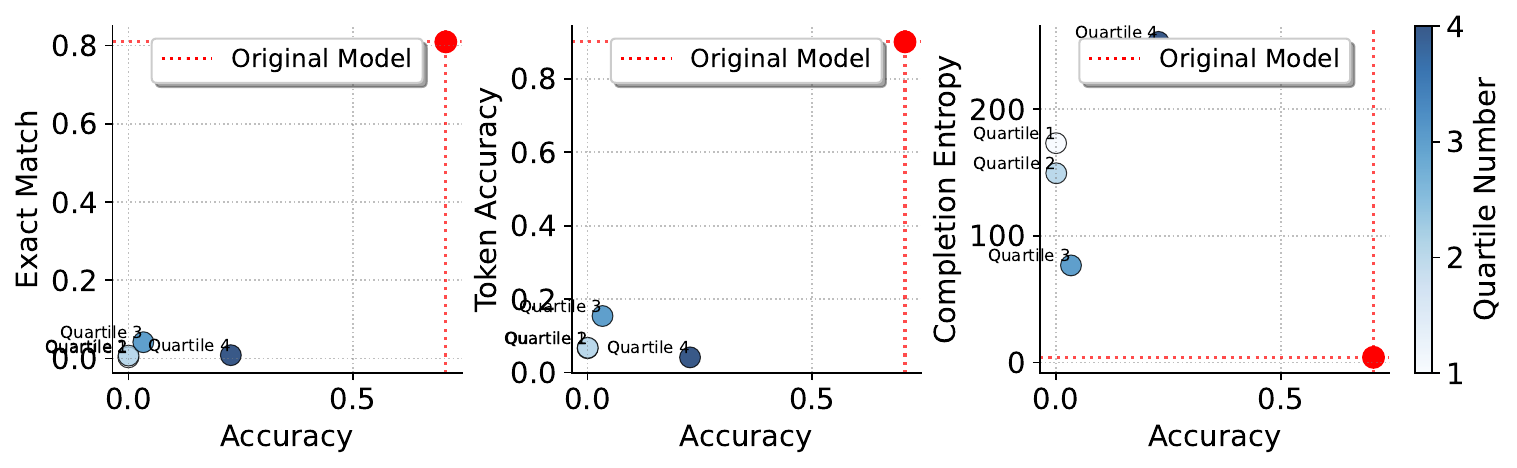}
        \caption{LAMBADA}
    \end{subfigure}
    \begin{subfigure}[b]{0.83\linewidth}
        \centering
        \includegraphics[width=\linewidth]{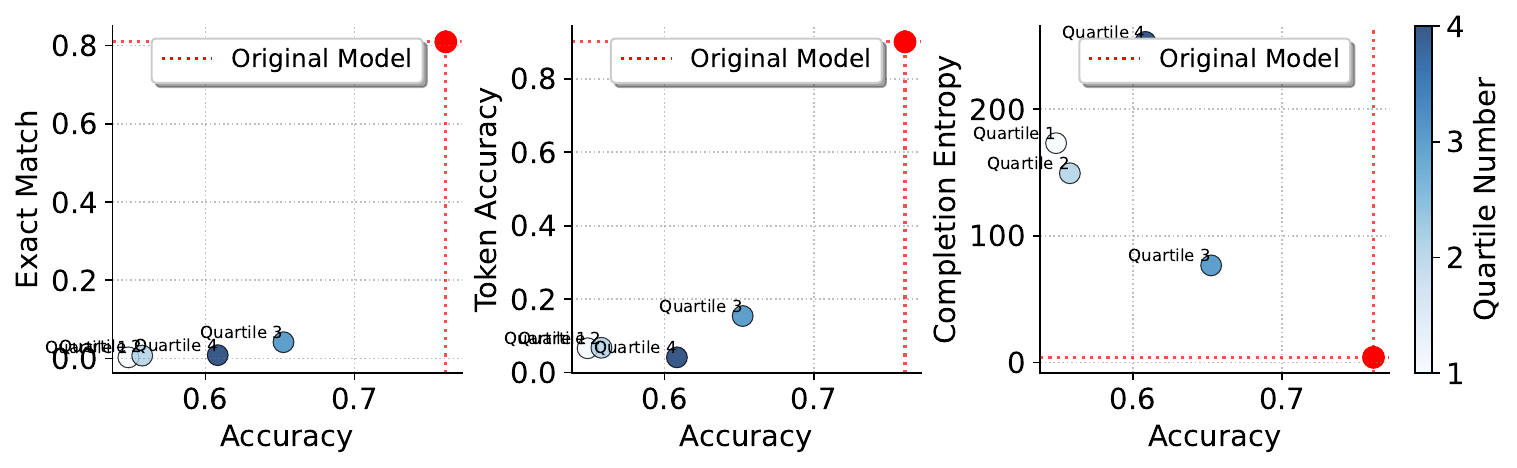}
        \caption{PIQA}
    \end{subfigure}
    \vspace{-0.05in}
    \begin{subfigure}[b]{0.83\linewidth}
        \centering
        \includegraphics[width=\linewidth]{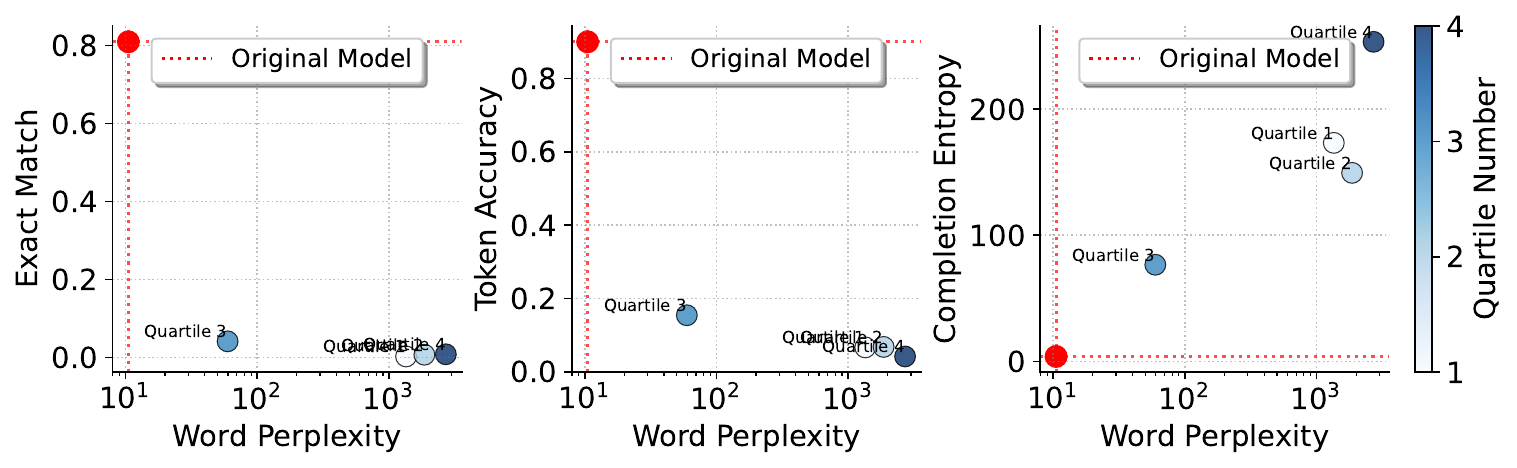}
        \caption{Wikitext}
    \end{subfigure}

    \caption{Memorization vs. Downstream Performance for Pythia-12B with short-circuiting applied to all attention blocks in each quartile of the model layers.}
    \label{fig:app_quartile_6}
\end{figure*}

\subsubsection{Short-Circuiting across Model Scale on Additional Benchmarks}
In~\cref{fig:app_all-scale-hellaswag,fig:app_all-scale-arc-easy,fig:app_all-scale-lambada,fig:all-scale-memorization} we show results of attention short-circuiting across model scales on additional benchmarks (HellaSwag, ARC-Easy, LAMBADA, and Wikitext). Our findings remain consistent across benchmarks, with later blocks of all model scales showing a drop in memorization without sacrificing downstream performance when attention short-circuiting is applied.

\begin{figure*}[h!]
    \centering
    \small
    \begin{flushleft}
        \hspace{1.2cm}{(a) Pythia-1.4B\hspace{1.8cm}{(b) Pythia-2.8B}\hspace{1.8cm}{(c) Pythia-6.9B}\hspace{1.8cm}{(d) Pythia-12B}}
    \end{flushleft} 
    \includegraphics[width=\linewidth]{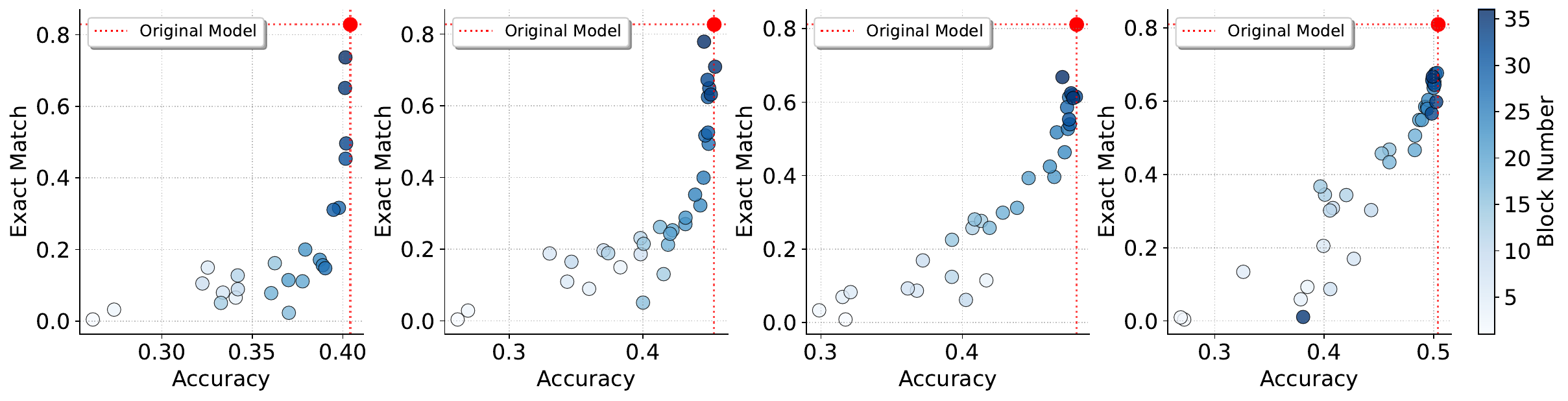}
    \caption{Results for short-circuiting the attention mechanism in each block of Pythia models across four scales - 1.4B, 2.8B, 6.9B, 12B for the HellaSwag benchmark}
    \label{fig:app_all-scale-hellaswag}
\end{figure*}

\begin{figure*}[h!]
    \centering
    \small
    \begin{flushleft}
        \hspace{1.2cm}{(a) Pythia-1.4B\hspace{1.8cm}{(b) Pythia-2.8B}\hspace{1.8cm}{(c) Pythia-6.9B}\hspace{1.8cm}{(d) Pythia-12B}}
    \end{flushleft} 
    \includegraphics[width=\linewidth]{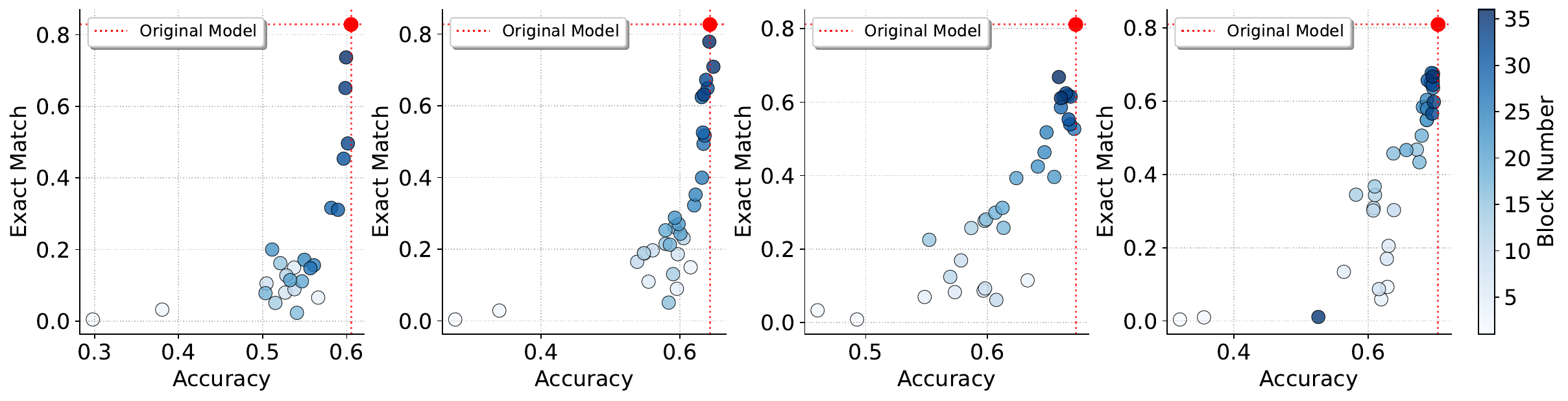}
    \caption{Results for short-circuiting the attention mechanism in each block of Pythia models across four scales - 1.4B, 2.8B, 6.9B, 12B for the ARC-Easy benchmark}
    \label{fig:app_all-scale-arc-easy}
\end{figure*}

\begin{figure*}[h!]
    \centering
    \small
    \begin{flushleft}
        \hspace{1.2cm}{(a) Pythia-1.4B\hspace{1.8cm}{(b) Pythia-2.8B}\hspace{1.8cm}{(c) Pythia-6.9B}\hspace{1.8cm}{(d) Pythia-12B}}
    \end{flushleft} 
    \includegraphics[width=\linewidth]{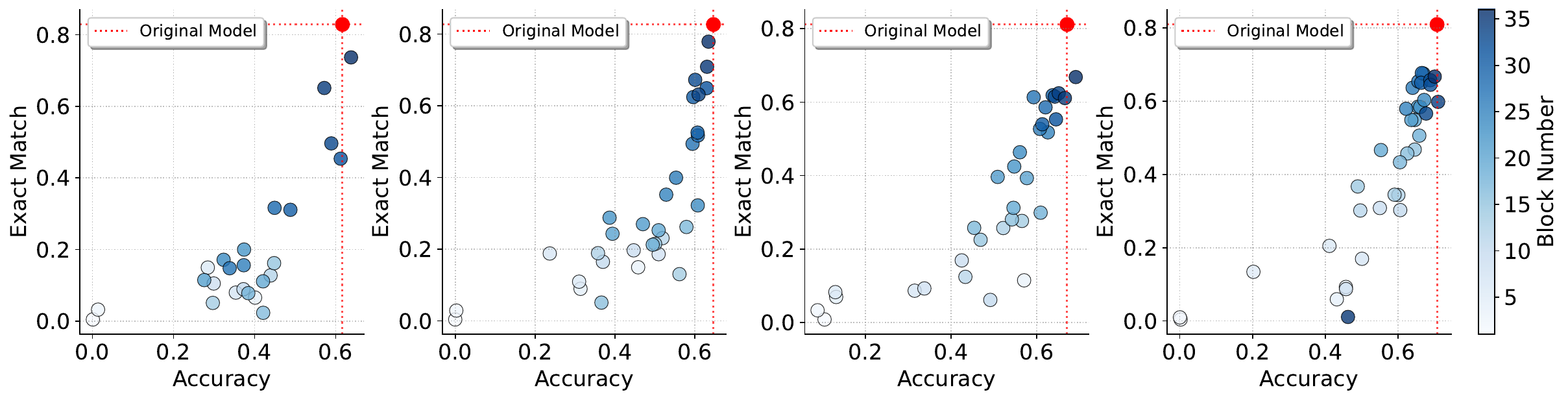}
    \caption{Results for short-circuiting the attention mechanism in each block of Pythia models across four scales - 1.4B, 2.8B, 6.9B, 12B for the LAMBADA benchmark}
    \label{fig:app_all-scale-lambada}
\end{figure*}

\begin{figure*}[h!]
    \centering
    \small
    \begin{flushleft}
        \hspace{1.2cm}{(a) Pythia-1.4B\hspace{1.8cm}{(b) Pythia-2.8B}\hspace{1.8cm}{(c) Pythia-6.9B}\hspace{1.8cm}{(d) Pythia-12B}}
    \end{flushleft} 
    \includegraphics[width=\linewidth]{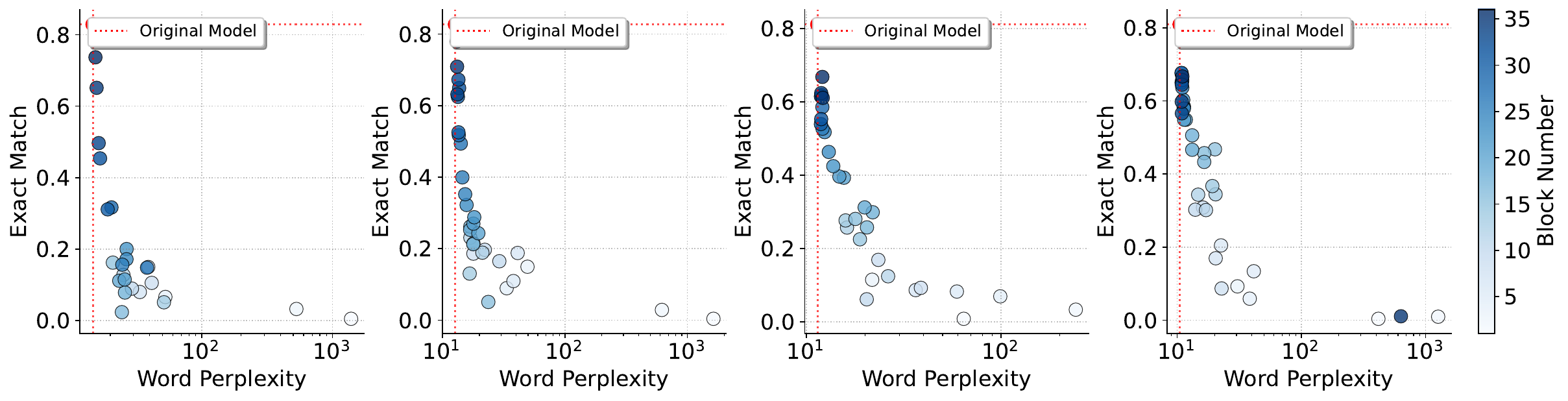}
    \caption{Results for short-circuiting the attention mechanism in each block of Pythia models across four scales - 1.4B, 2.8B, 6.9B, 12B for the Wikitext benchmark}
    \label{fig:app_all_scale-wikitext}
\end{figure*}

\subsubsection{Short-Circuiting on Reasoning vs. Non-Reasoning tasks on Additional Models}
In~\cref{fig:app_reasoning_1,fig:app_reasoning_2,fig:app_reasoning_3,fig:app_reasoning_4,fig:app_reasoning_5} we show results for additional models on the relative drop in performance on benchmarks when short-circuiting the attention mechanism. Our findings remain consistent across all model families and scales, with reasoning tasks displaying a lower relative drop in performance when short-circuiting attention in all blocks. 

\begin{figure}[t!] %
\begin{subfigure}{0.48\textwidth}
\includegraphics[width=0.9\linewidth]{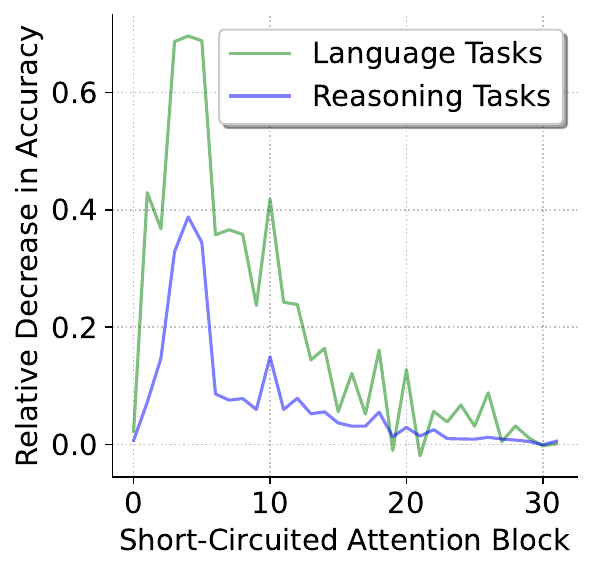}
\caption{GPTNeo-2.7B} \label{fig:app_reasoning_1}
\end{subfigure}\hspace*{\fill}
\begin{subfigure}{0.48\textwidth}
\includegraphics[width=0.9\linewidth]{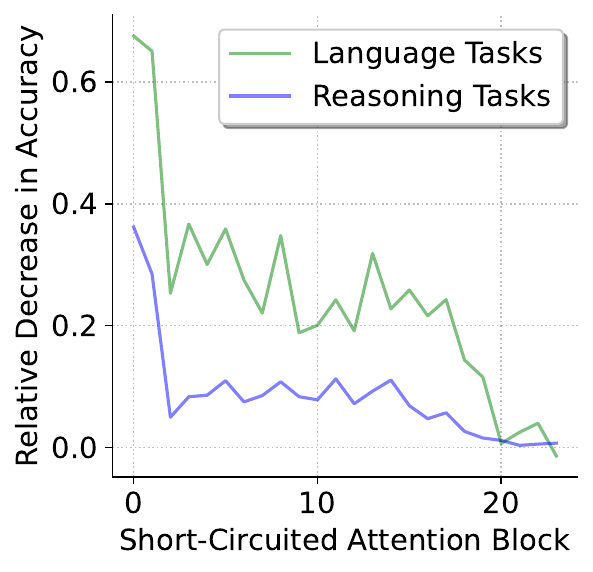}
\caption{Pythia-1.4B} \label{fig:app_reasoning_2}
\end{subfigure}

\medskip
\begin{subfigure}{0.48\textwidth}
\includegraphics[width=0.9\linewidth]{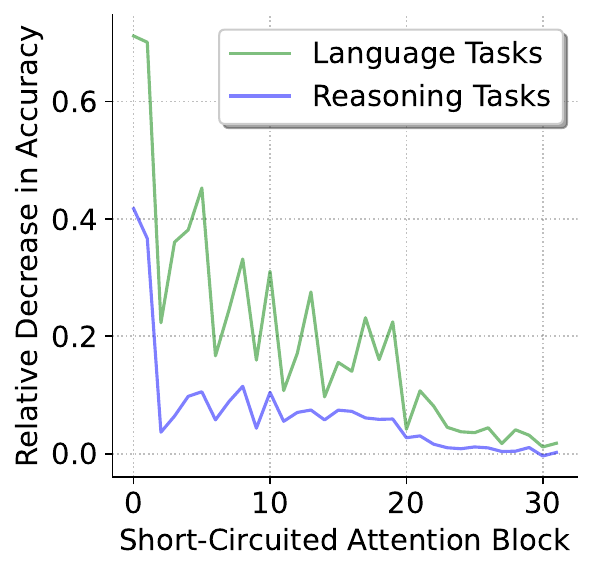}
\caption{Pythia-2.8B} \label{fig:app_reasoning_3}
\end{subfigure}\hspace*{\fill}
\begin{subfigure}{0.48\textwidth}
\includegraphics[width=0.9\linewidth]{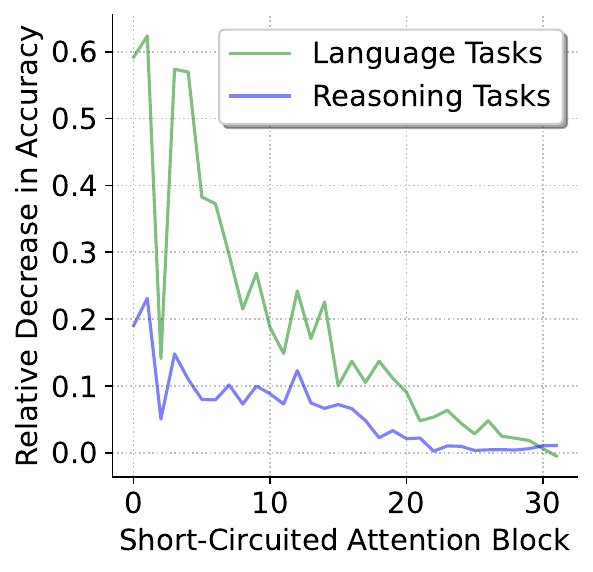}
\caption{Pythia-6.9B} \label{fig:app_reasoning_4}
\end{subfigure}

\medskip
\begin{subfigure}{0.48\textwidth}
\includegraphics[width=0.9\linewidth]{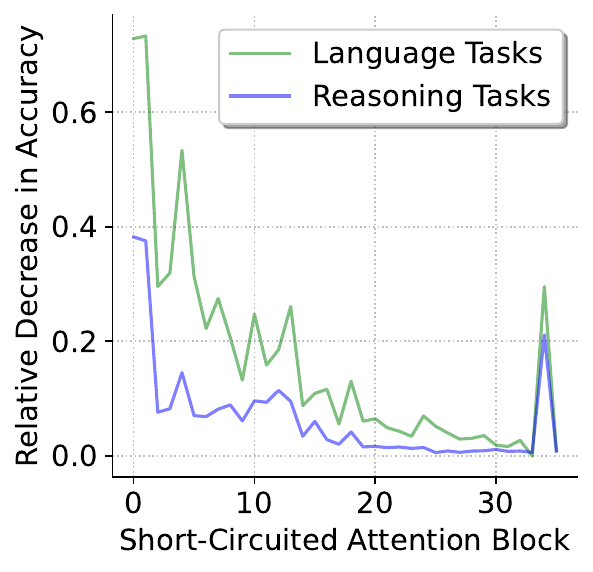}
\caption{Pythia-12B} \label{fig:app_reasoning_5}
\end{subfigure}\hspace*{\fill}

\caption{Relative decrease in benchmark performance for reasoning and language understanding tasks after short-circuiting attention mechanism in each block of (a) GPTNeo-2.7B, (b) Pythia-1.4B, (c) Pythia-2.8B, (d) Pythia-6.9B, and (e) Pythia-12B.} \label{fig:1}
\end{figure}

\label{sec:appendix}

\end{document}